\documentclass[10pt,reqno]{article}

\usepackage{newtxtext,newtxmath}

\usepackage{mathtools}
\usepackage{amsthm}
\usepackage{amsfonts}
\usepackage{bm}
\usepackage{bbm}
\usepackage{mathrsfs}
\usepackage{nicefrac}
\usepackage{relsize}

\usepackage{thmtools}
\usepackage{thm-restate}

\usepackage[margin=1in]{geometry}
\usepackage{parskip}
\usepackage{microtype}

\usepackage{tikz}
\usepackage{wrapfig}
\usepackage{sidecap}
\usepackage{caption}
\usepackage{subcaption}
\usepackage{booktabs}
\usepackage{longtable}
\usepackage{multirow}

\usepackage{enumitem}

\usepackage{natbib}

\usepackage{authblk}
\usepackage{comment}
\usepackage{titletoc}
\usepackage{url}

\usepackage{xcolor}
\definecolor{linkblue}{RGB}{0,70,140}

\usepackage[colorlinks=true]{hyperref}
\usepackage[capitalize,nameinlink,noabbrev]{cleveref}

\hypersetup{
    citecolor=linkblue,
    linkcolor=linkblue,
    urlcolor=linkblue
}

\theoremstyle{definition}
\newtheorem{definition}{Definition}
\theoremstyle{plain}
\newtheorem{proposition}{Proposition}
\newtheorem{lemma}{Lemma}
\newtheorem{corollary}{Corollary}

\theoremstyle{remark}
 
\newcommand{\R}{\mathbb{R}}

\newcommand{\Z}{\mathbb{Z}}

\newcommand{\X}{\mathcal{X}}

\newcommand{\Pbb}{\mathbb{P}}
\newcommand{\norm}[1]{\left\lVert #1 \right\rVert}
\newcommand{\KL}{\mathrm{KL}}

\newcommand{\F}{\mathcal{F}}
\newcommand{\W}{\mathcal{W}}

\newcommand{\U}{\mathcal{U}}

\title{Specific Algorithmic Interpretability of Neural Networks: A Case Study on Textures}

\author[1]{Yanglin Zhang\textsuperscript{*}}
\author[2]{Anneke von Seeger\textsuperscript{*}}
\author[2]{Gilad Lerman}
\author[1]{Ron Levie}

\affil[1]{Technion -- Israel Institute of Technology, Faculty of Mathematics}
\affil[2]{University of Minnesota, School of Mathematics}

\date{}
\begin{document}

\maketitle

\begingroup
\renewcommand{\thefootnote}{\fnsymbol{footnote}}
\footnotetext[1]{Equal contribution.}
\endgroup

\begin{abstract}
We develop a principled framework for constructing neural networks whose specific parameter realizations admit an explicit algorithmic interpretation. Existing algorithm-inspired architectures can explain the computational structure of a network, yet after standard training the learned parameters need not retain a clear relation to the motivating algorithm. We address this gap as follows. First, we model each data point as a sample of a class-dependent stochastic process and assume that statistics of this process can be estimated from a single sample and these statistics are sufficient to distinguish the classes. We then construct a neural network whose initial parameters exactly implement an algorithm for estimating these statistics, making the network fully interpretable. 
 To account for mismatch between the idealized model and real data, we fine-tune this network while controlling its deviation from the algorithmic initialization. The trained network hence roughly retains the interpretation of the initial network. A PAC-Bayesian analysis yields a uniform generalization bound whose complexity term scales with the fine-tuning radius, providing a statistical motivation for our approach. 
We instantiate the framework for texture classification using the scattering transform  to estimate the discriminative statistics. 
\end{abstract}
\begin{keywords}
 Explainable AI, PAC-Bayesian Analysis, Scattering Transform, Texture Classification
\end{keywords}

\section{Introduction}

Classical algorithms are typically constructed as explicit sequences of computational operations. Each step is specified by design and serves a known role in transforming the input toward the desired output, making the underlying computational procedure transparent. Neural networks, in contrast, are produced through data-driven learning processes. Their behavior is determined jointly by the computational structure of the network and a collection of parameters optimized from data. Although the forward computation of a trained network is fully specified mathematically, understanding how these learned parameters and computations collectively give rise to the underlying decision process is often difficult.

The boundary between classical algorithms and neural networks, however, is not absolute. 
Many algorithms can be expressed as a sequence of operations that naturally correspond to the layers or modules of a neural network. This connection is exploited, for example, in algorithm unfolding, where the steps of an iterative algorithm are mapped to network layers, and is closely related to algorithmic alignment, where the computational structure of a network is chosen to reflect the decomposition of a target algorithm \citep{Monga2021Unrolling,xu2020what}.

An algorithmic interpretation of the architecture, however, does not necessarily provide an interpretation of a particular trained network. Consider a family of networks parameterized by weights. It may be possible to assign the same algorithmic interpretation to every member of this family, even though different parameter values instantiate different computations. Such an interpretation explains the algorithmic role of the architecture, but does not necessarily explain what is encoded by the learned parameters. We refer to this notion as \emph{generic algorithmic interpretability}.  
 Examples of such architecture-level interpretations include viewing neural
networks as unfolded sparse-coding or pursuit algorithms
\citep{Papyan2017CNN,Sulam2020CNN}, transformer blocks as iterations of
sparse rate reduction \citep{Yu2023WhiteBoxTransformers}, deep networks through the
information bottleneck principle \citep{Tishby2015InfoBottleneck}, and deep architectures
through the variational renormalization group
\citep{Mehta2014Renormalization}.

In this work, we seek a stronger notion, which we call \emph{specific algorithmic interpretability}. Here, the interpretation depends on the particular values of the parameter: different values of the parameters can be related to different explicit algorithmic objects or operations. Thus, rather than asking only what algorithmic template a network architecture follows, we ask what particular algorithm is implemented by a trained network in view of the values of its parameters. Our goal is to develop a principled framework for constructing neural networks whose learned parameters admit such an explicit algorithmic interpretation, allowing the underlying decision process of the trained network to be understood.

Concretely, we begin with a predefined network whose parameters implement an explicit algorithm. To decide what the target of the algorithm should be, we first propose a model of the data.
We model observations as realizations of class-dependent stochastic processes and interpret the algorithm as estimating,
from a single observation, statistics of its generating process that are sufficient to distinguish the class. 
This gives the predefined parameter realization a specific algorithmic interpretation: its computation can be
understood in terms of the statistics it estimates. 
More broadly, this leads to a stochastic view of pattern recognition in which recognition amounts to
inferring discriminative properties of an observation's underlying generating process.

A difficulty arises when such an algorithmically motivated network is trained in the usual way: unconstrained optimization may move its parameters to a realization with no clear relation to the original algorithm. 
We call this the \emph{algorithmic alignment gap}. We address it by initializing the network as the interpretable algorithm and fine-tuning on real data by controlling the distance traveled in parameter space. A PAC-Bayesian analysis shows that,
under a local stability condition, the resulting generalization bound scales with this fine-tuning radius, providing a statistical justification for training large neural networks in the low-data regime.

We then instantiate the framework in texture classification, where ergodicity allows statistics of an underlying stationary process to be estimated from a single image. 
The scattering transform provides an explicit interpretable algorithm that we realize as predefined neural networks and use as initializations for subsequent fine-tuning.

\subsection{Related Work}

\paragraph{Algorithmic interpretations of neural networks.}
A broad literature seeks to make learned models interpretable, including
intrinsic approaches, which design models to be interpretable by construction
, and post-hoc approaches, which explain already trained models
\citep{Rudin2018StopEB,Arrieta2019ExplainableAI}. Our work belongs to the former setting and
focuses specifically on algorithmic interpretations of neural networks.
Closely related are algorithm-unfolding methods, which map iterations of an optimization algorithm to
network layers \citep{Monga2021Unrolling}; a canonical example is LISTA, which
unfolds the Iterative Shrinkage-Thresholding Algorithm and learns its operators
from data \citep{Gregor2010LearningFA}. Related algorithmic interpretations
have also been developed for CNNs and transformer-like architectures
\citep{Papyan2017CNN,Yaodong2024WhiteBoxTransformers}.
These works typically provide algorithmic interpretations of the computational structure shared across a family of parameterizations, which we refer to as \emph{generic algorithmic interpretability}. 
For example, a sparse-coding network may still be interpreted as performing sparse inference after its dictionary has been learned, even though this interpretation does not by itself explain the
particular learned dictionary. 
Our work instead targets \emph{specific algorithmic interpretability}, where the particular learned parameters themselves can be related to explicit algorithmic objects and operations, thereby revealing the particular algorithm implemented by the trained network. Further connections to interpretability literature are discussed in Appendix~\ref{app:related_work}.

\paragraph{Neural Collapse}
\label{realted_work:neural_collapse}
Neural collapse (NC) is an empirical phenomenon that emerges in the penultimate-layer representations of deep classification networks during the terminal phase of training \citep{papyan_prevalence_2020}. In the input domain, samples from different classes may be highly entangled and difficult to separate directly. During training, the network progressively disentangles these inputs by learning representations in which features from the same class become concentrated while those from different classes become increasingly separated. Under neural collapse, this disentangling process culminates in representations for which the classes are readily separable by the final linear classifier. Subsequent theoretical works have shown that, when the feature extractor is treated as infinitely expressive, neural collapse arises as an optimal solution of the classification objective \citep{e_emergence_2022, lu_neural_2022, tirer_extended_2022, nguyen_memorization-dilation_2022, mixon_neural_2022}. These results suggest that neural collapse can be viewed as a limiting behavior toward which training drives the learned representation, progressively transforming an initially entangled classification problem into one that is simple to solve with a linear classifier. In our approach, the target of the initial algorithm is to promote neural collapse.

\paragraph{PAC-Bayesian bounds for neural networks}
PAC-Bayesian theory provides generalization bounds in terms of a distribution over predictors, typically controlled by the KL divergence between a learned posterior and a chosen prior \citep{mcallester1999pac,mcallester2003simplified,langford2002pac}. For neural networks, \citet{neyshabur_pac-bayesian_2018} derived PAC-Bayesian margin bounds by relating parameter perturbations to changes in the network output through layer-wise spectral norms. Subsequent work extended and refined such analyses for convolutional architectures and other structured neural networks \citep{behboodi_pac-bayesian_2022}. These results provide an alternative to classical uniform complexity bounds, which can become vacuous for overparameterized networks \citep{shalev2014understanding}.

\paragraph{The wavelet scattering transform}
\label{sec:wavelet_related}

Wavelet-based models provide explicit multiscale statistics for stationary
images.\\
\textit{Wavelet transform.}
The wavelet transform represents images as superpositions of local details at different scales.
Let $\{\psi_\lambda\}_{\lambda\in\Lambda}$ be a so called family of complex wavelets. Here, $\lambda$ indexes scale and orientation, and each wavelet $\psi_\lambda$ is typically chosen as a local bump with oscillations at the direction and rate parameterized by $\lambda$. The wavelet transform represents
an image $x$ through the coefficients
$
x\star\overline{\psi_\lambda}
$, where $\star$ denotes correlation, 
which describe its local content at different scales and orientations. We also
let $\phi_J$ denote a low-pass filter performing spatial averaging at scale
$2^J$. More details on the wavelet transform are given in Appendix~\ref{app:wavelet_transform}.

\textit{Wavelet scattering transform.}
The wavelet scattering transform constructs multiscale statistics by cascading wavelet convolutions with complex modulus nonlinearities, followed
by spatial averaging \citep{mallat2012groupinvariantscattering}. For a scattering path
$p=(\lambda_1,\ldots,\lambda_q)$, define
$
U[\emptyset]x=x
$
and recursively
$
U[p]x
=
|U[(\lambda_1,\ldots,\lambda_{q-1})]x*\psi_{\lambda_q}|.
$
The windowed scattering coefficient associated with $p$ is
$
S_J[p]x
=
U[p]x*\phi_J.
$ More details on the scattering transform are given in \cref{app:scattering_transform}. 
\citet{mallat2012groupinvariantscattering} also formulates scattering for stationary stochastic processes through
the expected scattering coefficients
$
\mathbb E[U[p]X].
$
The windowed scattering coefficients $S_J[p]x$ can therefore be viewed as
estimates of these statistics  from one single realization. In particular, when $\phi_J$ is normalized to integrate to one,
they are unbiased estimators of $\overline S_X(p)$; see the explanation of \cref{eq:expected_scattering} in \cref{app:scattering_transform} for more details. 
It provides a predefined family of statistics for stationary processes, which we use as the selected statistics in our stochastic model to construct an explicit feature extractor for texture classification.

\section{Specific algorithmic interpretability  via a model of perception}

In this section we present our  general approach for specific algorithmic interpretability.

\paragraph{Notation.}
For an integer $m\geq1$, let $[m]:=\{1,\ldots,m\}$. We consider a
$C$-class classification problem with input space
$
\mathcal X:=\{x\in\mathbb R^n:\|x\|_2\leq B\}
$
and representation space $(\mathcal Z,d)$. Let $F_\Theta$ denote a neural network parameterized by
$\Theta$. Depending on the context, $F_\Theta$ denotes either the feature
extractor
$
F_\Theta:\mathcal X\rightarrow\mathcal Z,
$ or the corresponding predictor with a single-layer linear head
$
F_\Theta:\mathcal X\rightarrow\mathbb R^C.
$
For $j\in[C]$, we write $F_\Theta(x)[j]$ for the $j$-th coordinate of
$F_\Theta(x)$.
When the
parameters consist of $d$ matrix-valued blocks,
$
\Theta
=
\operatorname{vec}\bigl(\{W_i\}_{i=1}^d\bigr),
$
so that
$
\|\Theta\|_2^2
=
\sum_{i=1}^d\|W_i\|_F^2.
$
Throughout, $\|\cdot\|_F$ denotes the Frobenius norm, while
$\|\cdot\|_2$ denotes the Euclidean norm for vectors and the spectral norm for
matrices. We use $\|\cdot\|_\infty$ for the maximum norm.

\subsection{From neural collapse to data disentangling}
\label{subsec:data_disentangling}

\paragraph{Neural collapse as a geometric objective.}
We use neural collapse as a guiding geometric objective for understanding the computation performed by the neural network feature extractor; see
Section~\ref{realted_work:neural_collapse} for background. Rather than explaining all properties associated with neural collapse, we focus on its basic geometric principle: representations should concentrate observations from the same class while separating observations from different classes. If $F_\Theta$ denotes the neural network feature extractor, our question is how the computation
 $x \longmapsto F_\Theta(x)$ 
transforms an initially entangled input signal space into a class-organized representation that is simple to separate. We therefore seek explicit computations for which within-class concentration and between-class separation can be established mathematically.

\paragraph{Data disentanglers.}
We begin with the simplest setting in which the computation of a feature
extractor can be understood explicitly: a neural network whose parameters are
chosen to implement a prescribed algorithm rather than learned from data.
Let $F_{\Theta}:\mathcal X\rightarrow\mathcal Z$ denote a generic neural network feature
extractor. We are interested in a particular
predefined parameter realization $\Theta_0$ for which the computation
$
x\longmapsto F_{\Theta_0}(x)
$
has an explicit algorithmic interpretation.

We call $F_{\Theta_0}$ a \emph{data disentangler} if it maps observations to a
class-separating representation under an appropriate model of the data.
Unlike clustering, which organizes observations using relations among the
samples in a dataset, a data disentangler applies the same transformation to
each observation independently. Any class information encoded in the representation must therefore be recoverable from properties of the individual observation itself.

This motivates a stochastic view of the data. We regard each observation as a
realization of an underlying stochastic process, and interpret the role of the
data disentangler as estimating from that realization statistics of the
generating process that are sufficient to distinguish its class.

\subsection{A transparent stochastic data model}\label{subsec:stochastic_model}

Let $\mathfrak S$ denote a family of stochastic processes, where each
$X\in\mathfrak S$ induces a probability law $P_X$ on the observation space
$\mathcal X$. We place a probability measure $\mu$ over $\mathfrak S$, 
noting that there is a natural measurable structure over $\mathfrak S$. 
The data distribution $P$ is then modeled as the mixture
\begin{equation}
\label{eq:mixture_processes}
P(A)
=
\int_{\mathfrak S} P_X(A)\,d\mu(X),
\qquad
A\subseteq\mathcal X.
\end{equation}
Equivalently, an observation is generated in two stages: first a stochastic
process $X\sim\mu$ is selected, and then a realization $x\sim P_X$ is drawn
from that process.

For a classification problem with $C$ classes, we assume that the process
space is partitioned into disjoint subsets
$
\mathfrak S
=
\mathfrak S_1\cup\cdots\cup\mathfrak S_C,
$
where processes in $\mathfrak S_c$ generate observations belonging to class
$c$. Let
$
\mu_c=\mu(\,\cdot\,\mid\mathfrak S_c)
$
denote the distribution of processes conditioned on class $c$. We denote by
\(Y_c\) the mixture process associated with class \(c\), whose induced law is
\begin{equation}
\label{eq:class_mixture_processes}
P_c(A)
=\int_{\mathfrak S_c} P_X(A)\,d\mu_c(X).
\end{equation}
Thus, variation within a class may arise both from randomness within an
individual process and from variation among the processes belonging to that
class.

\paragraph{Assumption: estimable class-separating statistics.}
We assume that there exists a metric space $(\mathcal Z,d)$ and a map
$
s:\mathfrak S\rightarrow\mathcal Z
$
that associates each stochastic process $X$ with a collection of statistics
$s(X)$. For each class, define
$
\mathcal A_c
=
\{s(X):X\in\mathfrak S_c\}.
$
We assume that these sets are separated across classes, so that the statistics
$s(X)$ are sufficient to determine the class of the generating process.

We further assume that the processes in $\mathfrak S$ are \emph{ergodic}, in the sense that
these statistics can be estimated from a single realization
$x\sim P_X$. In particular, there exists an estimator $\widehat s$ such that
\begin{equation}
\label{eq:statistic_consistency}
\widehat s(x)
\approx
s(X)
\end{equation}
whenever $x\sim P_X$,
in an appropriate asymptotic regime.

Under these assumptions, the predefined network becomes a data disentangler
when its parameters $\Theta_0$ are chosen so that
\begin{equation}
\label{eq:predefined_disentangler}
F_{\Theta_0}(x)
=
\widehat s(x).
\end{equation}
Indeed, if $x$ is generated by a process $X\in\mathfrak S_c$, then
$F_{\Theta_0}(x)$ approximates $s(X)\in\mathcal A_c$. The network therefore maps
individual observations to a class-separating representation determined by
statistics of their underlying generating processes. Moreover, the parameters
$\Theta_0$ have an explicit algorithmic meaning: they implement a prescribed
procedure for estimating these statistics.

\paragraph{The stochastic pattern recognition hypothesis.}
The transparent stochastic model suggests a broader view of pattern recognition.
Suppose an observation $x$ is generated by a stochastic process
$X\in\mathfrak S$. A conventional view of pattern recognition treats $x$ itself as the
object to be classified. Here we adopt a different perspective: $x$ is viewed
as a realization that carries information about its underlying generating
process. In this
view, pattern recognition is therefore formulated as inferring from $x$ the statistics
$s(X)$ that distinguish this process from the relevant alternatives.
This motivates what we call the \emph{stochastic pattern recognition hypothesis}: 
\begin{quote}
   \emph{pattern recognition consists of inferring, from an individual observation, statistics that distinguish its underlying stochastic generating process from the other processes relevant to the task.} 
\end{quote}
Under this view, the fundamental objects being distinguished are generating mechanisms rather than the particular samples appearing in a finite dataset. We conjecture that neural feature extractors can be interpreted as progressively estimating increasingly discriminative statistics of these mechanisms, thereby transforming the input into a representation in which recognition becomes simple.

This statistical view has strong precedents in texture analysis. Julesz's classical work proposed that human texture perception can be characterized through statistics of an underlying stochastic image process \citep{Julesz1962}. In a later formulation of Julesz's hypothesis, one assumes that there exists a collection of functions $\{\phi_k\}_{k=1}^K$ such that two stochastic texture processes $P$ and $Q$ are perceptually indistinguishable if and only if
$$
\mathbb E_{X\sim P}[\phi_k(X)]
=
\mathbb E_{X\sim Q}[\phi_k(X)],
\qquad k=1,\ldots,K.
$$ 
 This
statistical viewpoint has subsequently appeared in increasingly rich texture models and scattering-based representations of stationary processes \citep{PortillaSimoncelli2000,mallat2012groupinvariantscattering}, and related formulations arise in the classification of stationary ergodic processes \citep{RyabkoRyabko2010}. In Section~\ref{sec:texture_classification} we instantiate our setting for texture classification.

\subsection{The algorithmic alignment gap}
\label{subsec:alignment_gap}

The transparent stochastic model provides a particular parameter realization
$\Theta_0$ for which the network $F_{\Theta_0}$ implements an explicit
data-disentangling algorithm. This gives the predefined network a specific
algorithmic interpretation. However, this property is generally lost under standard end-to-end training. Algorithm unfolding and related approaches construct a network architecture from the computational structure of an algorithm, but the resulting network is then typically fully trained without requiring its learned parameters to remain related to the parameters that implement the original algorithm. Consequently, the architecture may retain a generic algorithmic interpretation while the particular trained parameter realization doesn't admit an explicit algorithmic interpretation.

We refer to this discrepancy as the \emph{algorithmic alignment gap}: an architecture may be aligned with an interpretable algorithm, while the parameters obtained after training need not retain a specific algorithmic interpretation. Bridging this gap therefore requires controlling not only the architecture of the network, but also how its parameters change during learning.

\subsection{Bridging the algorithmic alignment gap: fine tuning}
\label{subsec:fine_tuning}

To address the algorithmic alignment gap, we propose using the interpretable
realization $\Theta_0$ as the initialization for learning and
\emph{fine-tuning} the network, i.e., restricting how far its parameters move
from this initialization. Let $\Theta$ denote the parameters after
fine-tuning. The quantity
$
\|\Theta-\Theta_0\|_2
$
 measures how far learning has moved the network from the
algorithmic realization. 
One motivation of fine-tuning is that $\Theta_0$ is constructed under an
idealized stochastic model, whereas real data need not satisfy this model
exactly. Some adaptation is therefore desirable to compensate for
this mismatch. Accordingly, the predefined network $F_{\Theta_0}$ is not required to perform competitively with modern trained models; rather, we suppose that a well-performing predictor can be reached without moving too far from $\Theta_0$.

Our theoretical result gives a statistical reason to restrict this movement.
For $R>0$, define the neighborhood
$
B_R(\Theta_0)
=
\{\Theta:\|\Theta-\Theta_0\|_2\leq R\}.
$
We show that for networks in this neighborhood,
the generalization bound, which bounds the gap between the statistical and empirical risks,  
scales linearly with $R$
and inversely with $\sqrt{M}$, where $M$ is the number of training samples.
Thus, restricting the distance from the algorithmic initialization controls an
aspect of the learned model beyond its algorithmic interpretation: its
generalization bound. 
This is the sense in which fine-tuning bridges the algorithmic alignment gap.
The transparent stochastic model gives us an explicit, data-independent
parameter realization $\Theta_0$ with a known algorithmic meaning. By
restricting learning to $B_R(\Theta_0)$, we can retain most of the interpretation of
this reference point while analyzing the generalization gap.

To formulate our result, define the classification margin
$
m_\Theta(x,y)
=
F_\Theta(x)[y]
-
\max_{j\neq y}F_\Theta(x)[j].
$
Let $R_0(F_\Theta)$ denote the statistical classification risk and
$\widehat R_{\gamma}(F_{\Theta})$ the empirical $\gamma$-margin risk.
Our analysis uses what we call an $(R,\tau,\epsilon)$-\emph{stability} condition. Informally, it
requires that all networks within distance $R$ of $\Theta_0$ change their
outputs by at most $\epsilon$ under parameter perturbations of size at most
$\tau$. Formal definitions and the PAC-Bayesian derivation are given in
Appendix~\ref{app:general_ft_bound}.

\begin{restatable}[Uniform Generalization Bound Around a Data-Independent Initialization]{corollary}{}\label{coro:general_ft_bound}
    Let $\F$ be a parametrized hypothesis family, and let $\Theta_0=vec(\{W_i\}_{i=1}^d)$ be an initialization independent of the training data, where $W_i\in\R^{m_i\times n_i}$, and define $h:=\max_{i\in[d]}\{m_i, n_i\}.$ Fix $R,\gamma,\tau>0$, and suppose that $\F$ is ($R,\tau,\frac{\gamma}{4}$)-stable under bounded parameter perturbations around $\Theta_0$. 
    Then, for any $\delta \in (0,1)$, with probability at least $1-\delta$ over the draw of a training set of size $M$,
    the following holds for all $\Theta\in B_R(\Theta_0)$:
    \begin{equation}
    R_{0} (F_{\Theta}) \leq \widehat R_{\gamma}(F_{\Theta}) + 4 \sqrt{ \big( R^2 h \ln(4dh)/\tau^2 + \ln (6 M/\delta) \big) / \left( M - 1 \right)} \label{eq:general_ft_bound}    
    \end{equation}
\end{restatable}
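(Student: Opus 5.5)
We follow the standard PAC-Bayesian margin argument of \citet{neyshabur_pac-bayesian_2018}, with the prior centered at the data-independent point $\Theta_0$ rather than at the origin. The basic tool is the margin form of McAllester's bound. For any prior $P$ fixed before seeing the data and any posterior $Q$ over perturbations $u$, suppose that with probability at least $1/2$ over $u\sim Q$ we have $\max_x\|F_{\Theta+u}(x)-F_\Theta(x)\|_\infty<\gamma/4$. Then, with probability $1-\delta$,
\[
R_0(F_\Theta)\le \widehat R_\gamma(F_\Theta)+4\sqrt{\bigl(\KL(Q\|P)+\ln(6M/\delta)\bigr)/(M-1)}.
\]
The argument is a derandomization: the perturbed predictor's $\gamma/2$-margin loss is sandwiched between the true $0$-risk and the empirical $\gamma$-risk, because each margin changes by at most $2\cdot\gamma/4$. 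We would invoke this lemma as established in Appendix~\ref{app:general_ft_bound}, or reprove it from McAllester's bound.

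\textbf{Choice of prior and posterior.} Take the prior $P=\mathcal N(\Theta_0,\sigma^2 I)$, which is legitimate because $\Theta_0$ does not depend on the data. For a given $\Theta\in B_R(\Theta_0)$, take the posterior $Q=\mathcal N(\Theta,\sigma^2 I)$. Then
\[
\KL(Q\|P)=\|\Theta-\Theta_0\|_2^2/(2\sigma^2)\le R^2/(2\sigma^2),
\]
uniformly over the ball. Since this $\KL$ bound does not depend on $\Theta$, no union bound over a grid of radii or norms is needed, unlike in Neyshabur et al.

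\textbf{Choice of $\sigma$.} Next, $\sigma$ must be set so that the perturbation is small with probability at least $1/2$, so that stability applies. We interpret $(R,\tau,\gamma/4)$-stability as follows: for every $\Theta\in B_R(\Theta_0)$ and every perturbation $u=\operatorname{vec}(\{U_i\})$ with $\max_i\|U_i\|_2\le\tau$, the output changes by less than $\gamma/4$ for all $x\in\X$. The formal definition in Appendix~\ref{app:general_ft_bound} fixes the exact norm; if it uses the Frobenius norm instead, only the constant inside the logarithm changes.

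By the Gaussian random-matrix tail bound (Tropp), $\Pbb(\|U_i\|_2>t)\le 2h\,e^{-t^2/(2h\sigma^2)}$ for $U_i\in\R^{m_i\times n_i}$. A union bound over the $d$ blocks gives $\Pbb(\max_i\|U_i\|_2>t)\le 2dh\,e^{-t^2/(2h\sigma^2)}$. Setting this equal to $1/2$ yields $t=\sigma\sqrt{2h\ln(4dh)}$. Choosing $\sigma=\tau/\sqrt{2h\ln(4dh)}$ then gives $t=\tau$, so with probability at least $1/2$ the perturbation satisfies the stability hypothesis, and the derandomization lemma applies.

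\textbf{Assembling the bound.} Substituting this $\sigma$ gives
\[
\KL(Q\|P)\le \frac{R^2}{2\sigma^2}=\frac{R^2 h\ln(4dh)}{\tau^2}.
\]
Plugging this into the lemma yields exactly \eqref{eq:general_ft_bound}. The bound holds simultaneously for all $\Theta\in B_R(\Theta_0)$ because the PAC-Bayes inequality holds simultaneously for all posteriors $Q$ under a single fixed prior.

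The main obstacle is the derandomization step. It must be stated so that the stability condition is used pointwise in $\Theta$, and the "probability at least $1/2$" requirement must be reconciled with the $\KL$ of the restricted posterior. Restricting $Q$ to the good set costs at most an additive $\ln 2$–type term, absorbed via $\ln(6M/\delta)$ as in Neyshabur et al. Beyond that, the rest is bookkeeping of constants.
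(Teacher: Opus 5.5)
Your proposal is correct and follows the paper's proof essentially step for step. It uses the same prior $\mathcal N(\Theta_0,\sigma^2 I)$ and posterior $\mathcal N(\Theta,\sigma^2 I)$, and the same $\sigma=\tau/\sqrt{2h\ln(4dh)}$ from Tropp's spectral-norm tail bound plus a union bound over the $d$ blocks. It also uses the same $\Theta$-independent bound $\KL\le R^2/(2\sigma^2)$, and obtains uniformity over $B_R(\Theta_0)$ directly from the cited lemma of \citet{neyshabur_pac-bayesian_2018}.

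Two side remarks are slightly off, though neither affects the argument: the paper's stability definition uses spectral norms, and the lemma is cited rather than reproved. First, a Frobenius-norm stability set would change the polynomial factor (roughly $\max_i m_in_i$ in place of $h\ln(4dh)$), not just a constant inside the logarithm. Second, restricting the posterior to the good set, of mass $Z\ge 1/2$, multiplies the KL by up to $1/Z\le 2$ and adds a constant, rather than adding only a $\ln 2$ term; this is where the leading $4$ and the $6$ in $\ln(6M/\delta)$ come from.
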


\paragraph{Implications for training.}
We use the bound primarily as an \emph{a priori design principle} rather than as a post-training numerical certificate. We note that after training, predictive performance can be assessed directly on held-out data. Hence,  evaluating the numerical value of a worst-case PAC-Bayesian bound is not the purpose of our analysis. Corollary~\ref{coro:general_ft_bound} shows that for a fixed architecture and perturbation-stability scale $\tau$, the generalization gap scales with
$
O(\frac{R}{\sqrt{M}}).
$
Hence, a smaller adaptation radius  yields a tighter bound; equivalently, for a fixed target
generalization gap, a smaller adaptation radius allows the same guarantee to be obtained with fewer
training samples. This suggests fine-tuning an algorithmically meaningful initialization can
be particularly useful in the low-data regime. It therefore motivates to control $\norm{\Theta-\Theta_0}_2.$ Such control can be imposed explicitly through a regularizer
$
\lambda\norm{\Theta-\Theta_0}_2^2,
$
or encouraged by a conservative optimization procedure, such as a sufficiently
small learning rate. 

Importantly, the bound does not require the predefined disentangler itself to achieve competitive performance on the real data. Its role is to provide a data-independent and algorithmically interpretable reference point from which a good predictor can be reached with limited adaptation. This leaves substantial freedom in the construction of the initial algorithm.

\section{Specific Algorithmic Interpretability  In Texture classification}
\label{sec:texture_classification}

In this section  we instantiate the above theory on the concrete example of texture classification. 

\subsection{Textures as stationary processes with subset statistics}
\label{subsec:texture_model}

A texture is an image exhibiting repeated local structure without requiring
strict periodicity. Textures are commonly modeled as realizations of stationary random fields
\citep{Julesz1962,PortillaSimoncelli2000,mallat2012groupinvariantscattering}.  
Let $Y=(Y(u))_{u\in\mathbb Z^2}$ be a random field. For $g\in\mathbb Z^2$, define the translation action on a random field by
$
(g.Y)(u):=Y(u-g)
$.  
A random field $Y$ is \emph{stationary} if its law is invariant under this action:
$
g.Y\overset{d}{=}Y
$
for every $g\in\mathbb Z^2$.  
A texture image is viewed as a finite  observation of this field in a spatial rectangular window. We next discuss the ergodic decomposition of the stationary process \(Y\). For consistent notation, we denote a generic ergodic process arising in this decomposition by \(X\).

Ergodicity has a precise definition in probability  theory. We give here an equivalent definition which is more in line with our discussion, i.e., the empirical average over any integrable function of the array converges to the expected value of this function. The equivalent formulation follows the so called \emph{Birkhoff's ergodic theorem} \citep[Sec.~1.2]{krengel2011ergodic}.
\begin{definition}
   The process $X=(X(u))_{u\in\mathbb{Z}^2}$ is called \emph{ergodic} if for every integrable function $f$ with respect to the law of $X$, i.e. $f\in L_1(P_X)$, we have
   \begin{equation}
       \label{eq:ergodi} \lim_{N\rightarrow\infty }\frac{1}{4N^2}\sum_{g\in [-N,N]\times [-N,N]} f(g.X) \stackrel{\text{a.s.}}{=} \mathbb{E}[f(X)],
   \end{equation}
where expected value is with respect to the law of $X$.  
\end{definition}

A stationary process need not be ergodic. However, the \emph{ergodic decomposition
theorem} \citep{Viana2016ErgodicDecomposition} states that any stationary process is a mixture of ergodic processes in the sense of (\ref{eq:mixture_processes}), where each $P_X$  is ergodic. Hence, stationary processes are well suited to our analysis.

One can think about the set of all integrable functions $f$ in (\ref{eq:ergodi}) as the set of sufficient statistics.\footnote{In fact, this claim can be made precise. By taking all indicator functions as $f$, one can extract the ergodic components.}  More accurately, the set of statistics is 
$\{X\mapsto \mathbb{E}[f(X)]\}$. However, this set is too large to be of any practical use. Instead, practical definitions of statistics consider subsets of the integrable functions as the statistics. By taking a subset $\mathcal{S}$ of $L_1(P_X)$, we reduce the resolution of the statistics. Namely, there are separate ergodic components with the same set of $\mathcal{S}$ statistics. One hence has to carefully choose a meaningful subset statistics for the task, in the sense that images with the same statistics tend to be in the same class.

\paragraph{Finite observations.}
The stochastic processes above are defined on the full lattice
$\mathbb Z^2$, whereas an observed texture image has finite spatial support.
We therefore model an image as a finite restriction as follows. The generative procedure first produces an
infinite random field $X$ as described above, after which only its restriction
$
X|_{\Lambda}
$
to a predefined finite rectangular domain
$\Lambda\subset\mathbb Z^2$ is observed. 
Consequently, the limiting statistics in
\eqref{eq:ergodi} cannot be computed exactly from an image. In practice, they
are estimated by averaging over the observed domain $\Lambda$. We assume that
$\Lambda$ is sufficiently large for these finite spatial averages to provide
good approximations to the corresponding ergodic limits.

The preceding discussion gives the texture-specific realization of the
stochastic model introduced in Section~\ref{subsec:stochastic_model}. 
In principle, a sufficiently rich collection of statistics can distinguish
the underlying ergodic components. In practice, however, only a finite subset
can be computed. The choice of this subset determines which differences
between generating processes are retained. For classification, we therefore
seek statistics that can be estimated reliably from a finite image while retaining the distinctions relevant for separating texture classes.

Classical statistical approaches to texture recognition characterize
processes through moments and correlations. Because higher-order moments
quickly become computationally expensive, practical methods have traditionally
focused largely on first- and second-order statistics. More expressive
representations can be obtained by choosing structured nonlinear statistics.
In what follows, we consider wavelet scattering transform as a concrete example \citep{mallat2012groupinvariantscattering}. 

\subsection{Wavelet scattering as a data disentangler}
\label{subsec:scattering_disentangler}

 We now instantiate the statistical construction above using the wavelet
scattering transform, reviewed in Section~\ref{sec:wavelet_related} and
described in detail in Appendix~\ref{app:scattering_transform}.
Let $\mathcal P$ denote the collection of scattering paths used in the
representation. For each path $p\in\mathcal P$, define
$
f_p(X):=U[p]X(0).
$
Since wavelet filtering and the pointwise modulus commute with translations,
evaluating $f_p$ on translated realizations corresponds to evaluating
$U[p]X$ at different spatial locations. Hence each $f_p$ is one of the
functions to which the ergodic averaging principle in
\eqref{eq:ergodi} applies. 
The windowed scattering coefficient
$
S_J[p]x(u)=U[p]x*\phi_J(u)
$
is precisely a weighted spatial average of these translated evaluations.
Unlike the limiting average in \eqref{eq:ergodi}, this average is weighted against the scaling function $\phi_J$. When $\phi_J$ is sufficiently high scale, integrating against it provides a finite-image approximation to the
corresponding process statistic
$
\mathbb E[f_p(X)].
$
Thus, scattering amounts to selecting the finite family 
$
\mathcal S_{\mathrm{scat}}
=
\{f_p:p\in\mathcal P\}
$, 
from the much larger collection of functions appearing in the ergodic
decomposition, and acts as a data disentangler. 
In Appendix~\ref{app:scattering_parameter_stability}, we  show that this disentangler satisfies the local parameter-stability condition required by Corollary~\ref{coro:general_ft_bound}.

\paragraph{Qualitative assessment through texture synthesis.}
Before evaluating the resulting representation for classification, we first
inspect whether the selected scattering statistics retain meaningful texture
information. Given a reference texture $x$, we synthesize an image
$\widetilde{x}$ whose scattering statistics match those of $x$ by solving
$
\widetilde{x}
\approx
\arg\min_z
\left\|
S_J(z)-S_J(x)
\right\|_2^2. 
$ Here, we use the micro-canonical approaches proposed by ~\citet{bruna2018Microcanonical};  
 implementation details are given in Appendix~\ref{app:scattering_synthesis}. 
We then visually inspect whether the synthesized image depicts a texture
similar to the reference. If matching the statistics consistently reproduces
the characteristic appearance of the reference texture, this suggests that the
selected statistics preserve perceptually relevant properties of the
underlying texture process.
\Cref{fig:texture_generation} shows representative results of the synthesis.

\begin{figure}[t]
    \centering
    \includegraphics[width=0.5\linewidth]{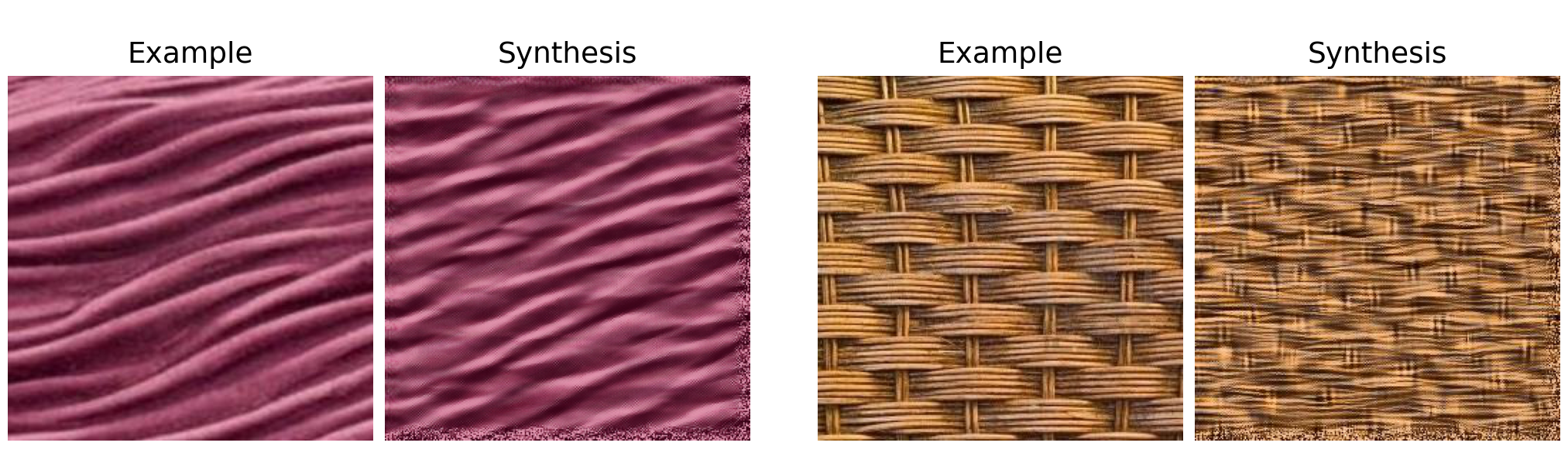}
    \caption{
    Texture synthesis by matching wavelet scattering statistics.
Synthesized images show that the selected scattering statistics retain perceptually meaningful texture information.
    }
    \label{fig:texture_generation}
\end{figure}

\section{Experiments}
\label{sec:texture_recognition_exp}
We realize the predefined scattering transform as a trainable convolutional network. At initial parameters $\Theta_0$, the feature extractor exactly implements the scattering transform, with wavelet
convolutions applied independently to each input channel. 
During fine-tuning, the filters can adapt and information across channels can be combined. 
\citet{gauthier_parametric_2022} propose a related approach in which only a small number of parameters controlling the filters are learned; see Appendix~\ref{app:parametric_scattering}.
We compare against three baselines: the same
architecture with random initialization, the original fixed scattering
transform, and the learnable scattering-network block architecture proposed by
\citet{zarka_separation_2021} (ScatNet B.A.). ScatNet B.A. keeps the wavelet
filters and scattering modulus fixed, while adding learnable $1\times1$
convolutions to combine and reduce channels, followed by learnable tight-frame
analysis layers. All methods use a single linear classification head. 
Together, these baselines isolate the benefit of fine-tuning from a predefined scattering realization, relative to random initialization, no adaptation, and an alternative trainable scattering architecture.
Model details are given in Appendix~\ref{app:architectures}. 

We evaluate on DTD, EuroSAT, MinC-2500, and SuroTex
\citep{cimpoi14describing,helber2019eurosat,bell2015material, putra2025surotex}.
Classical texture benchmarks such as KTH-TIPS and CUReT are not used since they are relatively small, and fixed scattering representations already perform strongly on them, leaving limited room to study the benefit of fine-tuning\citep{Fritz2004KTHTIPS, Dana1999Curet}.
SuroTex instead provides a large-scale texture dataset, while DTD, EuroSAT, and MinC-2500 introduce increasingly richer variability beyond classical texture images. Together, these datasets allow us to study the regime between textures well described by the predefined model and more general images for which adaptation becomes important. 
We also evaluate performance across different training-set sizes to study sample
efficiency; dataset splits and sampling details are given in
Appendix~\ref{app:datasets_split}.
For our architecture, we set the fine-tuning regularization parameter \( \lambda \) with values of \( 0, 0.1, 1.0 \).
For ScatNet B.A. and the randomly initialized architecture, we set \( \lambda = 0 \) to avoid anchoring to a prior that is not meaningful.

We report test accuracy and generalization gap in this section. We further conduct an ablation study to separate the effect of regularizing the fine-tuning distance from that of the scattering prior, and analyze the effect of fine-tuning on neural collapse and interpretability; these additional results are provided in \cref{app:more-results}. The ablation study shows that anchoring the model to the scattering prior improves accuracy compared with either no regularization or standard weight decay, while the interpretability analysis shows that fine-tuned filters remain wavelet-like.

\begin{figure}
    \centering
    \includegraphics[width=0.8\linewidth]{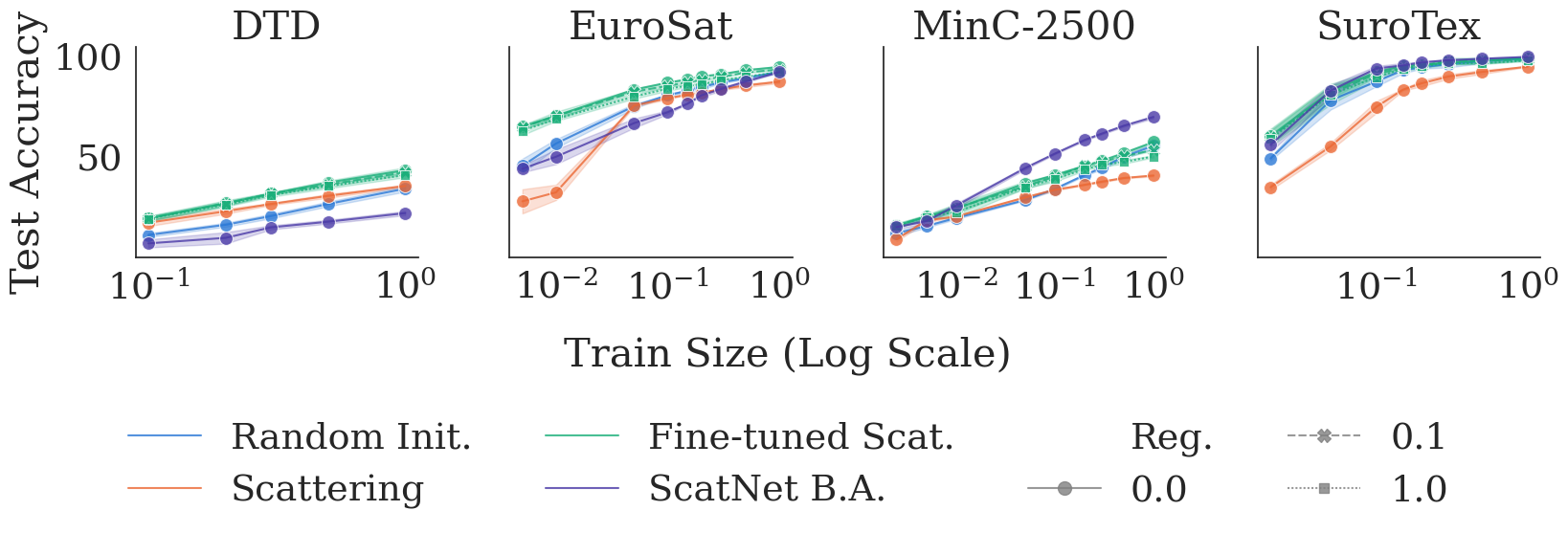}
    \caption{Percentage of data used during training versus test accuracy. Color denotes method and line style denotes regularization strength. The proposed method gives the best performance when the amount of training data is small.}
    \label{fig:acc-vs-train-size}
\end{figure}

\Cref{fig:acc-vs-train-size} displays the percentage of available data used during training versus test accuracy, calculated as the number of correct predictions out of total predictions on the test dataset.
\Cref{coro:general_ft_bound} focuses on controlling the generalization gap and does not directly address test accuracy.
However, having low generalization gap is typically related to high performance, so long as the training accuracy is high, as we demonstrate in \cref{fig:test-acc-vs-gen-gap}. 
\cref{fig:acc-vs-train-size} demonstrates that 
 the fine-tuned scattering network outperforms all baseline methods when training data is extremely limited.
On the least texture-like dataset, MinC-2500, ScatNet B.A. outperforms the fine-tuned scattering network as training data increases.
On the remaining datasets, the fine-tuned scattering network either performs the best or performs competitively with baseline methods as data increases.

\begin{figure}
    \centering
    \includegraphics[width=0.8\linewidth]{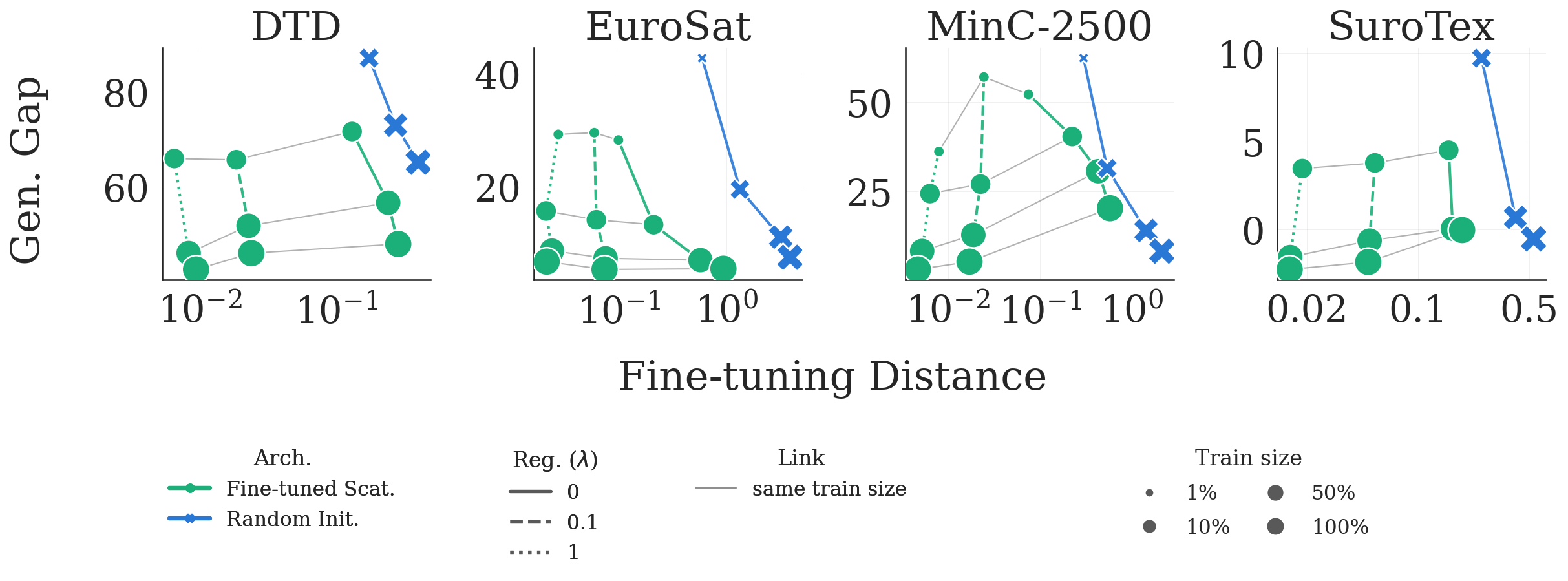}
    \caption{Fine-tuning distance  versus generalization gap. Color denotes method, line style denotes regularization strength, and size denotes amount of training data used. Gray lines between fine-tuned scattering networks trained on the same data size allow comparison across regularization strengths. Smaller fine-tuning distance correlates with lower generalization gap.}
    \label{fig:filter-dist-vs-gen-gap}
\end{figure}

\Cref{fig:filter-dist-vs-gen-gap} displays the fine-tuning distance against generalization gap, where fine-tuning distance is the \( L^2 \) distance between initial and final parameters, normalized by the initial parameter norm.
As predicted by \cref{coro:general_ft_bound}, the generalization gap tends to decreases as the fine-tuning distance decreases.
The effect is strongest on MinC-2500, the least texture-like dataset, suggesting that control of the fine-tuning distance helps most  when the data departs from the theoretical model and the parameters would otherwise travel further.
The randomly initialized network does not follow the trend  established by the fine-tuned scattering network, and has a higher generalization gap than the trend suggests, given its fine-tuning distance.
The deviation from the trend suggests that anchoring the networks parameters to an interpretable algorithm improves performance, a finding supported by \cref{fig:gen-gap-anchor}, which analyzes the effect of anchoring a randomly initialized model to the scattering network's parameters.

\section{Conclusion}

We introduced a framework for \emph{specific algorithmic interpretability}
that ties a particular neural-network parameter realization to an explicit
algorithm rather than merely an algorithm-inspired architecture. The key idea is to preserve this algorithmic reference during learning through controlled fine-tuning, whose distance measures the deviation from the disentangler and allows controlling the generalization bound.

\emph{Significance.}
Our framework provides a way to train
highly over-parameterized networks in low-data regimes without overfitting and without requiring a
network pretrained on a large external dataset. In contrast to standard transfer learning, where the useful initialization is typically obtained by pretraining on vast amounts of data, here it is supplied directly by an explicit interpretable algorithm. Additionally, the framework suggests a new role for classical algorithms in the deep-learning era: rather than being
discarded in favor of flexible learned models, they can serve as algorithmically interpretable
initializations that combine explicit domain knowledge with data-driven
adaptation. 

\emph{Limitations and Future Work.} The present framework is mainly suited to classification problems.  Future work may extend this
perspective to other settings, including generative models, and instantiate
the framework for other data modalities.
Such extensions may clarify when explicit algorithmic structure provides a
useful foundation for modern learned models.

\paragraph{Acknowledgement. }
AvS and GL were partially supported by NSF award DMS-2427955. 
RL was supported by a grant from the United States-Israel Binational Science Foundation (BSF), Jerusalem, Israel, and the United States National Science Foundation (NSF), (NSF-BSF, grant No. 2024660), and by the Israel Science Foundation (ISF grant No. 1937/23).

\bibliographystyle{plainnat}
\bibliography{FineTuning}

\appendix

\clearpage
\startcontents[appendices]

\section*{Appendix}
\printcontents[appendices]{}{1}{}

\section{Background and Preliminaries}

\subsection{Additional related work on interpretability}
\label{app:related_work}

Interpretability methods can be divided into intrinsic approaches, which design models whose structure is intended to be understandable by construction, and post-hoc approaches, which analyze an already trained model
\citep{Rudin2018StopEB,Arrieta2019ExplainableAI}. Our work belongs to the
intrinsic setting and, more specifically, to the literature connecting neural
networks with explicit computational algorithms
\citep{Monga2021Unrolling,Gregor2010LearningFA,Papyan2017CNN,
Yaodong2024WhiteBoxTransformers}. As discussed in the main text, our focus is
on the algorithmic meaning of a particular learned parameter realization,
rather than only the algorithmic structure shared by a parameterized family.

\paragraph{Concept and prototype-based models.}
Some intrinsic approaches design neural networks whose decisions pass through
human-interpretable intermediate representations. Concept bottleneck models
predict predefined semantic concepts before producing the final prediction
\citep{Koh2020ConceptBottleneck}, while prototype-based networks classify examples
through similarity to learned representative parts
\citep{Chen2019Looks}. These approaches make the decision process interpretable
in terms of concepts or examples; our work instead seeks to associate the
network computation itself with an explicit algorithm.

\paragraph{Algorithmic alignment.}
Our work is also related to algorithmic alignment, which studies the
correspondence between the computational structure of a neural network and the
decomposition of a target algorithm \citep{xu2020what}. When such a
correspondence exists, network modules can be associated with computational
subroutines of the target algorithm, which may facilitate learning and
generalization. Algorithmic alignment primarily concerns whether
the organization of the network is compatible with the decomposition of an
algorithmic computation. 
Such structural correspondence does not by itself guarantee an algorithmic interpretation of the fully trained network, since the computation it implements is determined jointly by its architecture and learned parameters.
Our focus is complementary: we ask whether the particular parameter realization
of the trained network retains an explicit algorithmic meaning. 

\paragraph{Mechanistic interpretability.}
Mechanistic interpretability seeks to reverse-engineer trained neural networks
by identifying internal features, circuits, and computational mechanisms that
give rise to their behavior \citep{Olah2020ZoomIA}. It is closely related to
our objective in that both aim to understand the computation performed by a
particular trained model rather than only characterize the architecture at a
generic level. The main difference
is methodological. Mechanistic interpretability typically begins with an
already trained model and attempts to recover its internal computation
post hoc. Our approach instead begins from a parameter realization that
implements a known algorithm and constrains subsequent learning so that the
trained model remains related to this interpretable reference point.

\subsection{Wavelet and Scattering Preliminaries}

We review here the wavelet-based statistical construction used in our
texture experiments, namely, the wavelet scattering transform.
The scattering transform constructs a hierarchy of representations by
cascading predefined wavelet convolutions with pointwise modulus nonlinearities
along a collection of scattering paths, followed by spatial averaging.
The resulting scattering coefficients form a finite family of statistics. Under the stationary ergodic model introduced in \cref{subsec:texture_model}, these spatial averages provide estimators of statistics of the
underlying generating process.

\subsubsection{Oriented wavelet transform}
\label{app:wavelet_transform}

We define the oriented wavelet transform following
\citet{mallat2012groupinvariantscattering}, specialized to two-dimensional images and
dyadic scales.

\begin{definition}[Wavelet Transform]
    \label{def:wavelet-transform}
    Let \( \psi \in L^2(\R^2) \) be a complex function that we call a \emph{mother wavelet}, and let \( \phi \in L^2(\R^2) \) be a complex function with \( \int\phi=1 \), which we call a low-pass function.
    For an integer \( L\geq1 \), let \( G=\{r_k\}_{k=0}^{2L-1} \subset SO(2)\) be the rotations by the angles \( \theta_k = k\pi/L \).
    A dilation by \( 2^j \), \( j\in\Z \), and rotation by \( r \in G \) of \( \psi \) produce
    \( \psi_{\lambda} (u) = 2^{-2j} \psi(2^{-j} r^{-1} u) \), where we write \( \lambda = 2^j r \).
    For \( J\in\Z \), the low-pass filter at scale \( 2^J \) is \( \phi_J(u) = 2^{-2J} \phi(2^{-J} u) \); since \( \int\phi_J=1 \), convolution with \( \phi_J \) performs a weighted spatial average over a region of size proportional to \( 2^J \).
    The wavelet transform at scale \( 2^J \) of a signal \( x \in L^2(\R^2) \) is the collection
    \[
    \mathcal{W}_J x = \{ x * \phi_J, ( x * \psi_{\lambda} )_{\lambda \in \Lambda_J} \},
    \qquad
    \Lambda_J  = \{ 2^j r : r \in G,\ j<J \},
    \]
    with norm
    \[
    \| \mathcal{W}_J x \|^2 = \| x * \phi_J \|^2 + \sum_{\lambda \in \Lambda_J} \| x * \psi_{\lambda} \|^2 .
    \]
\end{definition}

The coefficient $x*\psi_\lambda(u)$ is complex. Its modulus measures the local
response of $x$ to structures at the scale $2^j$ and orientation indexed by
$\lambda$, while its phase encodes the precise position of the oscillation.
The scattering transform below retains the modulus, which varies slowly in
space and can therefore be averaged stably, and recovers part of the
information discarded with the phase through a cascade of further wavelet
transforms.

\paragraph{Symmetry for real-valued inputs.}
For Morlet wavelets, $\psi(-u)=\overline{\psi(u)}$. Since $r_{k+L}=-r_k$, this
gives $\psi_{2^jr_{k+L}}=\overline{\psi_{2^jr_k}}$, so for real-valued $x$
we have $|x*\psi_{2^jr_{k+L}}|=|x*\psi_{2^jr_k}|$. All inputs to the
scattering cascade below are real-valued (the image itself and the moduli
computed from it), so in practice only the $L$ orientations $\theta_k$,
$k=0,\ldots,L-1$, in $[0,\pi)$ are computed.

\paragraph{Frame bounds.}
Recall that a family $\{f_k\}_{k\in\Gamma}$ in a Hilbert space
$\mathcal H$ is a \emph{frame} with bounds $0<A\leq B$ if
$A\|f\|^2\leq\sum_{k\in\Gamma}|\langle f,f_k\rangle|^2\leq B\|f\|^2$ for all
$f\in\mathcal H$. For the wavelet transform, frame bounds are guaranteed by
the Littlewood--Paley condition.

\begin{proposition}[Littlewood--Paley condition; {\citealt{mallat2012groupinvariantscattering}}]
    \label{prop:littlewood-paley}
    If there exists \( \epsilon\in[0,1) \) such that, for almost all \( \omega \in \R^2 \),
    \[
    1 - \epsilon \leq | \hat{\phi}(2^J \omega)|^2 + \frac{1}{2} \sum_{j < J} \sum_{r \in G} | \hat{\psi} (2^j r^{-1} \omega ) |^2 \leq 1,
    \]
    then \( \mathcal{W}_J \) is a frame with bounds \( 1-\epsilon \) and \( 1\) on real-valued signals:
    \[
    (1- \epsilon) \|x \|^2 \leq \| \mathcal{W}_J x \|^2 \leq \| x \|^2, \qquad x \in L^2(\R^2)\ \text{real-valued}.
    \]
\end{proposition}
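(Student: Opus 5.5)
The plan is to diagonalize everything with the Fourier transform: each term of $\|\mathcal W_J x\|^2$ becomes an integral of $|\hat x(\omega)|^2$ against a nonnegative weight, and the Littlewood--Paley condition is a pointwise bound on the total weight. I use the convention $\hat f(\omega)=\int f(u)e^{-i\omega\cdot u}\,du$, so Plancherel reads $\|f\|^2=(2\pi)^{-2}\|\hat f\|^2$ and the convolution theorem gives $\widehat{x*g}=\hat x\,\hat g$.

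\emph{Step 1: Fourier multipliers.} From $\phi_J(u)=2^{-2J}\phi(2^{-J}u)$, a change of variables gives $\hat\phi_J(\omega)=\hat\phi(2^J\omega)$. Similarly, for $\lambda=2^jr$ we get $\hat\psi_\lambda(\omega)=\hat\psi(2^j r^{-1}\omega)$, using that $r$ is orthogonal. Plancherel then gives
\[
\|x*\phi_J\|^2=(2\pi)^{-2}\!\int|\hat x(\omega)|^2|\hat\phi(2^J\omega)|^2\,d\omega,
\qquad
\|x*\psi_\lambda\|^2=(2\pi)^{-2}\!\int|\hat x(\omega)|^2|\hat\psi(2^jr^{-1}\omega)|^2\,d\omega .
\]

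\emph{Step 2: sum the terms.} All integrands are nonnegative, so Tonelli lets me interchange the sum over $\lambda\in\Lambda_J$ with the integral. This writes $\|\mathcal W_Jx\|^2=(2\pi)^{-2}\int|\hat x(\omega)|^2\,A(\omega)\,d\omega$, where $A(\omega)$ is the sum of $|\hat\phi(2^J\omega)|^2$ and the wavelet weights over the computed orientations.

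\emph{Step 3: use that $x$ is real.} This is the step I expect to be the main obstacle, because it is where the factor $\tfrac12$ must be accounted for. For real $x$ we have $\hat x(-\omega)=\overline{\hat x(\omega)}$, so $|\hat x|^2$ is even. Hence $\int|\hat x(\omega)|^2 w(\omega)\,d\omega=\int|\hat x(\omega)|^2 w(-\omega)\,d\omega$ for any nonnegative weight $w$. Since $r_{k+L}=-r_k$, the rotation group splits as $G=G^+\cup(-G^+)$, where $G^+=\{r_0,\dots,r_{L-1}\}$ are the representatives with angles in $[0,\pi)$; these are exactly the orientations retained in practice, as noted in the paragraph on symmetry for real-valued inputs. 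This gives
\[
\sum_{r\in G}|\hat\psi(2^jr^{-1}\omega)|^2=\sum_{r\in G^+}\Bigl(|\hat\psi(2^jr^{-1}\omega)|^2+|\hat\psi(-2^jr^{-1}\omega)|^2\Bigr).
\]
Integrated against the even function $|\hat x|^2$, the two summands on the right contribute equally. So, with the wavelet norm taken over $G^+$ (the standard convention, as in Mallat's quotient group), the weight $A$ can be replaced by
\[
|\hat\phi(2^J\omega)|^2+\tfrac12\sum_{j<J}\sum_{r\in G}|\hat\psi(2^jr^{-1}\omega)|^2 ,
\]
which is exactly the quantity in the hypothesis. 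I will state this convention explicitly, since the inequality with the $\tfrac12$ is false if one sums over all of $G$ in the norm. With the full sum over $G$, the same argument gives the bounds $2(1-\epsilon)$ and $2$ instead.

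\emph{Step 4: conclude.} The hypothesis says this symmetrized weight lies in $[1-\epsilon,1]$ for almost every $\omega$. Multiplying by $|\hat x|^2\ge 0$ and integrating gives $(1-\epsilon)(2\pi)^{-2}\|\hat x\|^2\le\|\mathcal W_Jx\|^2\le(2\pi)^{-2}\|\hat x\|^2$. A final application of Plancherel turns $(2\pi)^{-2}\|\hat x\|^2$ into $\|x\|^2$, which is the claimed frame inequality.
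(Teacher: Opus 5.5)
Your argument is correct and is the standard Plancherel proof from Mallat, which the paper cites rather than reproves; you are also right that the factor $\tfrac12$ only works if the wavelet sum in $\|\mathcal W_J x\|^2$ runs over $r_0,\dots,r_{L-1}$ (Mallat's quotient $G^+$, i.e.\ the paper's discrete-setting $\Lambda_J$) rather than over all of $G$ as in the paper's Definition. One slip in your aside: with the full sum over $G$ the bounds become $1-\epsilon$ and $2$, not $2(1-\epsilon)$ and $2$, because the low-pass term is not doubled, and near $\omega=0$ the full weight is at most $2-|\hat\phi(2^J\omega)|^2\approx 1$.
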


The lower bound ensures that $\mathcal W_J$ retains all the information in
$x$, and the upper bound makes $\mathcal W_J$ non-expansive. Since the complex
modulus is also non-expansive, the scattering transform defined below inherits
this property (Appendix~\ref{app:scattering_parameter_stability}).

\paragraph{Discrete setting.}
In practice, images are defined on a finite grid, all convolutions are
discrete, and the Littlewood--Paley condition is imposed on the discrete
frequency grid. The wavelet transform is computed at finitely many scales
$0\leq j<J$, where $j=0$ is the finest scale resolved by the grid.
With a slight abuse of notation, we write
\[
\Lambda_J=\{2^jr_k:\ 0\leq j<J,\ 0\leq k<L\}
\]
for this finite index set of computed wavelets.

\subsubsection{Wavelet scattering transform}
\label[appendix]{app:scattering_transform}
The wavelet scattering transform constructs nonlinear multiscale statistics by
cascading wavelet convolutions and complex modulus nonlinearities
\citep{mallat2012groupinvariantscattering}.
We define it formally as follows.

\begin{definition}[Windowed Scattering Transform]
    \label{def:scattering-transform}
    For \( \lambda\in\Lambda_J \), define the scattering operator \( U[\lambda]x := |x * \psi_\lambda| \).
    A sequence \(p = ( \lambda_1, \ldots, \lambda_q )\in\Lambda_J^q \) is called a path of length \( q\geq0 \); the empty path ($q=0$) is denoted \( \emptyset \).
    The scattering propagator along \( p \) is defined by \( U [\emptyset] x := x \) and
    \[
    U [p] x := U [\lambda_q] \cdots U [\lambda_1] x
    = \bigl| U[(\lambda_1,\ldots,\lambda_{q-1})]x * \psi_{\lambda_q} \bigr| .
    \]
    The windowed scattering coefficient associated with \( p \) is
    \begin{equation}\label{eq:windowed_scattering}
        S_J [p] x := U [p] x * \phi_J ,
    \end{equation}
    and, for a set of paths \( \mathcal P \), the windowed scattering transform is the collection
    \( S_J(x) := \bigl( S_J[p]x \bigr)_{p \in \mathcal P} \).
\end{definition}

We use the standard choice of \emph{frequency-decreasing} paths of length at
most $q_{\max}$,
\[
\mathcal P_J
:=
\bigl\{
(2^{j_1}r_{k_1},\ldots,2^{j_q}r_{k_q})\in\Lambda_J^q:\
0\leq q\leq q_{\max},\
j_1<j_2<\cdots<j_q
\bigr\}.
\]
Since $U[\lambda]x$ has most of its energy at frequencies below the central
frequency of $\psi_\lambda$, paths along which the scale does not increase carry
little energy and are discarded \citep{bruna_invariant_2013}. In the discrete
setting, $\mathcal P_J$ is finite. Unless stated otherwise,
$\mathcal P=\mathcal P_J$ and $S_J(x)=(S_J[p]x)_{p\in\mathcal P_J}$.

Under the Littlewood--Paley condition, the windowed scattering transform over
all paths is non-expansive
\citep[Proposition~2.5]{mallat2012groupinvariantscattering},
\citep[Proposition~8.3]{grohs_scattering_2022}. Hence, so is its restriction to
any subset of paths, such as $\mathcal P_J$:
\[
\|S_J(x)-S_J(x')\|\leq\|x-x'\|.
\]

\subsubsection{Scattering coefficients as statistics of an ergodic process.}
The connection with the stochastic model in Section \ref{subsec:stochastic_model} is immediate.
Each scattering path defines a particular function of a random field. For a
path $p$, define
\begin{equation}
\label{eq:scattering_observable}
f_p(X)
:=
U[p]X(0).
\end{equation}
If $X$ is stationary with law $Q$, stationarity implies that
$
\mathbb E_Q[U[p]X(u)]
$
does not depend on $u$. The corresponding statistic of the process is therefore
\begin{equation}
\label{eq:expected_scattering}
\overline S_Q[p]
:=
\mathbb E_Q[U[p]X(0)]
=
\mathbb E_Q[f_p(X)].
\end{equation}
The collection $\{f_p:p\in\mathcal P\}$ is thus precisely a particular finite
choice of the functions used to define statistics in the ergodic model of the
main text.

The coefficient $S_J[p]x(u)$ in
\cref{eq:windowed_scattering} is a weighted spatial average of this function
over translations of the observed realization. Indeed, using the
translation action defined in the main text, it can be viewed as a finite
weighted approximation to the ergodic average of $f_p$. If $\phi_J$ has
sufficiently wide support, then for an ergodic process this spatial average
approximates
$
\overline S_Q[p].
$
Hence, the scattering network computes from a single finite texture image
estimates of a prescribed collection of statistics of its underlying
generating process.

Mallat introduced this stochastic interpretation explicitly through the
\emph{expected scattering transform}, whose coefficients are
$\mathbb E_Q[U[p]X]$ \citep{mallat2012groupinvariantscattering}. Unlike the
power spectrum, which depends only on second-order information, scattering
coefficients capture nonlinear higher-order dependencies through the cascade
of wavelet transforms and modulus operations. The predefined scattering
network therefore specifies both the functions whose expectations are measured
and the algorithm used to estimate them from a finite realization.

\section{Theoretical Analysis and Proofs}
\subsection{PAC-Bayesian Analysis Around a Data-Independent Initialization}
\label{app:general_ft_bound}
We formalize the statistical role of restricting fine-tuning around
$\Theta_0$ using a PAC-Bayesian analysis. Since $\Theta_0$ is specified by the predefined disentangling
algorithm and is independent of the training sample, it can serve as a
data-independent reference point in a PAC-Bayesian analysis. We consider
predictors obtained by fine-tuning within a neighborhood of this realization.
For $R>0$, define
$$
B_R(\Theta_0)
:=
\left\{
\Theta:
\norm{\Theta-\Theta_0}_2\le R
\right\}.
$$
Under a suitable local stability condition on parameter perturbations, we
derive a generalization bound that holds uniformly over
$B_R(\Theta_0)$. Crucially, the bound depends on the distance from the
reference initialization, but not on the predictive performance of
$F_{\Theta_0}$ itself.

\subsubsection{Margin risks and PAC-Bayesian margin bound}
Let $\mathcal D$ denote the distribution of labeled pairs
$(x,y)\in\mathcal X\times[C]$, whose marginal on $\mathcal X$ is the data
distribution $P$ in \cref{eq:mixture_processes}, and let
$\{(x_n,y_n)\}_{n=1}^M$ be a training set of $M$ i.i.d.\ samples from
$\mathcal D$. For a predictor $F_\Theta:\mathcal X\to\R^C$, recall the
classification margin
$
m_\Theta(x,y)=F_\Theta(x)[y]-\max_{j\neq y}F_\Theta(x)[j].
$
For $\gamma\geq0$, the statistical and empirical $\gamma$-margin risks are
\[
R_\gamma(F_\Theta)
:=
\Pbb_{(x,y)\sim\mathcal D}\bigl[m_\Theta(x,y)\leq\gamma\bigr],
\qquad
\widehat R_\gamma(F_\Theta)
:=
\frac1M\sum_{n=1}^M\mathbbm 1\bigl[m_\Theta(x_n,y_n)\leq\gamma\bigr].
\]
In particular, $R_0(F_\Theta)$ is the statistical classification risk, with
ties counted as errors.
We begin from the following standard PAC-Bayesian margin bound.
\begin{lemma}[{\citealt[Lemma 1]{neyshabur_pac-bayesian_2018}}]
\label{lemma:pac}
Let $F_{\Theta}:\mathcal{X}\to \mathbb{R}^C$
be any predictor, parameterized by \(\Theta\). Let \(\pi\) be any prior distribution over the parameter space that is independent of the training data. Then, for any \(\gamma>0,\delta\in(0,1)\), with probability at least \(1-\delta\) over the draw of a training set of size \(M\), the following holds.
For any \(\Theta\) and any random perturbation \(U\) such that
$\Pbb_U\left[\sup_{x\in\mathcal{X}}\norm{F_{\Theta+U}(x)-f_{\Theta}(x)}_\infty<\frac{\gamma}{4}\right]\ge\frac12,$
we have
\[R_{0}(F_{\Theta})\le\widehat R_{\gamma}(F_{\Theta})+4\sqrt{\frac{\KL(\mathcal Q\,\|\,\pi)+\ln\frac{6M}{\delta}}{M-1}},
\]
where $\mathcal Q=Law(\Theta+U)$ denotes the posterior distribution.
\end{lemma}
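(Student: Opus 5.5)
The plan follows the standard argument of \citet{neyshabur_pac-bayesian_2018}. We combine the classical PAC-Bayesian bound of \citet{mcallester2003simplified} for the margin loss with a restriction of the posterior to the \emph{good} perturbations. We read $f_\Theta$ in the hypothesis as $F_\Theta$.

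\textbf{Step 1 (classical bound).} We start from the classical PAC-Bayes inequality for a $[0,1]$-valued loss. With probability at least $1-\delta$, simultaneously for all posteriors $\widetilde{\mathcal Q}$,
\[
\mathbb E_{\widetilde{\mathcal Q}}\bigl[R_{\gamma/2}(F_{\Theta'})\bigr]
\le
\mathbb E_{\widetilde{\mathcal Q}}\bigl[\widehat R_{\gamma/2}(F_{\Theta'})\bigr]
+2\sqrt{\frac{2\KL(\widetilde{\mathcal Q}\,\|\,\pi)+\ln\frac{2M}{\delta}}{M-1}}.
\]
It is applied to the loss $\mathbbm 1[m_{\Theta'}(x,y)\le\gamma/2]$. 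Since $\pi$ is data-independent, this event depends only on $\pi$ and $\delta$. It is therefore legitimate to choose $\widetilde{\mathcal Q}$ afterwards, depending on $\Theta$.

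\textbf{Step 2 (restricting the posterior).} Let
\[
S:=\Bigl\{u:\ \sup_{x}\norm{F_{\Theta+u}(x)-F_\Theta(x)}_\infty<\tfrac{\gamma}{4}\Bigr\},
\]
so that $Z:=\mathcal Q(\Theta+S)\ge\tfrac12$ by hypothesis. Let $\widetilde{\mathcal Q}$ be $\mathcal Q$ conditioned on $\Theta+S$.

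For every $u\in S$, each coordinate of the output moves by less than $\gamma/4$. Hence the margin moves by less than $\gamma/2$, uniformly in $(x,y)$. Pointwise this gives
\[
\mathbbm 1[m_\Theta\le 0]\le \mathbbm 1[m_{\Theta+u}\le\gamma/2]\le \mathbbm 1[m_\Theta\le\gamma].
\]
Integrating over $\widetilde{\mathcal Q}$ yields
\[
R_0(F_\Theta)\le \mathbb E_{\widetilde{\mathcal Q}}\bigl[R_{\gamma/2}\bigr]
\quad\text{and}\quad
\mathbb E_{\widetilde{\mathcal Q}}\bigl[\widehat R_{\gamma/2}\bigr]\le \widehat R_\gamma(F_\Theta).
\]

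\textbf{Step 3 (KL control; the main obstacle).} The remaining task is to bound $\KL(\widetilde{\mathcal Q}\,\|\,\pi)$ by $\KL(\mathcal Q\,\|\,\pi)$ so that the constants collapse to the stated form. Writing densities, we have
\[
\KL(\widetilde{\mathcal Q}\,\|\,\pi)=\frac1Z\int_{\Theta+S} q\ln\frac q\pi-\ln Z.
\]
The integral over the complement of $\Theta+S$ is controlled by the log-sum inequality:
\[
\int_{(\Theta+S)^c} q\ln\frac q\pi\ \ge\ (1-Z)\ln(1-Z)\ \ge\ -\tfrac1e.
\]
Hence $\int_{\Theta+S}q\ln(q/\pi)\le \KL(\mathcal Q\,\|\,\pi)+\tfrac1e$. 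Using $Z\ge\tfrac12$,
\[
\KL(\widetilde{\mathcal Q}\,\|\,\pi)\le 2\KL(\mathcal Q\,\|\,\pi)+\tfrac2e+\ln 2.
\]

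\textbf{Step 4 (collecting constants).} Substituting into Step 1, the numerator becomes at most
\[
4\KL(\mathcal Q\,\|\,\pi)+\tfrac4e+2\ln2+\ln\tfrac{2M}{\delta}.
\]
Pulling a factor $4$ out of the square root turns the prefactor $2$ into $4$. What remains is to check the elementary inequality
\[
\tfrac14\Bigl(\tfrac4e+2\ln2+\ln\tfrac{2M}{\delta}\Bigr)\le\ln\tfrac{6M}{\delta}.
\]
It holds since $M\ge1$ and $\delta<1$, the worst case being $M=1$ as $\delta\to1$. This gives the stated $4\sqrt{(\KL(\mathcal Q\,\|\,\pi)+\ln(6M/\delta))/(M-1)}$.

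The only delicate points are Step 3 and this bookkeeping of constants. The margin sandwich in Step 2 is immediate from the definition of $S$.
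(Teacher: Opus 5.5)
Your proof is correct and is essentially the argument of \citet[Lemma~1]{neyshabur_pac-bayesian_2018}, which the paper cites rather than reproves. As there, you condition $\mathcal Q$ on the good set of mass $Z\ge\frac12$, use the $\gamma/4$ sandwich to pass between $R_0$, $R_{\gamma/2}$ and $\widehat R_\gamma$, and control $\KL(\widetilde{\mathcal Q}\,\|\,\pi)\le\frac1Z\bigl(\KL(\mathcal Q\,\|\,\pi)+H(Z)\bigr)$ with $H$ the binary entropy (your $\frac1e$ and $\ln 2$ estimates are just a different relaxation of $H(Z)$, which they bound by $1$); the one nit is that McAllester's bound as quoted there has numerator $2\bigl(\KL+\ln\frac{2M}{\delta}\bigr)$ rather than your $2\KL+\ln\frac{2M}{\delta}$, but your form is still valid (e.g.\ via Maurer's bound and Pinsker) and your Step~4 check also succeeds with the weaker form, since $\frac1e+\frac{\ln 2}{2}<\ln 3$.
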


Lemma~\ref{lemma:pac} requires the predictor to be sufficiently insensitive to a random perturbation of its parameters. To verify this condition uniformly for predictors in a neighborhood of the algorithmically interpretable realization $\Theta_0$, we introduce a deterministic local stability condition under bounded parameter perturbations.
\subsubsection{Local perturbation stability}
\begin{definition}[Local Perturbation Stability]\label{def:local_stability}
    Let $\F=\{F_\Theta: \X\rightarrow\R^C\}_{\Theta\in\W}$ be a parametrized hypothesis family, where $\Theta=\operatorname{vec}\bigl(\{W_i\}_{i=1}^d\bigr).$ For $\Theta_0\in \W$, $R > 0$, and $\tau > 0$, define 
    $$B_R(\Theta_0):=\{\Theta\in\W \mid \norm{\Theta-\Theta_0}_2 \leq R\}\},$$
    and 
    $$\U_\tau:=\{U=\operatorname{vec}\bigl(\{U_i\}_{i=1}^d\bigr): \max_{i\in[d]}\norm{U_i}_2\le\tau\}.$$
    We say that the family $\F$ is ($R, \tau, \epsilon$)-stable under bounded parameter perturbations around $\Theta_0$ if, for every $\Theta\in B_R(\Theta_0)$ and every $U\in\U_\tau$,
    $$\sup_{x\in\X} \norm{ F_{\Theta+U}(x) - F_{\Theta}(x) }_{\infty} < \epsilon.$$
\end{definition}
The stability condition allows the perturbation requirement in Lemma~\ref{lemma:pac} to be verified uniformly over a neighborhood of the data-independent realization $\Theta_0$. This leads to the following bound.

\subsubsection{Uniform generalization bound}

\begin{restatable}[Uniform Generalization Bound Around a Data-Independent Initialization]{corollary}{}
    Let $\F$ be a parametrized hypothesis family, and let $\Theta_0=vec(\{W_i\}_{i=1}^d)$ be an initialization independent of the training data, where $W_i\in\R^{m_i\times n_i}$, and define $h:=\max_{i\in[d]}\{m_i, n_i\}.$ Fix $R,\gamma,\tau>0$, and suppose that $\F$ is ($R,\tau,\frac{\gamma}{4}$)-stable under bounded parameter perturbations around $\Theta_0$. 
    Then, for any $\delta \in (0,1)$, with probability at least $1-\delta$ over the draw of a training set of size $M$,
    the following holds simultaneously for all $\Theta\in B_R(\Theta_0)$:
    \begin{equation}
    R_{0} (F_{\Theta}) \leq \widehat R_{\gamma}(F_{\Theta}) + 4 \sqrt{ \left( \frac{R^2 h \ln(4dh)}{\tau^2} + \ln \frac{6 M}{\delta } \right) / \left( M - 1 \right)} \label{eq:general_ft_bound}    
    \end{equation}
\end{restatable}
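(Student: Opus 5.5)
The plan is to apply Lemma~\ref{lemma:pac} with a single data-independent Gaussian prior centred at the initialization, $\pi=\mathcal N(\Theta_0,\sigma^2 I)$. For each $\Theta\in B_R(\Theta_0)$ the posterior will be $\mathcal Q=\mathcal N(\Theta,\sigma^2 I)$, i.e.\ the random perturbation is $U\sim\mathcal N(0,\sigma^2 I)$. The prior depends only on $\Theta_0$, $\sigma$, $R$ and $\tau$, none of which depend on the sample. Because Lemma~\ref{lemma:pac} already holds simultaneously for all $\Theta$ and all admissible $U$ on one event of probability at least $1-\delta$, no union bound over a grid of priors is needed. Uniformity over $B_R(\Theta_0)$ is then automatic.

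\textbf{Step 1 (perturbation condition).} I will choose $\sigma$ so that $\Pbb_U[U\in\U_\tau]\ge \tfrac12$. Each block $U_i\in\R^{m_i\times n_i}$ has i.i.d.\ $\mathcal N(0,\sigma^2)$ entries. By the standard Gaussian matrix tail bound (Tropp), as used by \citet{neyshabur_pac-bayesian_2018},
\[
\Pbb\bigl[\norm{U_i}_2>t\bigr]\le 2h\,e^{-t^2/(2h\sigma^2)}.
\]
A union bound over the $d$ blocks gives
\[
\Pbb\Bigl[\max_i\norm{U_i}_2>\tau\Bigr]\le 2dh\,e^{-\tau^2/(2h\sigma^2)}.
\]
This is at most $\tfrac12$ once $\sigma^2=\tau^2/(2h\ln(4dh))$. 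On the event $U\in\U_\tau$, the $(R,\tau,\gamma/4)$-stability hypothesis (Definition~\ref{def:local_stability}) applied at $\Theta\in B_R(\Theta_0)$ gives $\sup_x\norm{F_{\Theta+U}(x)-F_\Theta(x)}_\infty<\gamma/4$. Hence the hypothesis of Lemma~\ref{lemma:pac} holds with probability at least $\tfrac12$.

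\textbf{Step 2 (KL term).} For two isotropic Gaussians with equal covariance,
\[
\KL(\mathcal Q\|\pi)=\frac{\norm{\Theta-\Theta_0}_2^2}{2\sigma^2}\le \frac{R^2}{2\sigma^2}=\frac{R^2 h\ln(4dh)}{\tau^2}.
\]
Substituting this into Lemma~\ref{lemma:pac} yields exactly \eqref{eq:general_ft_bound}.

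The only delicate point is Step~1. It needs a spectral-norm tail bound sharp enough to produce the constant $\ln(4dh)$. With the factor $2h$ in the matrix Gaussian tail, requiring $2dh\,e^{-\tau^2/(2h\sigma^2)}\le 1/2$ gives precisely this constant. Otherwise the argument is bookkeeping: check that $\sigma$ is fixed before seeing data, and that the lemma's event does not depend on $\Theta$.
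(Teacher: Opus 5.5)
Your proposal is correct and follows the paper's proof essentially step for step. It uses the same prior $\mathcal N(\Theta_0,\sigma^2 I)$ with $\sigma=\tau/\sqrt{2h\ln(4dh)}$, the Tropp spectral-norm tail bound with a union bound over the $d$ blocks to obtain the perturbation condition with probability at least $\tfrac12$, the equal-covariance Gaussian KL bounded by $R^2/(2\sigma^2)$, and uniformity over $B_R(\Theta_0)$ inherited from the single event of Lemma~\ref{lemma:pac}.
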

\begin{proof}
Set
$
\sigma=\frac{\tau}{\sqrt{2h\ln(4dh)}}
$
and choose the data-independent prior
$
\pi=\mathcal N(\Theta_0,\sigma^2 I).
$
For any $\Theta\in B_R(\Theta_0)$, define the posterior
$
\mathcal Q_\Theta=\mathcal N(\Theta,\sigma^2 I).
$
A draw from $\mathcal Q_\Theta$ can be written as $\Theta+U$, where
$U\sim\mathcal N(0,\sigma^2I)$. Write
$
U=\operatorname{vec}(\{U_i\}_{i=1}^d),
$
where $U_i\in\R^{m_i\times n_i}$.

By the Gaussian spectral-norm bound
\citep[Eq.~4.6]{Tropp_2011}, together with $m_i,n_i\leq h$, for every
$i\in[d]$ and $t\geq0$,
$$
\Pbb\!\left(\|U_i\|_2>t\right)
\leq
2h\exp\!\left(-\frac{t^2}{2h\sigma^2}\right).
$$
Applying a union bound with $t=\tau$ gives
$$
\Pbb\!\left(
\max_{i\in[d]}\|U_i\|_2\leq\tau
\right)
\geq
1-
2dh\exp\!\left(-\frac{\tau^2}{2h\sigma^2}\right).
$$
By the choice of $\sigma$,
$$
2dh\exp\!\left(-\frac{\tau^2}{2h\sigma^2}\right)
=
\frac12,
$$
and hence
$$
\Pbb\!\left(
\max_{i\in[d]}\|U_i\|_2\leq\tau
\right)
\geq
\frac12.
$$

Since $\F$ is
$(R,\tau,\frac{\gamma}{4})$-stable around $\Theta_0$, for every
$\Theta\in B_R(\Theta_0)$, the event
$
\max_{i\in[d]}\|U_i\|_2\leq\tau
$
implies
$$
\sup_{x\in\X}
\|F_{\Theta+U}(x)-F_\Theta(x)\|_\infty
<
\frac{\gamma}{4}.
$$
Therefore,
$$
\Pbb_U\!\left[
\sup_{x\in\X}
\|F_{\Theta+U}(x)-F_\Theta(x)\|_\infty
<
\frac{\gamma}{4}
\right]
\geq
\frac12,
$$
so the perturbation condition in Lemma~\ref{lemma:pac} is satisfied.

Because $\pi$ and $\mathcal Q_\Theta$ are Gaussian distributions with the
same covariance,
$$
\KL(\mathcal Q_\Theta\|\pi)
=
\frac{\|\Theta-\Theta_0\|_2^2}{2\sigma^2}.
$$
Since $\Theta\in B_R(\Theta_0)$,
$$
\KL(\mathcal Q_\Theta\|\pi)
\leq
\frac{R^2}{2\sigma^2}
=
\frac{R^2h\ln(4dh)}{\tau^2}.
$$
Substituting this bound into Lemma~\ref{lemma:pac} yields
$$
R_0(F_\Theta)
\leq
\widehat R_\gamma(F_\Theta)
+
4
\sqrt{
\frac{
\frac{R^2h\ln(4dh)}{\tau^2}
+
\ln\frac{6M}{\delta}
}{
M-1
}
}.
$$
On the event of Lemma~\ref{lemma:pac}, the bound holds for every $\Theta$
whose perturbation satisfies the required condition. Since the above argument
verifies this condition for every $\Theta\in B_R(\Theta_0)$, the bound holds
simultaneously for all $\Theta\in B_R(\Theta_0)$.
\end{proof}

\subsection{Local Perturbation Stability of Scattering}
\label{app:scattering_parameter_stability}

We verify here that the predefined scattering realization satisfies the local
parameter-stability condition used in
Corollary~\ref{coro:general_ft_bound}. We first establish local parameter stability of the predefined scattering
feature extractor. We then show that this stability extends to the full
predictor obtained by composing the scattering representation with a linear
classifier, as used in our experiments. Let $\mathcal P$ be a collection of
$P$ scattering paths, each of length at most $L$, and write
\[
\Theta_0
=
\operatorname{vec}
\bigl(
\{W_i^{(p)}\}_{p\in\mathcal P,\,i\in[d_p]}
\bigr)
\]
for the predefined scattering parameters, where $d_p$ denotes the length of
path $p$. We assume that the predefined wavelet operators and the final
low-pass operator are non-expansive,
\[
\|W_i^{(p)}\|_2\leq1,
\qquad
\|\Phi\|_2\leq1.
\]
Let $S_\Theta$ denote the corresponding scattering network obtained by
parameterizing the wavelet operators by $\Theta$, while keeping the modulus
and final low-pass operations fixed.

\begin{lemma}[Local perturbation stability of scattering]
\label{lemma:scattering_parameter_stability}
Let
$
\mathcal X=\{x:\|x\|_2\leq B\}.
$
Then, for any $R,\tau\geq0$, the scattering family $\{S_\Theta\}$ is
$(R,\tau,\epsilon)$-stable around $\Theta_0$ in the sense of
Definition~\ref{def:local_stability}, with
\[
\epsilon
=
\sqrt{P}\,B L\tau(1+R+\tau)^{L-1}.
\]
In particular, for any $\gamma>0$, let
$$
a_\gamma
:=
\frac{\gamma}{4\sqrt{P}BL},
\qquad
\tau_\gamma
:=
\frac{a_\gamma}{(1+R+a_\gamma)^{L-1}}.
$$
Then $\{S_\Theta\}$ is
$(R,\tau_\gamma,\frac{\gamma}{4})$-stable under bounded parameter
perturbations around $\Theta_0$.
\end{lemma}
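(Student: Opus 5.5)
We will bound, path by path, the difference between the perturbed and unperturbed scattering propagators by a telescoping argument, and then aggregate over the $P$ paths. Fix $\Theta\in B_R(\Theta_0)$ and $U\in\U_\tau$, and a path $p$ of length $d_p\le L$. The parameterized propagator along $p$ has the form
\[
U_\Theta[p]x=\rho\bigl(W_{d_p}\rho(\cdots\rho(W_1x)\cdots)\bigr),
\]
where $\rho=|\cdot|$ is the pointwise modulus and $W_i$ is the block of $\Theta$ for that path. The output coefficient is $S_\Theta[p]x=\Phi\,U_\Theta[p]x$. The two facts we use are that $\rho$ is $1$-Lipschitz with $\|\rho(z)\|_2=\|z\|_2$, and that every block of $\Theta$ obeys
\[
\|W_i\|_2\le\|W_i^{(p)}\|_2+\|W_i-W_i^{(p)}\|_F\le 1+R,
\]
since the spectral norm is dominated by the Frobenius norm and $\|\Theta-\Theta_0\|_2\le R$. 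Likewise each block of $\Theta+U$ has spectral norm at most $1+R+\tau$.

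\textbf{Telescoping step.} Let $h_k$ be the layer-$k$ activation using the perturbed blocks $W_i+U_i$ for $i\le k$ and the unperturbed blocks afterwards. Then $h_0$ corresponds to $\Theta$ and $h_{d_p}$ to $\Theta+U$. Consider consecutive hybrids, which differ only in layer $k$:
\begin{itemize}
\item The layers before $k$ are perturbed, so the input to layer $k$ has norm at most $(1+R+\tau)^{k-1}B$.
\item Swapping $W_k$ for $W_k+U_k$ changes the pre-activation by at most $\tau(1+R+\tau)^{k-1}B$, and $\rho$ does not enlarge this.
\item The subsequent unperturbed layers amplify the difference by at most $(1+R)^{d_p-k}\le(1+R+\tau)^{d_p-k}$.
\end{itemize}
Each swap therefore costs at most $\tau B(1+R+\tau)^{d_p-1}$. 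Summing the $d_p\le L$ swaps and applying $\|\Phi\|_2\le1$ gives
\[
\|S_{\Theta+U}[p]x-S_\Theta[p]x\|_2\le LB\tau(1+R+\tau)^{L-1}.
\]

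\textbf{Aggregation and the corollary.} Stacking the $P$ path outputs gives an $\ell_2$ bound of $\sqrt P$ times the per-path bound. This dominates the $\ell_\infty$ norm required in Definition~\ref{def:local_stability}, which yields the claimed $\epsilon$. One detail needs care: the stated $\epsilon$ is a non-strict bound, while the definition asks for strict inequality. The case $\tau=0$ is degenerate, and the strict inequality can be recovered by noting that the Frobenius-versus-spectral step or the $\ell_\infty\le\ell_2$ step is generically strict. At worst we shrink $\tau$ infinitesimally; I would flag this rather than belabor it. For the ``in particular'' clause, put $\tau=\tau_\gamma\le a_\gamma$. Then $(1+R+\tau_\gamma)^{L-1}\le(1+R+a_\gamma)^{L-1}$, so
\[
\epsilon\le\sqrt P BL\,\tau_\gamma(1+R+a_\gamma)^{L-1}=\sqrt P BL\,a_\gamma=\frac{\gamma}{4}.
\]

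The main obstacle is bookkeeping in the parameterization. Paths share prefixes, so one wavelet block may appear in several paths. In practice the blocks are convolution operators on multichannel signals, not generic matrices. We must make sure the per-block spectral-norm bounds from $\U_\tau$ and from $B_R$ apply to the operator actually used at each layer of each path. I would handle this by treating each path's chain as its own composition of the operators listed in $\Theta$. Shared operators only reduce the number of distinct blocks; they never increase the per-path Lipschitz constants used above.
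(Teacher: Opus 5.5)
Your proposal is correct and is essentially the paper's proof: the same Frobenius-to-spectral block bounds $1+R$ and $1+R+\tau$, and a layerwise estimate whose unrolled sum $\tau B\sum_{k=1}^{d_p}(1+R+\tau)^{k-1}(1+R)^{d_p-k}$ coincides with what the paper's recursion produces (your hybrid chain puts the perturbed layers before the swapped one rather than after), followed by $\|\Phi\|_2\le 1$, the $\sqrt{P}$ stacking with $\|\cdot\|_\infty\le\|\cdot\|_2$, and the identical monotonicity argument for $\tau_\gamma$. The strict-inequality issue you flag is genuine but belongs to the statement itself (at $\tau=0$ the strict condition fails outright), and the paper's proof likewise only establishes the bound with $\le\epsilon$; shrinking $\tau$ would not repair the claim for the given $\tau$ in any case.
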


\begin{proof}
Fix $\Theta\in B_R(\Theta_0)$ and
$U\in\mathcal U_\tau$. For each path $p\in\mathcal P$, write
\[
\Theta^{(p)}
=
\{V_i^{(p)}\}_{i=1}^{d_p},
\qquad
U^{(p)}
=
\{U_i^{(p)}\}_{i=1}^{d_p}.
\]
Since $\|\Theta-\Theta_0\|_2\leq R$, each parameter block satisfies
\[
\|V_i^{(p)}-W_i^{(p)}\|_2
\leq
\|V_i^{(p)}-W_i^{(p)}\|_F
\leq R.
\]
Using $\|W_i^{(p)}\|_2\leq1$, we therefore obtain
\begin{equation}
\|V_i^{(p)}\|_2\leq1+R,
\qquad
\|V_i^{(p)}+U_i^{(p)}\|_2\leq1+R+\tau.
\label{eq:scattering_filter_norm_bound}
\end{equation}

Consider a fixed path $p$ of length $d_p$. Define
\[
z_0=\widetilde z_0=x,
\]
and recursively
\[
z_i
=
\rho(V_i^{(p)}z_{i-1}),
\qquad
\widetilde z_i
=
\rho((V_i^{(p)}+U_i^{(p)})\widetilde z_{i-1}),
\]
where $\rho(z)=|z|$ is applied entrywise. Since the complex modulus is
non-expansive,
\begin{align}
\|\widetilde z_i-z_i\|_2
&\leq
\|(V_i^{(p)}+U_i^{(p)})\widetilde z_{i-1}
-
V_i^{(p)}z_{i-1}\|_2
\nonumber\\
&\leq
\|V_i^{(p)}+U_i^{(p)}\|_2
\|\widetilde z_{i-1}-z_{i-1}\|_2
+
\|U_i^{(p)}\|_2\|z_{i-1}\|_2.
\label{eq:scattering_perturbation_recursion}
\end{align}

Moreover, by \eqref{eq:scattering_filter_norm_bound},
\[
\|z_{i-1}\|_2
\leq
(1+R)^{i-1}\|x\|_2.
\]
Hence
\[
\|\widetilde z_i-z_i\|_2
\leq
(1+R+\tau)\|\widetilde z_{i-1}-z_{i-1}\|_2
+
\tau(1+R)^{i-1}\|x\|_2.
\]
Since $\widetilde z_0=z_0$, iterating this inequality yields
\begin{align}
\|\widetilde z_{d_p}-z_{d_p}\|_2
&\leq
\tau\|x\|_2
\sum_{j=0}^{d_p-1}
(1+R+\tau)^{d_p-1-j}(1+R)^j
\nonumber\\
&\leq
d_p\tau(1+R+\tau)^{d_p-1}\|x\|_2.
\label{eq:single_path_scattering_perturbation}
\end{align}

The final low-pass operator is non-expansive, so
\[
\|
S_{\Theta+U}[p](x)-S_\Theta[p](x)
\|_2
\leq
d_p\tau(1+R+\tau)^{d_p-1}\|x\|_2.
\]
Using $d_p\leq L$ and $\|x\|_2\leq B$ gives
\[
\|
S_{\Theta+U}[p](x)-S_\Theta[p](x)
\|_2
\leq
B L\tau(1+R+\tau)^{L-1}.
\]

Finally, stacking the outputs of the $P$ scattering paths gives
\begin{align}
\|
S_{\Theta+U}(x)-S_\Theta(x)
\|_\infty
&\leq
\|
S_{\Theta+U}(x)-S_\Theta(x)
\|_2
\nonumber\\
&\leq
\left(
\sum_{p\in\mathcal P}
\|
S_{\Theta+U}[p](x)-S_\Theta[p](x)
\|_2^2
\right)^{1/2}
\nonumber\\
&\leq
\sqrt{P}\,B L\tau(1+R+\tau)^{L-1}.
\end{align}
This bound holds uniformly over
$x\in\mathcal X$, $\Theta\in B_R(\Theta_0)$, and
$U\in\mathcal U_\tau$, proving the result.
It remains to verify the final claim. Since
$
(1+R+a_\gamma)^{L-1}\geq1,
$
we have $\tau_\gamma\leq a_\gamma$. Therefore,
$$
1+R+\tau_\gamma
\leq
1+R+a_\gamma.
$$
Substituting the definition of $\tau_\gamma$ into the perturbation bound gives
\begin{align}
\epsilon
&=
\sqrt{P}\,BL\tau_\gamma
(1+R+\tau_\gamma)^{L-1}
\nonumber\\
&\leq
\sqrt{P}\,BL
\frac{a_\gamma}{(1+R+a_\gamma)^{L-1}}
(1+R+a_\gamma)^{L-1}
\nonumber\\
&=
\sqrt{P}\,BL a_\gamma
=
\frac{\gamma}{4}.
\end{align}
\end{proof}

\begin{corollary}[Stability with a linear classifier]
\label{coro:scattering_linear_stability}
Let $F_{\Theta,A}(x)=A S_\Theta(x),$
and let $(\Theta_0,A_0)$ denote the initialization of the full predictor.
Assume that $A_0$ is chosen independently of the training sample. Then the
family $\{F_{\Theta,A}\}$ is locally parameter-stable around
$(\Theta_0,A_0)$.

More precisely, for
$
\|(\Theta,A)-(\Theta_0,A_0)\|_2\leq R
$
and perturbations
$
\|(U,V)\|_2\leq\tau,
$
we have
\[
\|F_{\Theta+U,A+V}(x)-F_{\Theta,A}(x)\|_\infty
\leq
(\|A_0\|_2+R)\epsilon_S
+
\tau\sqrt{P}\,B(1+R+\tau)^L,
\]
where
\[
\epsilon_S
=
\sqrt{P}\,BL\tau(1+R+\tau)^{L-1}
\]
is the perturbation bound of
Lemma~\ref{lemma:scattering_parameter_stability}.
\end{corollary}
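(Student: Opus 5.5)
The plan is to split the perturbation of the full predictor into a feature-perturbation term and a classifier-perturbation term, and then reuse Lemma~\ref{lemma:scattering_parameter_stability} together with a uniform bound on the scattering features. Fix $(\Theta,A)$ with $\|(\Theta,A)-(\Theta_0,A_0)\|_2\le R$ and a perturbation $(U,V)$ with $\|(U,V)\|_2\le\tau$. Then $\|\Theta-\Theta_0\|_2\le R$, $\|A-A_0\|_2\le\|A-A_0\|_F\le R$, and $\|V\|_2\le\tau$. In addition, every block of $U$ satisfies $\|U_i^{(p)}\|_2\le\|U_i^{(p)}\|_F\le\tau$, so $U\in\mathcal U_\tau$. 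I will use the decomposition
\[
F_{\Theta+U,A+V}(x)-F_{\Theta,A}(x)
= A\bigl(S_{\Theta+U}(x)-S_\Theta(x)\bigr)+V\,S_{\Theta+U}(x).
\]
The $\infty$-norm is bounded by the Euclidean norm, so it suffices to bound each term in $\|\cdot\|_2$.

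\emph{First term.} By the triangle inequality, $\|A\|_2\le\|A_0\|_2+R$. Since $\Theta\in B_R(\Theta_0)$ and $U\in\mathcal U_\tau$, the proof of Lemma~\ref{lemma:scattering_parameter_stability} gives $\|S_{\Theta+U}(x)-S_\Theta(x)\|_2\le\epsilon_S$. That proof actually bounds the Euclidean norm of the stacked output, not only the $\infty$-norm, which is what is needed here. This term is therefore at most $(\|A_0\|_2+R)\epsilon_S$.

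\emph{Second term.} We have $\|V\,S_{\Theta+U}(x)\|_2\le\tau\|S_{\Theta+U}(x)\|_2$, so I need a uniform bound on the perturbed features. For each path $p$ of length $d_p\le L$, use the modulus recursion from the lemma's proof: $\rho$ preserves norms and $\|V_i^{(p)}+U_i^{(p)}\|_2\le1+R+\tau$ by \eqref{eq:scattering_filter_norm_bound}. This gives $\|\widetilde z_{d_p}\|_2\le(1+R+\tau)^{d_p}\|x\|_2\le(1+R+\tau)^L B$. The inequality uses $1+R+\tau\ge1$ to pass from exponent $d_p$ to exponent $L$. Applying the non-expansive low-pass operator $\Phi$ and stacking the $P$ paths yields $\|S_{\Theta+U}(x)\|_2\le\sqrt P\,B(1+R+\tau)^L$. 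Summing the two contributions gives the stated bound, uniformly in $x\in\mathcal X$.

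\emph{Main obstacle: bookkeeping of norms.} The corollary measures the distance $\|(\Theta,A)-(\Theta_0,A_0)\|_2$ in the joint Euclidean (Frobenius) norm, while Lemma~\ref{lemma:scattering_parameter_stability} uses blockwise spectral bounds. The fix is the chain spectral $\le$ Frobenius $\le$ joint norm, applied blockwise to $\Theta-\Theta_0$, $A-A_0$, $U$ and $V$. I also need to check that the lemma's Euclidean, not just $\infty$-norm, feature bound is available; it is, since the lemma's proof passes through it. The qualitative claim of local stability then follows because the right-hand side tends to $0$ as $\tau\to0$ for fixed $R$. So one can choose $\tau$ small enough to reach any target $\gamma/4$, in the same way as the choice of $\tau_\gamma$ in Lemma~\ref{lemma:scattering_parameter_stability}.
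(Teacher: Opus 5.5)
Your proof is correct and follows the paper's argument: the same decomposition $A\bigl(S_{\Theta+U}(x)-S_\Theta(x)\bigr)+V\,S_{\Theta+U}(x)$, the bound $\|A\|_2\le\|A_0\|_2+R$, the Euclidean feature-perturbation bound taken from the lemma's proof, and the uniform feature bound $\|S_{\Theta+U}(x)\|_2\le\sqrt{P}\,B(1+R+\tau)^L$. You also spell out two steps the paper leaves implicit: the spectral-to-Frobenius-to-joint norm bookkeeping, and the derivation of the feature bound from the norm-preserving modulus and the non-expansive low-pass operator.
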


\begin{proof}
Using
\[
F_{\Theta+U,A+V}(x)-F_{\Theta,A}(x)
=
A\bigl(S_{\Theta+U}(x)-S_\Theta(x)\bigr)
+
V S_{\Theta+U}(x),
\]
we obtain
\[
\|F_{\Theta+U,A+V}(x)-F_{\Theta,A}(x)\|_\infty
\leq
\|A\|_2
\|S_{\Theta+U}(x)-S_\Theta(x)\|_2
+
\|V\|_2\|S_{\Theta+U}(x)\|_2.
\]
Since
$
\|A\|_2\leq\|A_0\|_2+R
$
and
\[
\|S_{\Theta+U}(x)\|_2
\leq
\sqrt{P}\,B(1+R+\tau)^L,
\]
the result follows from
Lemma~\ref{lemma:scattering_parameter_stability}.
\end{proof}

\section{Experimental Details and Results}

\subsection{Texture Synthesis from Scattering Statistics}
\label{app:scattering_synthesis}

We use the microcanonical gradient-descent approach of
\citet{bruna2018Microcanonical} to synthesize textures whose prescribed
statistics match those of a reference image. The purpose of this experiment is
to qualitatively assess how much perceptually relevant texture information is
retained by the scattering statistics used in our disentangler.

Let $x$ be a reference texture. We compute the
two-dimensional scattering transform and spatially average each scattering
coefficient. Denoting this empirical statistic vector by
$$
\widehat S_J(x)
=
\operatorname{Avg}_u S_J(x)(u),
$$
texture synthesis seeks an image $z$ whose statistics are close to those of
$x$. This spatial averaging is the finite-image counterpart of the process
statistics discussed in Section~\ref{subsec:texture_model}. In the
implementation, the averaging is performed directly over the spatial
coordinates of the scattering coefficients.

For grayscale images, we minimize
\begin{equation}
\label{eq:scattering_synthesis_objective}
\mathcal L_{\mathrm{scat}}(z)
=
\frac{1}{D}
\left\|
\widehat S_J(z)-\widehat S_J(x)
\right\|_2^2,
\end{equation}
where $D$ is the dimension of the scattering statistic vector. For RGB
images, we additionally preserve the correlations between color channels.
Let $C(x)\in\R^{3\times3}$ denote the correlation matrix of the centered and
standardized RGB channels. We then optimize
\begin{equation}
\label{eq:scattering_synthesis_rgb_objective}
\mathcal L_{\mathrm{syn}}(z)
=
\mathcal L_{\mathrm{scat}}(z)
+
\left\|
C(z)-C(x)
\right\|_F^2.
\end{equation}
Thus, the synthesis procedure constrains the generated image through the
selected scattering statistics, together with a simple color constraint for
RGB textures. This is the objective implemented in our synthesis procedure.

The optimization is initialized from an independent random image. More
precisely, each channel is initialized with Gaussian noise having the same
mean and standard deviation as the corresponding channel of the reference
image. We optimize the image pixels using
L-BFGS with a strong-Wolfe line search while keeping the scattering transform
fixed. The scattering coefficients are
computed using the two-dimensional scattering implementation in
Kymatio~\citep{kymatio}.

After optimization, we apply channel-wise histogram matching between the
synthesized and reference images for visualization. This step matches the
marginal intensity distributions of the synthesized image to those of the
reference and is not part of the optimization objective above.

Because the optimization starts from an independent random image, similarity
between the synthesized and reference textures cannot be attributed to copying
their spatial arrangements. Rather, it indicates that matching the selected
statistics is sufficient to recover characteristic perceptual structures of
the reference texture. We therefore use these synthesis results as qualitative
evidence that the chosen scattering statistics retain meaningful texture
information.

\subsection{Texture Classification Setup}
\subsubsection{Datasets and Sampling}
\label{app:datasets_split}
To evaluate sample efficiency, we train on class-stratified subsets ranging
from the full training set down to $0.2\%$ for MinC-2500, $0.5\%$ for
EuroSAT, $10\%$ for DTD, and \( 2\%\) for SuroTex.
DTD has ten official train/val/test partitions of its 5,640 images (47 classes with 120 examples each); each partition allocates 40 images per class to each split; we merge the the offiical train and val halves of a partition into a single 3,760 image pool and re-split it with an 80/20 train/val split, using class-stratified splitting.
The official 1,880 image test split is not modified.
For DTD, we evaluate five of the
ten standard partitions to balance computational cost with robustness across
dataset splits.
The remaining three datasets have no official train/val/test split, so we construct splits as follows.
First, we separate a class-stratified test split, taking 15\% of the data for MinC-2500 and 20\% for SuroTex and EuroSat.
The remaining data is split 80/20 into training and validation sets.
Subsampling is applied only to the training dataset, so the validation and test sets are identical across training dataset sizes.
At the smallest training size, the model is trained on 4 images per class for MinC-2500, 8-9 for EuroSAT, 6 for DTD, and a single image per class for SuroTex.
Validation accuracy is used for checkpoint selection and early stopping, and all reported numbers from from the held out test set.

\subsubsection{Model Architectures and Baselines}
\label{app:architectures}
We next describe the model architectures used in our experiments, including
our trainable scattering network, the three baselines, and the common
classification head, followed by a discussion of Parametric Scattering
Networks \citep{gauthier_parametric_2022}.

\paragraph{Our trainable scattering architecture.}
We describe the convolutional implementation used in our experiments to realize
the predefined scattering transform as a trainable neural network. The
construction is designed so that, at $\Theta=\Theta_0$, the network implements
the scattering transform exactly, while its wavelet operators are represented
as spatial convolutional parameters that can subsequently be fine-tuned.

\emph{Spatial implementation of scattering.}
Standard implementations such as Kymatio~\citep{kymatio} compute scattering
coefficients efficiently using fixed wavelet filters in the Fourier domain.
For our purpose, we instead require the wavelet operations to appear explicitly
as trainable convolutional layers. We therefore implement scattering in the
spatial domain using steerable Morlet wavelets
\citep{Steerable,bruna_invariant_2013} together with the fast multiscale
wavelet transform~\citep{WaveletTourBook}.

Let $\{g_k\}_{k=0}^{L}$ denote the one-step filter bank at the finest scale
$j=0$. Here $g_0$ is the low-pass filter, while $g_1,\ldots,g_L$ are band-pass
Morlet filters, $g_k$ having orientation $\theta_{k-1}=(k-1)\pi/L$ as in
Definition~\ref{def:wavelet-transform}. For an input $x$, define the
scattering block
$
Tx(u,k)=(x* g_k)(2u),
$
so that each step consists of filtering followed by subsampling by a factor of
two.

Repeated application of this operation implements the multiscale transform.
Because the input is subsampled after each step, applying the same finest-scale
filters at the next resolution corresponds to filtering at a coarser scale in
the coordinates of the original image. This avoids explicitly constructing
larger spatial filters for successive dyadic scales. In this sense, the
multiscale filters of Definition~\ref{def:wavelet-transform} are the
equivalent filters of the cascade: $j$ low-pass steps with $g_0$ followed by
$g_k$ realize the wavelet $\psi_{2^jr_{k-1}}$, and $J$ low-pass steps realize
the low-pass filter $\phi_J$, for $0\leq j<J$.

\emph{Scattering cascade.}
To obtain the scattering transform, the fast wavelet cascade is propagated
along the band-pass branches. For an intermediate representation $z$, the
low-pass response
$
(z* g_0)(2u)
$
produces an averaged component, while each band-pass response is followed by
the complex modulus,
$$
(z* g_k)(2u)
\longmapsto
|(z* g_k)(2u)|,
\qquad k=1,\ldots,L,
$$
before being passed to subsequent blocks.
Following standard neural network architectures, we include an optional batch normalization layer between blocks, immediately following the complex modulus.
The inclusion of the batch normalization layer was part of the architecture sweep and is reported in \cref{tab:arch_config_deep_scatter}.

A path $p=(\lambda_1,\ldots,\lambda_q)\in\mathcal P_J$ with
$\lambda_i=2^{j_i}r_{k_i}$ is realized by applying the band-pass filter
$g_{k_i+1}$, followed by the modulus, at block $j_i+1$, and the low-pass
filter $g_0$ at all other blocks. This recursively realizes the wavelet-modulus
cascade of Definition~\ref{def:scattering-transform}. Since
$j_1<\cdots<j_q$, only frequency-decreasing paths are
computed~\citep{bruna_invariant_2013}. Cascading the
blocks therefore computes the scattering tree using small spatial kernels and
progressively subsampled feature maps.

\emph{Convolutional realization and initialization.}
Suppose a scattering layer receives $C_{\mathrm{in}}$ input channels. In the classical
scattering transform, the wavelet filter bank is applied independently to each
channel. We realize this operation with a convolutional layer whose weights at
initialization satisfy
$$
K_{(c,k),c}=g_k,
\qquad
K_{(c,k),c'}=0
\quad\text{for }c'\neq c,
$$
where $(c,k)$ indexes the output obtained by applying filter $g_k$ to input
channel $c$. The resulting convolutional operator is block diagonal across
channels and therefore reproduces independent wavelet filtering exactly.
Together with the prescribed modulus, low-pass, and subsampling operations,
this defines the scattering realization $\Theta_0$.

\emph{Learnable channel mixing.}
Classical scattering does not mix information across different feature
channels. Motivated by previous learnable scattering architectures
\citep{zarka_separation_2021,DeepRotoScattering,guthphase}, we enlarge the
parameterization so that each output may receive contributions from a
prescribed neighborhood of input channels. Let $\mathcal N(c)$ denote the
channels allowed to contribute to descendants of channel $c$. The trainable
operation takes the form
$$
z_{c,k}(u)
=
\sum_{c'\in\mathcal N(c)}
(x_{c'}* K_{(c,k),c'})(2u).
$$

At initialization, the off-diagonal kernels remain zero,
$$
K_{(c,k),c}=g_k,
\qquad
K_{(c,k),c'}=0
\quad\text{for }c'\neq c,
$$
so the enlarged network still implements the predefined scattering transform
exactly. During fine-tuning, both the wavelet kernels and the permitted
off-diagonal kernels may change, allowing the representation to adapt to the
classification task. We restrict the mixing neighborhood rather than using
fully dense channel interactions in order to limit the number of trainable
parameters and preserve the local organization inherited from the scattering
tree.

Figure~\ref{fig:architecture} illustrates the resulting architecture. Thus,
$\Theta_0$ corresponds exactly to the predefined scattering computation,
whereas fine-tuning moves within a larger convolutional parameterization
initialized at this algorithmic realization.

\begin{figure}[t]
    \centering
    \includegraphics[width=0.95\linewidth]{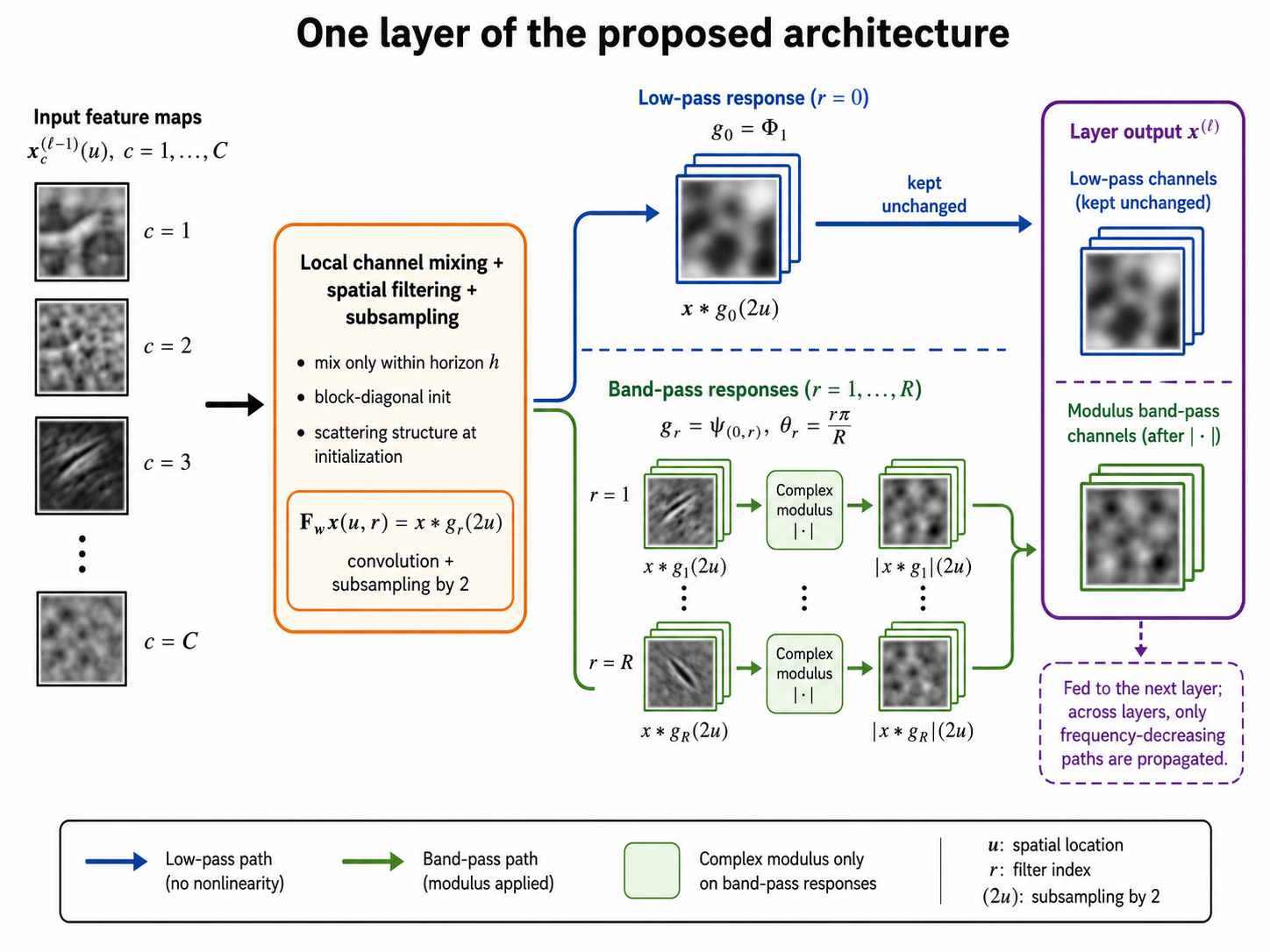}
    \caption{
    Trainable scattering architecture. At initialization, the convolutional
    operators reproduce independent wavelet filtering and hence the predefined
    scattering transform. Fine-tuning allows the wavelet filters and permitted
    cross-channel interactions to adapt from data.
    }
    \label{fig:architecture}
\end{figure}

\paragraph{Fixed and random-initialization baselines.}
The fixed scattering baseline is the predefined realization $\Theta_0$ used
without fine-tuning. The random-initialization baseline uses the same trainable
architecture as our model, but initializes its convolutional parameters
randomly instead of from the wavelet filters.
\paragraph{ScatNet B.A.}
We also compare with the learnable scattering architecture of
\citet{zarka_separation_2021}, denoted ScatNet B.A. Its representation is
computed as
$$
S_C
=
\prod_{j=1}^{J}
\left(F_j^T\rho_tF_j\right)
\left(P_j\,\mathrm{BN}\,\rho_rF_w\right).
$$
Here, $F_w$ denotes the fixed spatial wavelet transform, $\rho_r$ the
scattering nonlinearity, and $\mathrm{BN}$ channel-wise normalization.
The learned $1\times1$ projection $P_j$ reduces the scattering channels,
while $F_j$ is a learned $1\times1$ convolutional tight frame. The latter
expands the channel representation, applies the soft-thresholding
$\rho_t$, and projects it back through $F_j^T$. Thus, the spatial wavelet
filters remain fixed, while learning acts on transformations across the
scattering channels. Similar to our method, ScatNet B.A. includes batch normalization layers between scattering blocks, which improves training stability.
The interpretation of these learned operators in terms
of separation and concentration is developed in
\citet{zarka_separation_2021}.
\paragraph{Classification head.}

For all models, the final representation is passed through batch normalization
and then to a single-layer linear classifier.
When needed, global average
pooling is applied before the classifier to reduce the spatial dimensions.
Under the hypothesis that the method is a feature extractor which successfully disentangles representations between classes, a simple, linear classifier is sufficient to classify examples from their features. And the batch normalization layers are included for optimization stability and
consistency with standard training practice.

\paragraph{Parametric Scattering Networks.} \label{app:parametric_scattering}
\citet{gauthier_parametric_2022} propose another approach to learnable
scattering in which each wavelet is constrained to remain a Morlet wavelet and
learning is restricted to its scale, orientation, frequency, and aspect-ratio
parameters, rather than directly optimizing the values of the convolutional
kernels. Thus, the scattering filters are adapted through a low-dimensional
parameterization with only a small number of trainable degrees of freedom. In contrast, our model treats the spatial convolutional kernels as
trainable parameters and additionally permits cross-channel interactions,
resulting in a substantially more flexible, CNN-like parameterization.
Because our experiments focus on fine-tuning an overparameterized
convolutional network around the exact scattering realization, rather than a
low-dimensional wavelet parameterization, we do not include the parametric
Morlet model as a primary baseline.

\subsubsection{Training, Evaluation, and Reproducibility}
\label{app:training}
We review our training procedure and provide information to reproduce our numerical results, including hyperparameters and random seeds.
The implementation is publicly available on GitHub.\footnote{
\url{https://github.com/yangl1n/specific_algo_inter_on_texture}
}

\paragraph{Training Procedure}

The training procedure differs slightly between baseline methods.
Let \( \theta_{\psi} \) denote the parameters contained in the filters of the CNN (either the scattering filters or randomly initialized ones), and \( \theta_c\) denote the parameters contained in the classifier.
Let \( h_{\theta} \) denote the model with such parameters and \( \{ x_n, y_n \}_{n=1}^N\) be the set of training data.
For the randomly initialized model, the model was trained using standard, unconstrained gradient descent.
The minimization problem was \( \min_{\theta_{\psi}, \theta_c} \sum_{n} L(h_{\theta} (x_n), y_n) \).
For the initial scattering model, the filters were frozen and only the classifier head was trained.
The minimization problem was \( \min_{\theta_c} \sum_{n} L(h_{\theta} (x_n), y_n) \).
For the fine-tuned scattering model, the training proceeds in two phases. The first phase is identical to the training for the initial scattering model.
The second phase allows end-to-end training but introduces a regularization term to penalize distance that the parameters move from their initial values.
The minimization problem in the second phase is \( \min_{\theta_{\psi}, \theta_c} \sum_{n} L(h_{\theta} (x_n), y_n) + \lambda \| \theta_{\psi} - \theta_{\psi, 0} \|_2 \) where \( \theta_{\psi, 0} \) denotes the initial parameters of the scattering transform and \( \lambda \) controls the strength of the fine-tuning regularization.
For the ScatNet B.A., the training procedure from \citet{zarka_separation_2021} was used.
This is analogous to the training of the initial scattering transform, and the minimization problem is \( \min_{\theta_c} \sum_{n} L(h_{\theta} (x_n), y_n) \).

\paragraph{Architecture and Hyperparameters}

The model architecture used in experiments differs between datasets.
Architecture parameters including depth of network, number of wavelet orientations, the number of channels that were mixed by each wavelet filter, kernel size, per-block batch normalization, and whether a low-pass filter and global averaging are applied  before the classifier, were chosen by a sweep over hyperparameters on each dataset.
The learning rate was also tuned; separate learning rates are used for the classifier and the feature extractor.
All four reported models use a kernel size of 5 pixels, 4 wavelet orientations, a complex modulus as the linearity (as opposed to ReLU), and include a low-pass filter before passing features to the classifier.
In the sweep, larger kernels,  more wavelet orientations, and removing the final low-pass filter each increased accuracy on their own on some datasets, but by less than adding depth did, or by an amount that could not be separated from seed noise, while increasing the parameter count drastically, so none of them were adopted.
Depth (4 to 6 blocks), mixing width, global averaging, and per-block batch normalization vary by dataset.
The feature-extractor learning rate is \( 1.0 \times 10^{-4} \) on every dataset.
The classifier learning rate \( 0.001 \) on DTD, MinC-2500, and EuroSAT; on SuroTex, a learning-rate sweep selected \( 0.01 \).
The classifier's learning rate is reduced by half during fine-tuning.
For all datasets, a batch normalization layer was included between the scattering transform and the classifier, and a single fully-connected linear layer was used as the classifier head.
The randomly initialized network shares an architecture with the proposed method's architecture, so they are reported as a single row.
It is trained jointly in a single phase, with one learning rate for all parameters.

For the baseline method ScatNet B.A., none of the datasets reported here were used in  \citet{zarka_separation_2021}; we took the architecture used for ImageNet as a starting point and performed a sweep over architecture and hyperparameters. The following parameters were found to perform the best on all datasets: using 8 wavelet orientations, a kernel size of 1 for projection layers and frame layers, and applying a batch norm layer between scattering operators.
We also swept the depth of the network (number of scattering blocks), type of non-linearity, frame widths, projection widths, and learning rate.

We report the full architecture details for each dataset in \cref{tab:arch_config_deep_scatter,tab:arch_config_sep_conc_deepnets_a,tab:arch_config_sep_conc_deepnets_b}, excluding parameters that are the same across all datasets.

\begin{table}[t]
\centering
\caption[Architecture configuration for proposed method.]{Architecture configuration for proposed method: includes hyper-parameters that differ across datasets.
Rows sharing a configuration are merged (their architectures listed together). 
Values that are not listed were help fixed across all datasets (for example, angles (L) = 4, and low-pass last was always used).}
\label{tab:arch_config_deep_scatter}
\small
\begin{tabular}{lccccccc}
\toprule
Dataset & Params & Blocks & GAP & Mix. hor. & BN & LR step & WD \\
\midrule
DTD & 2.74M & 6 & True & 3 & False & 50 & 0.005 \\
EuroSat & 2.08M & 4 & False & 27 & False & 20 & 0.0001 \\
MinC-2500 & 2.56M & 6 & True & 3 & True & 50 & 0.005 \\
SuroTex & 112K & 4 & True & 3 & True & 20 & 0.005\\
\bottomrule
\end{tabular}
\end{table}

\begin{table}[t]
\centering
\caption[Architecture configuration for ScatNet B.A., part 1 of 2.]{Architecture configuration for ScatNet B.A., part 1 of 2: hyper-parameters that differ across the reported models. Rows sharing a configuration are merged (their architectures listed together).
Values that are not listed were held fixed across all datasets (for example, batch norm was always used).}
\label{tab:arch_config_sep_conc_deepnets_a}
\small
\begin{tabular}{lcccc}
\toprule
Dataset & Params & Blocks & Nonlin. & P proj. \\
\midrule
DTD & 26.8M & 6 & relu & 64/128/256/512/512/512 \\
EuroSat & 24.1M & 5 & softshrink & 64/128/256/512/512 \\
MinC-2500 & 9.47M & 4 & relu & 64/128/256/512 \\
SuroTex & 37.2M & 6 & softsrhink & 64/128/256/512/512/512 \\
\bottomrule
\end{tabular}
\end{table}

\begin{table}[t]
\centering
\caption{Architecture configuration for ScatNet B.A., part 2 of 2.}
\label{tab:arch_config_sep_conc_deepnets_b}
\small
\begin{tabular}{lccccc}
\toprule
Dataset & Frame w. & LR &  LR step & Epochs & Batch \\
\midrule
DTD & 512/1024/1024/2048/2048/2048 & 0.01 & 60 & 200 & 128 \\
EuroSat & 1024/2048/4096/8192/8192 & 0.01 & 70 & 300 & 128 \\
MinC-2500 & 512/1024/1024/2048 & 0.01 & 60 & 200 & 32 \\
SuroTex & 1024/2048/4096/8192/8192/8192 & 0.001 & 70 & 300  & 25 \\
\bottomrule
\end{tabular}
\end{table}

\paragraph{Statistical significance and reproducibility}
Each configuration was independently run across three different random seeds (42, 43, 44), to capture variability in initialization and dataset sampling.
Each figure of results displays the statistical significance.
\cref{fig:acc-vs-train-size,fig:admissibility} displays the mean test accuracy across seeds with a shaded band of \( \pm 1 \) standard deviation.
\cref{fig:filter-dist-vs-gen-gap,fig:test-acc-vs-gen-gap,fig:gen-gap-weight-decay,fig:gen-gap-anchor,fig:rho-vs-filter-distance} plot seed means only.
\cref{fig:skew-neural-collapse} shows the seed mean with \( \pm 1 \) standard deviation error bars on both axes.
Tabular results with standard deviation are reported in \Cref{app:more-results}.

\paragraph{Use of existing assets} We use existing assets of data and code in our paper.
DTD is shared without a license; we used the current version DTD-R1.0.1, and is available open-source from the creators \citep{cimpoi14describing}.
EuroSat is shared under an MIT license \citep{helber2019eurosat}.
MinC-2500 and SuroTex are shared under CC BY 4.0 licenses \citep{bell2015material,putra2025surotex}. The texture-generation code builds on the implementation released with
\citet{brochard_generalized_2021}, which is under an Apache-2.0 license.
We also use existing code from \citet{zarka_separation_2021} to implement the ScatNet B.A. baseline, with minor updates to allow training on a subset of data and report additional statistics, such as within-class variance.
The original code is available at \url{https://github.com/j-zarka/separation_concentration_deepnets/} and is shared under a BSD 3-clause license that allows modification and private and commercial use.

\subsection{Texture Classification Results}\label{app:more-results}

\subsubsection{Test Accuracy versus Generalization Gap}

\begin{figure}
    \centering
    \includegraphics[width=1.0\linewidth]{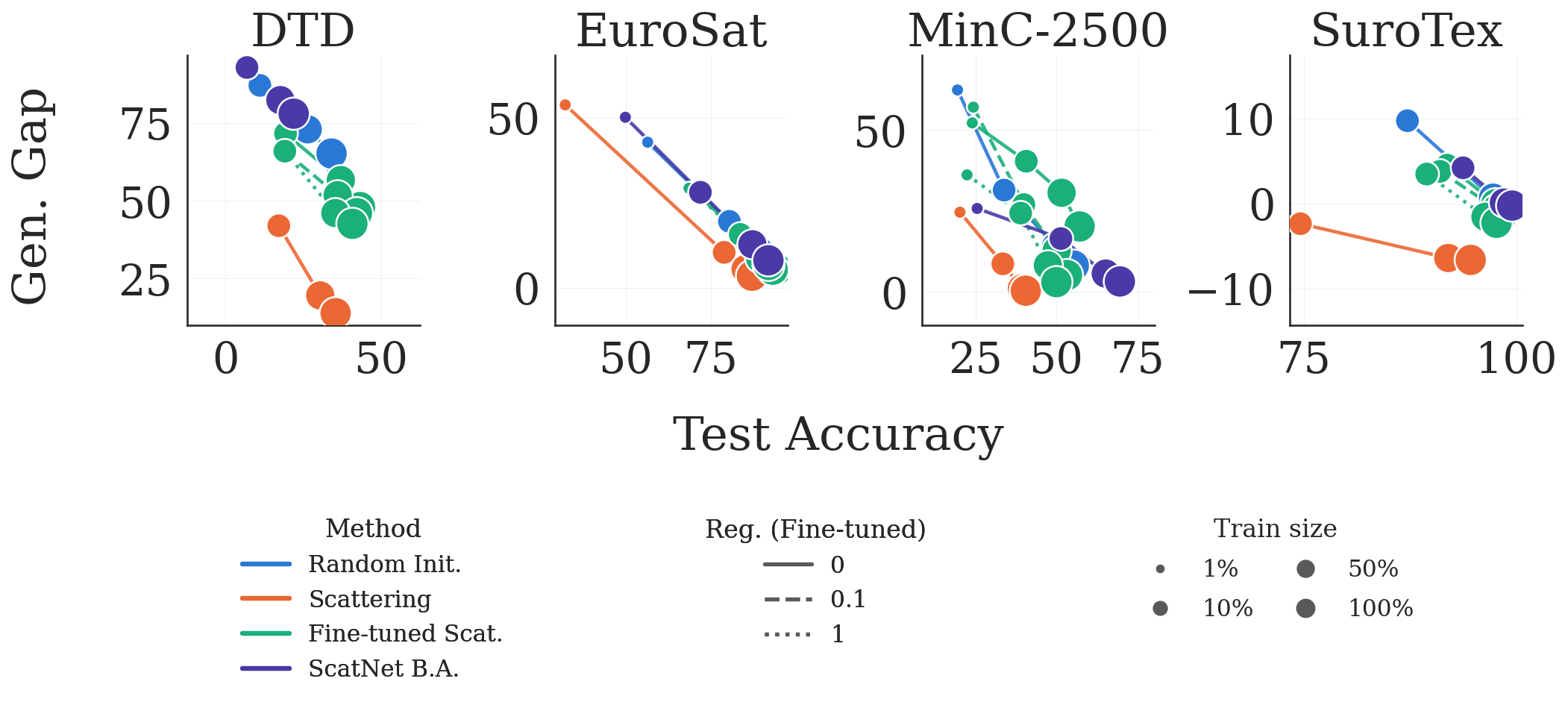}
    \caption{Test accuracy versus generalization gap. Color denotes method, line style denotes regularization strength, and marker size denotes percentage of training data used. Increasing regularization strength decreases generalization error on DTD, MinC-2500, and SuroTex. }
    \label{fig:test-acc-vs-gen-gap}
\end{figure}

\Cref{fig:test-acc-vs-gen-gap} displays test accuracy versus generalization gap.
Generalization gap is calculated as test accuracy minus training accuracy.
The ideal model lies in the bottom right of each plot, which would indicate high accuracy and a low generalization gap.
Across datasets and methods, a lower generalization gap is correlated with higher test accuracy.
The original scattering transform has the smallest generalization gap across all four datasets. 

This matches standard generalization bounds, which depend on the complexity of the hypothesis class.
The scattering network without fine-tuning only trains the linear classifier, so it has a small fraction of the number of parameters of the other methods and its associated hypothesis class has much lower complexity.
Increasing regularization strength, corresponding to decreased fine-tuning distance, also decreases the generalization gap on DTD, MinC-2500, and SuroTex.
This observation also aligns with \cref{coro:general_ft_bound}, which predicts a smaller fine-tuning distance will correlate with lower generalization gap.
Regularization does not have an effect on the fine-tuned scattering network on EuroSat, where all of the methods achieve near perfect accuracy on the training dataset.
This suggests that the initial wavelet parameters already perform well and do not need to move during training, decreasing the effect of anchoring the parameters near their initial value.
Finally, the fine-tuned scattering network improves on the accuracy of the original scattering transform. 
The fine-tuned scattering network also outperforms the randomly initialized model in accuracy and generalization gap, with a stronger advantage when training data is limited.
This also supports \cref{coro:general_ft_bound}; the generalization gap can be controlled either by reducing the fine-tuning distance or by increasing the sample size, so the fine-tuned model has a larger advantage when training data is limited.

\subsubsection{Regularization Ablation}\label{app:regularization}

\begin{figure}
    \centering
    \includegraphics[width=1.0\linewidth]{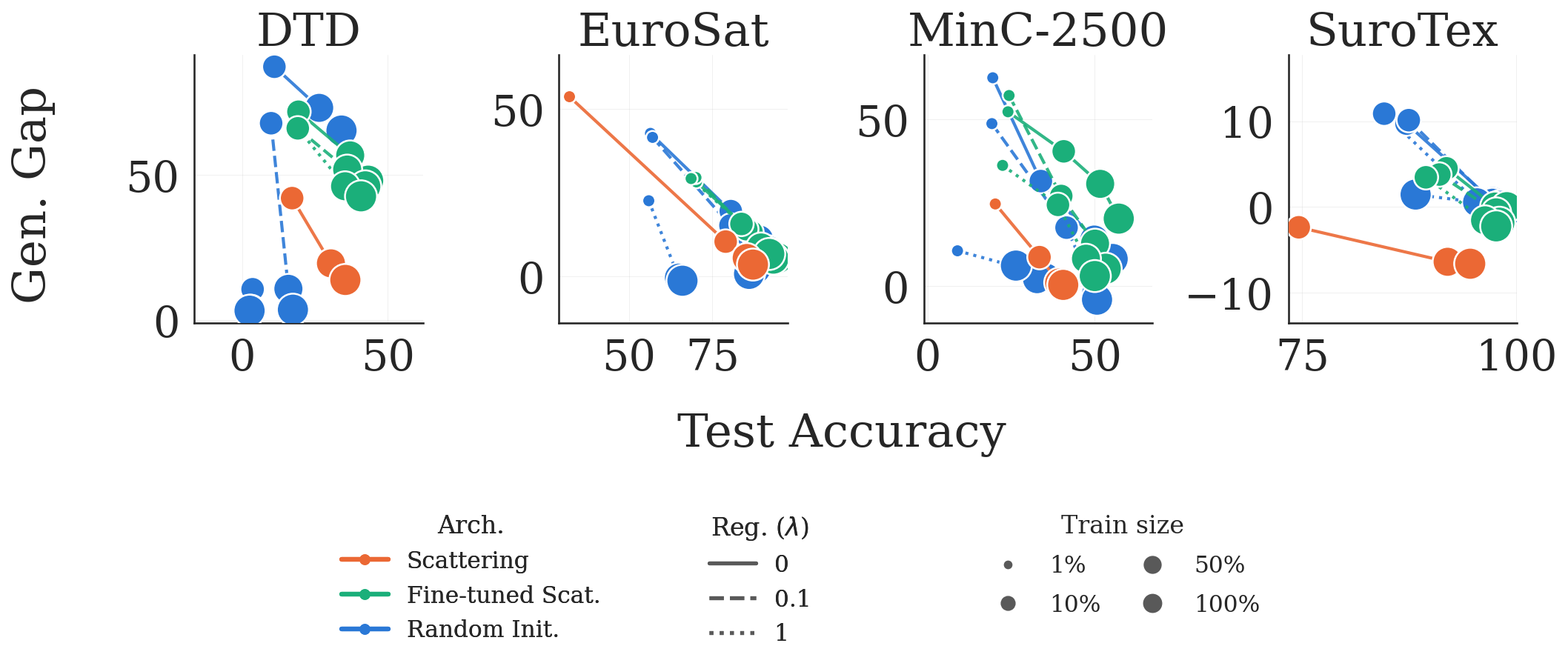}
    \caption{
    Test accuracy versus generalization gap, with weight decay regularization on randomly initialized model.
    Color denotes method, line weight denotes regularization strength, and size denotes percentage of training data. Weight decay controls the generalization gap at the cost of reducing test accuracy on DTD, EuroSat, and MinC-2500.}
    \label{fig:gen-gap-weight-decay}
\end{figure}

It is well-known that regularizing a model reduces the generalization gap.
\Cref{fig:gen-gap-weight-decay} ablates the effect of controlling fine-tuning distance versus standard weight decay regularization on the randomly initialized network.
The two differ in the parameters they are anchored to during training: fine-tuning distance penalizes movement away from the initial parameters, such as the scattering prior, while weight decay penalizes distance from the origin.

Weight decay reduces the generalization gap for the randomly initialized model, but it does so by reducing the test accuracy, in some cases to near 0, as on DTD.
The model achieves low generalization error by performing equally poorly on the training and testing datasets.
In contrast, regularizing the fine-tuning distance on the fine-tuned scattering network produces a smaller reduction in generalization gap but at far less cost to test accuracy.
The original scattering transform, without any fine-tuning, can be viewed as applying fine-tuning regularization with \( \lambda = \infty\), and it does not demonstrate the same collapse in test accuracy as the randomly initialized model does with \( \lambda = 1\).
\Cref{fig:gen-gap-weight-decay} indicates that controlling the fine-tuning distance can control generalization error without destroying accuracy; anchoring the parameters to a meaningful initial distribution that disentangles the data does not have a negative effect on test accuracy.
On the other hand, anchoring the parameters to zero, as weight decay regularization does, negatively impacts test accuracy.

\begin{figure}
    \centering
    \includegraphics[width=1.0\linewidth]{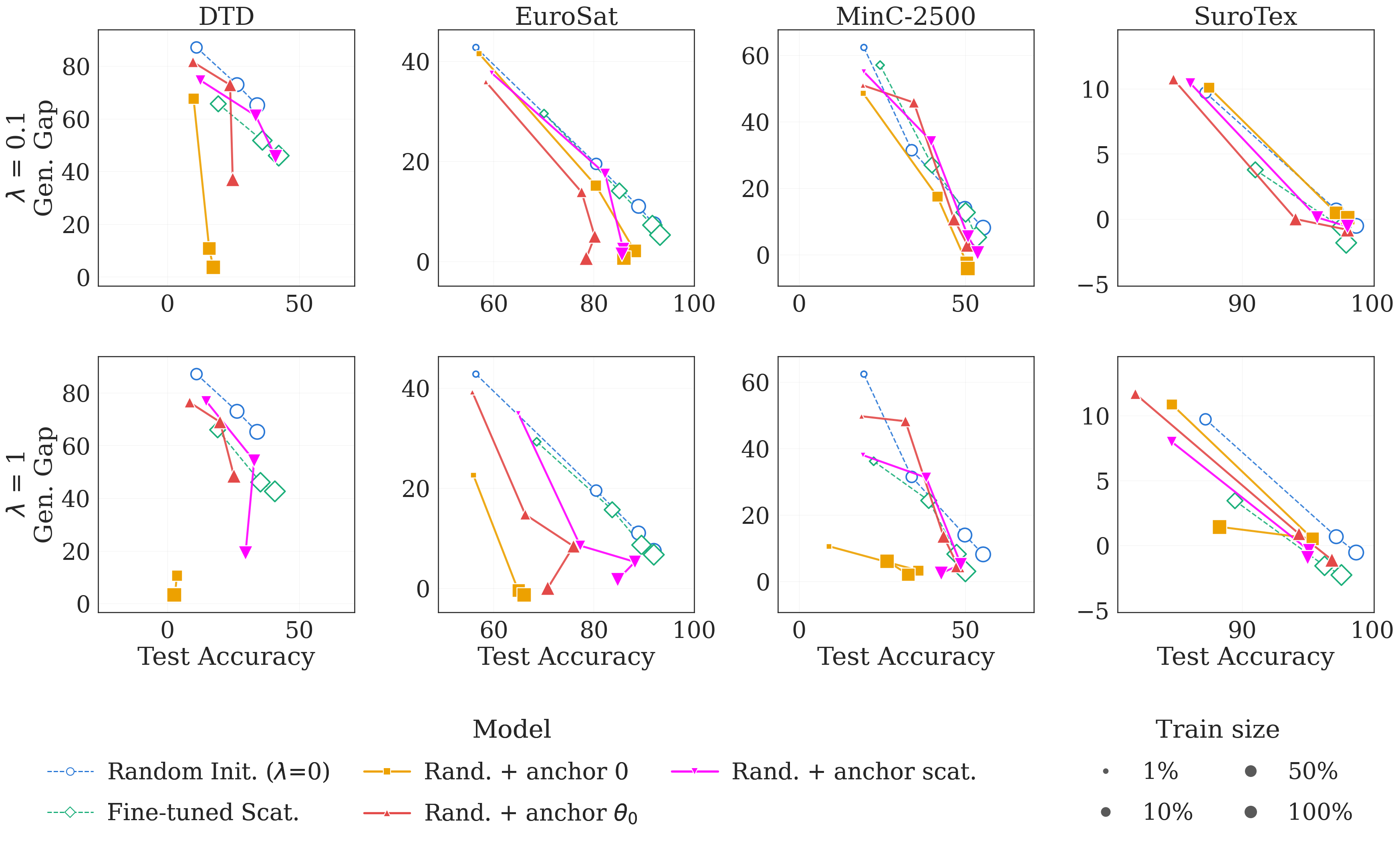}
    \caption{ 
    Test accuracy versus generalization gap, comparing effect of different parameter anchors on randomly initialized model. Within each subplot, color indicates method and parameter anchor, and marker size indicates percentage of training data used. The randomly initialized model with scattering anchor has higher generalization error than the fine-tuned scattering model, likely because the fine-tuning distance is greater on DTD and MinC-2500. 
    }
    \label{fig:gen-gap-anchor}
\end{figure}

\Cref{fig:gen-gap-anchor} illustrates the effect of different anchors on the randomly initialized model and separates two overlapping effects in the fine-tuned model; the benefit conferred by a meaningful prior and the benefit conferred by controlling fine-tuning distance.
Each subplot shows the randomly initialized model with regularization penalizing the distance from three parameter anchors, the origin, the random initial parameters, and the initial scattering parameters, alongside the randomly initialized model without regularization and the fine-tuned scattering model.

Anchoring the randomly initialized model to the scattering prior improves the test accuracy and avoids the steep decline in test accuracy observed with standard weight decay, which demonstrates the first effect, that anchoring to a meaningful prior improves accuracy, in isolation.
However, the fine-tuned scattering transform demonstrates a lower generalization gap on DTD and SuroTex.
It anchors to its own initial parameters, while the randomly initialized model with a scattering anchor starts from one set of parameters and pulls towards another during training.
The fine-tuning distance is therefor much smaller for the fine-tuned scattering network, giving it the second effect, control of generalization gap via control of fine-tuning distance, as well.
Although the first effect is on test accuracy and the second effect is on the generalization gap, they can amplify each other, since test accuracy and generalization gap are highly correlated, as shown in \cref{fig:test-acc-vs-gen-gap}.
The fine-tuned scattering network benefits from a meaningful prior and from restricted fine-tuning distance, while the randomly initialized model with a scattering anchor benefits from the first effect alone.

\subsubsection{Neural Collapse}\label{app:neural-collapse}

\begin{figure}
    \centering
    \includegraphics[width=1.0\linewidth]{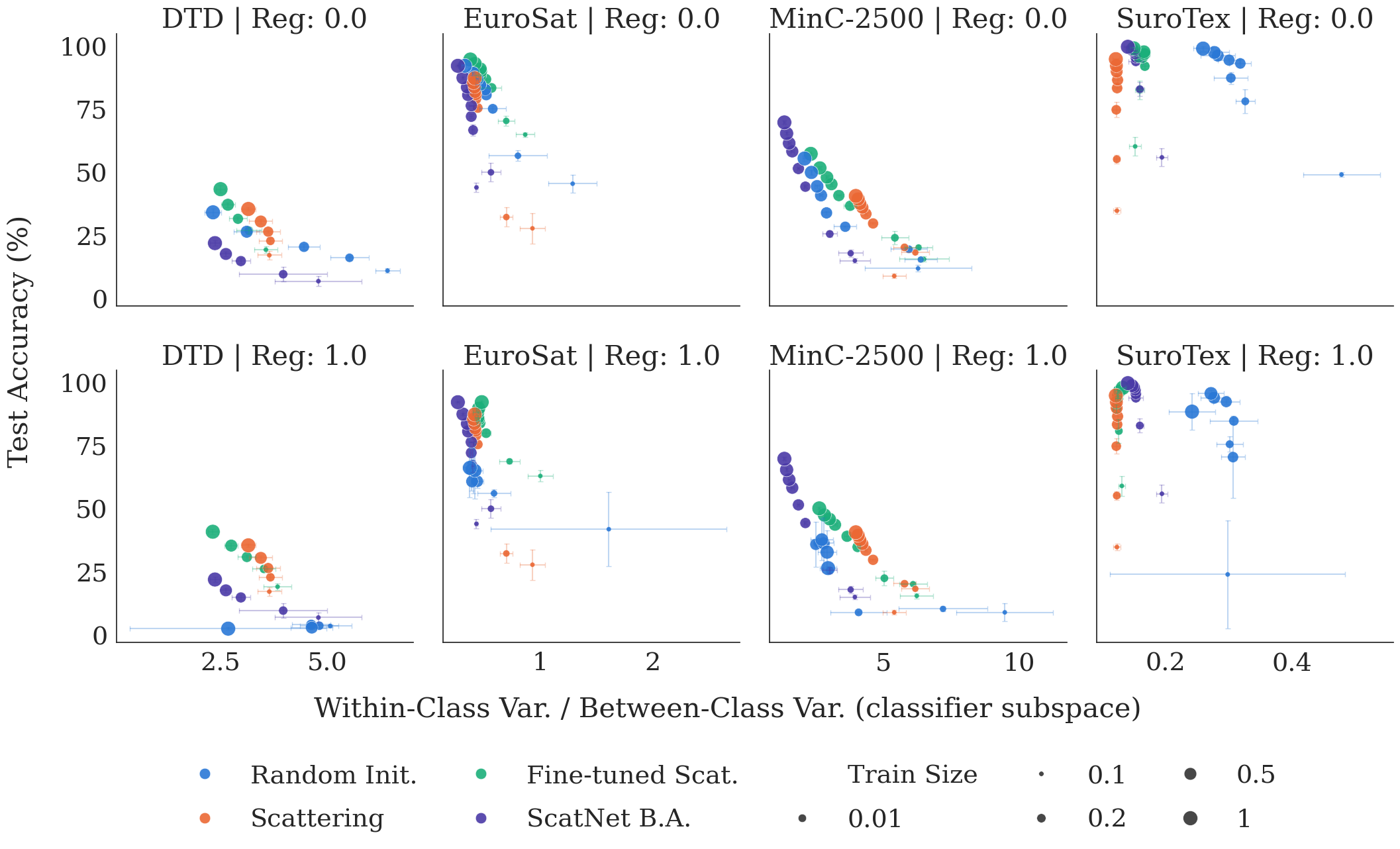}
    \caption{ 
    Normalized within-class variance in classifier subspace versus test accuracy. Within sub-plots, color denotes method, marker size denotes percentage of training data used, and cross-hatches denote standard deviation. Rows denote regularization strength and columns denote datasets.}
    \label{fig:skew-neural-collapse}
\end{figure}

\Cref{fig:skew-neural-collapse} displays normalized within-class variance against test accuracy.
We report a ratio of traces, \( \text{tr} (\Sigma_w M) / \text{tr}(\Sigma_b M)\), following the convention in \citet{tirer2023perturbation, beaglehole2024average, jacot2025wide}.
We will refer to this quantity as normalized within-class variance.
\( \Sigma_w \) denotes the within-class covariance of the features, averaged over classes, and \( \Sigma_b \) denotes the covariance of the class means about their center:
\[
\Sigma_w = \frac{1}{K} \sum_{c=1}^{K} \frac{1}{n_c} \sum_{i : y_i = c}
      (h_i - \mu_c)(h_i - \mu_c)^{\top},
  \qquad
  \Sigma_b = \frac{1}{K} \sum_{c=1}^{K} (\mu_c - \bar{\mu})(\mu_c - \bar{\mu})^{\top},
\]
where \( h_i \) is the feature of sample \( i\), \( \mu_c\) is the mean of class \( c\), \( \bar{\mu}\) is the mean of the class means, and \( K \) is the number of classes.
Standard within-class variance sets \( M \) equal to the identity matrix; however for settings with many more feature dimensions than classes, this can penalize a model for variance in directions that are invisible to the classifier, which penalizes a feature representation that is well-separated with respect to the geometry of the classifier.
Therefore, in \cref{fig:skew-neural-collapse}, we set \( M = V V^T \), where the columns of \( V \) are an orthonormal basis for the row space of the classifier weights after their mean row is removed, taken as the Jacobian of the head with respect to \( h_i \) so that its normalization scale and any pooling are included. 
Both traces are then taken inside the classifier subspace, on the projected features \( V^T h_i \).
Feature variation orthogonal to that subspace does not change the model's predictions, so we do not penalize the model for variance in these directions.
The ideal model resides in the top left corner of each subplot, with low within-class variance, which indicates a high degree of neural collapse, and high test accuracy.
To understand the effect of controlling fine-tuning distance on neural collapse, \cref{fig:skew-neural-collapse} displays results without regularization in the top row and with regularization (\(\lambda = 1\)) in the bottom row.
The results for the scattering network without fine-tuning and for ScatNet B.A. are repeated between rows because they do not have regularization.
Results for the fine-tuned scattering network regularize distance from the initial parameters and results for the randomly initialized network apply weight decay regularization.

In \cref{fig:skew-neural-collapse}, normalized within-class variance and test accuracy are correlated with each other.
The fine-tuned scattering network outperforms the original scattering network on both metrics on DTD and MinC-2500, although ScatNet B.A. outperforms the fine-tuned scattering network on MinC-2500.
Unsurprisingly, stronger regularization  anchors the normalized of the fine-tuned scattering network closer to the normalized of the original scattering transform, especially on SuroTex and EuroSat.
SuroTex is the only dataset where the original scattering transform has significantly lower normalized than all other methods, which aligns with the fact that SuroTex is closest to the classical texture datasets that the scattering transform describes well before fine-tuning.
\cref{fig:skew-neural-collapse} suggests that regularizing fine-tuning distance helps preserve desirable properties of the scattering transform, such as neural collapse, in addition to controlling the generalization gap.

\subsubsection{Model Interpretability Results}\label{subsec:interpret-results}

A major advantage of the proposed method is that not only is the initial model interpretable, but the final model should also be interpretable.
The scattering network's interpretability follows from the fact that its filters are defined by a wavelet filter bank.
In order to verify that the fine-tuned scattering network maintains a reasonable interpretation, we report several metrics that measure how close the trained filters of a model are to wavelets in \cref{fig:admissibility,fig:rho-vs-filter-distance}, and how much the parameters of the overall filter bank change in \cref{fig:interp-params-shift}.

\begin{figure}
    \centering
    \includegraphics[width=1.0\linewidth]{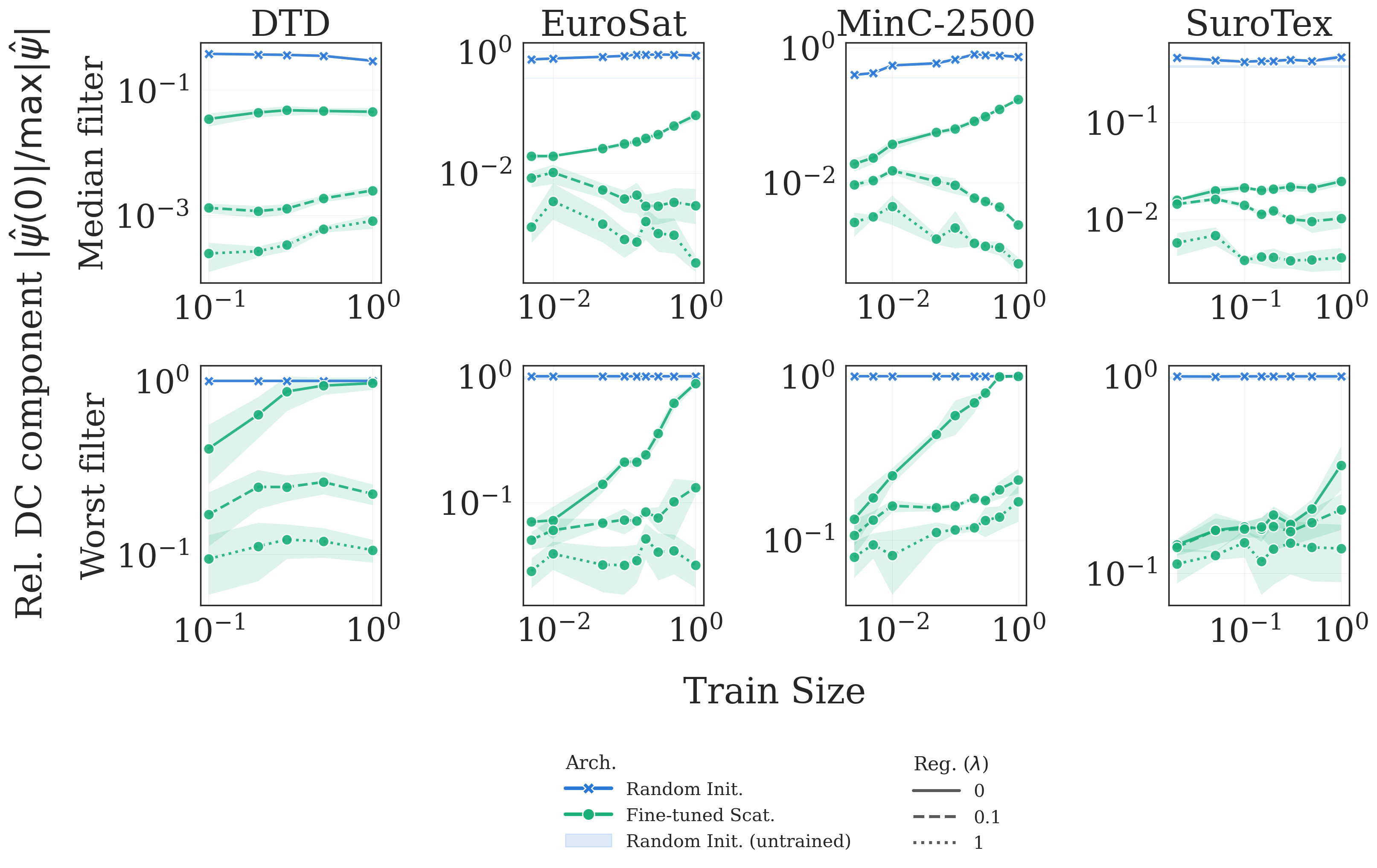}
    \caption{Percentage of data used during training versus relative magnitude of zero-frequency component. Wavelets must satisfy an admissibility condition, that the zero-frequency component has magnitude zero. Color denotes method, line style denotes regularization strength. Increasing regularization strength decreases the relative magnitude of the zero-frequency component on all datasets.}
    \label{fig:admissibility}
\end{figure}

\Cref{fig:admissibility} displays the relative magnitude of the zero-frequency
component against the percentage of data used in training, for the median
filter (top row) and the worst filter (bottom row) of each model.
Wavelets must satisfy an admissibility condition: a wavelet \( \psi \) has zero
mean, \( \int \psi(x)\,dx = 0 \), or equivalently a vanishing zero-frequency
component.
Because the condition is common to all wavelet families, this metric makes no
assumption about which wavelet the filters resemble, but it is correspondingly
weak: it can rule a filter out as a wavelet, not certify it as one.

The randomly initialized model fails the condition outright, and its median and
worst-case filters have similar values, indicating that large zero-frequency
components are the norm rather than the exception among its filters.
Its learned filters therefore cannot be interpreted as wavelets.
The fine-tuned scattering model retains a much smaller zero-frequency component at the median, and stronger regularization shrinks it further.
Its worst-case filter degrades as the training set grows, but penalizing the fine-tuning distance keeps that degradation bounded.
Fine-tuning thus leaves the network interpretable as a family of wavelet-based filters rather than destroying that structure.

\begin{figure}
    \centering
    \includegraphics[width=1.0\linewidth]{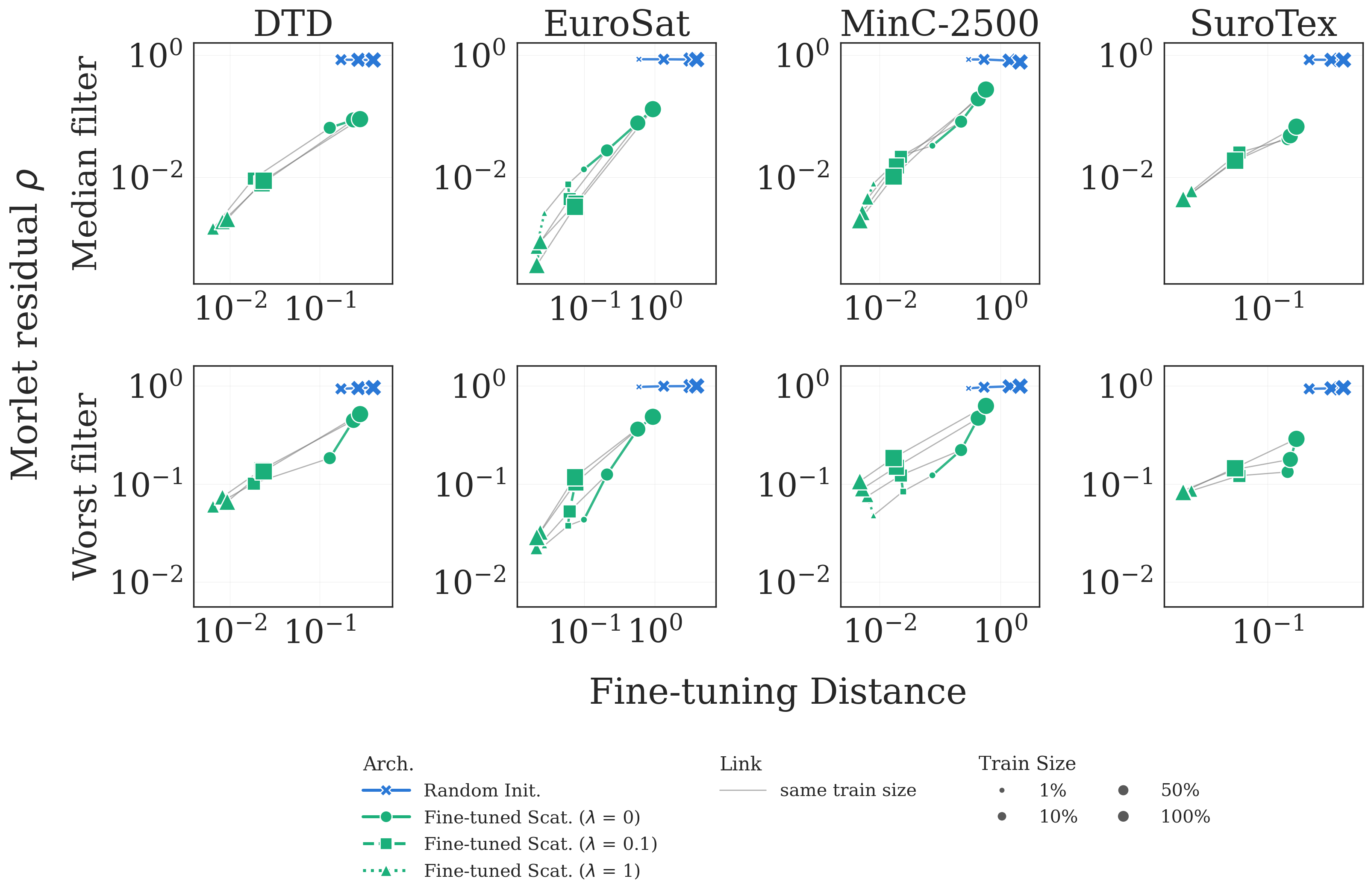}
    \caption{Fine-tuning distance versus \( \rho\) (similarity to best-fit Morlet filter). Color denotes method and marker style denotes regularization strength. Marker size denotes the percentage of data used during training.
    Smaller fine-tuning distance is correlated with more similarity to a Morlet filter across all four datasets.}
    \label{fig:rho-vs-filter-distance}
\end{figure}

Having ruled out the randomly initialized filters as wavelets, we now ask how wavelet-like the fine-tuned filters are.
Since the scattering network is initialized with Morlet filters, we measure each filter's deviation from the closest member of the Morlet family.
For a given fine-tuned filter, we fit the orientation, central frequency, envelope width, and aspect ratio of the Morlet filter that most closely matches it, and define \( \rho \) as the \( L^2 \) distance between the two, normalized by the norm of the fine-tuned filter.
Unlike the admissibility condition, this measure is specific to the family the
network was initialized from, and it is positive evidence rather than a
necessary condition: a small \( \rho \) demonstrates that the filter is still a Morlet wavelet, accounting for the possibility that its parameters are different that the parameters that generated the original Morlet filter.

\Cref{fig:rho-vs-filter-distance} displays \( \rho \) against the fine-tuning
distance.
Filters that move less remain closer to the Morlet family, as expected.
The informative comparison is vertical: between filters that traveled the same distance from their initialization but differ in how far they ended up from the
family.
At matched distance, the fine-tuned filters remain substantially closer to the Morlet family than the randomly initialized ones, so their similarity to wavelets is not merely a consequence of having moved less.
Fine-tuning not only keeps the filters near their starting point, it leaves them within the family of filters from which they were initialized.

\begin{figure}
    \centering
    \includegraphics[width=1.0\linewidth]{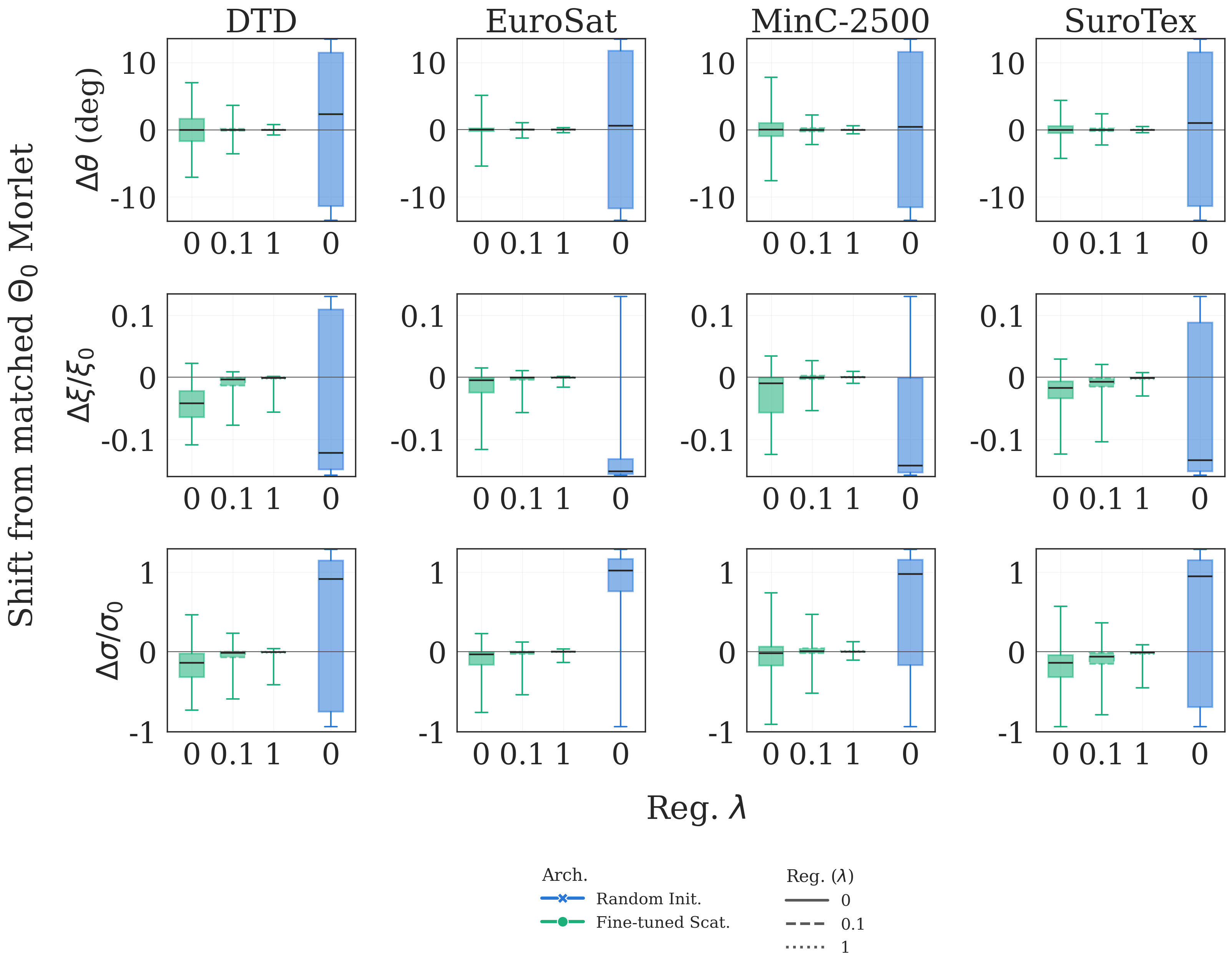}
    \caption{Relative change in the fitted Morlet parameters during fine-tuning. For each fine-tuned filter, we compare the parameters of its best-fit Morlet wavelet
    to those of the matched filter in the initial bank. Regularizing the fine-tuning distance keeps the parameters closer to their initial values across datasets.}
    \label{fig:interp-params-shift}
\end{figure}

Verifying that each individual filter remains wavelet-like after fine-tuning is necessary but not sufficient to ensure that the overall model remains interpretable; the scattering transform is not an arbitrary collection of wavelets, but one whose elements are distributed across scales and orientations.
Therefore, in \cref{fig:interp-params-shift}, we report how far the parameters of the fine-tuned filter bank have shifted during training.
For each parameter (orientation \( \theta\), central frequency \(\xi\), and bandwidth \( \sigma\)) the shift is the difference between the best-fit Morlet wavelet used to compute \( \rho \) and the matched filter in the initial bank.
Matching filters rather than comparing them by index ensures that models whose filters are permuted during training are not penalized for the permutation.
It also means the shifts are of limited use for the randomly initialized model, whose large \( \rho \) values indicate that the best-fit Morlet wavelet is a poor approximation of the true filter.

\cref{fig:interp-params-shift} demonstrates that fine-tuning successfully controls the change in fitted parameters.
With \( \lambda = 1 \), the relative shifts are close to zero across datasets and parameters, whereas with \( \lambda = 0 \) they are substantially larger on DTD, Minc-2500, and SuroTex, although the parameter change is small at all regularization levels for EuroSat.
Together with the per-filter results above, this is consistent with fine-tuning preserving not only the character of the individual wavelets but the organization of the bank they form.

\subsubsection{Full Numerical Results}

To allow more precise reporting of our results, we include tables comparing test accuracy, generalization error, and normalized within-class variance in the classifier subspace.
We report the mean and standard deviation across 3 random seeds for all datasets except DTD, where we report the mean and 1 standard deviation across 3 random seeds and 5 partitions of the data.
The factors of variability include the split of data to include in the training subset, train/val split, and model initialization.
While the filters for the scattering transform and ScatNet B.A. are deterministic, the classifier is initialized randomly, as well as the analysis layers in ScatNet B.A.


{%
\setlength{\LTcapwidth}{\textwidth}%
\begin{longtable}{llccccc}
\caption{Test accuracy, generalization gap and normalized within-class variance in classifier subspace (NC1) across architectures for DTD, by training-set fraction. Each cell is the mean over seeds, with the standard deviation in parentheses, set smaller. Best test accuracy per column in bold.}
\label{tab:bestof_dtd}\\
\toprule
Model & Metric & 10\% & 20\% & 30\% & 50\% & 100\% \\
\midrule
\endfirsthead
\caption*{\textit{Table \thetable{} (continued).}}\\
\toprule
Model & Metric & 10\% & 20\% & 30\% & 50\% & 100\% \\
\midrule
\endhead
\midrule
\multicolumn{7}{r}{\footnotesize\textit{Continued on next page}}\\
\endfoot
\bottomrule
\endlastfoot
\multirow{3}{1.75cm}{\raggedright Random Init. (L2=0)} & Test acc. & 11.06 {\scriptsize (0.67)} & 16.13 {\scriptsize (0.67)} & 20.45 {\scriptsize (0.40)} & 26.43 {\scriptsize (0.31)} & 34.10 {\scriptsize (0.92)} \\*
 & Gen. Gap & 87.10 {\scriptsize (2.73)} & 83.26 {\scriptsize (1.04)} & 79.46 {\scriptsize (0.39)} & 72.98 {\scriptsize (1.00)} & 65.19 {\scriptsize (0.54)} \\*
 & NC1 & 6.42 {\scriptsize (0.21)} & 5.53 {\scriptsize (0.30)} & 4.46 {\scriptsize (0.29)} & 3.12 {\scriptsize (0.21)} & 2.33 {\scriptsize (0.17)} \\
\midrule
\multirow{3}{1.75cm}{\raggedright Random Init. (L2=0.1)} & Test acc. & 9.94 {\scriptsize (0.51)} & 13.44 {\scriptsize (1.15)} & 14.66 {\scriptsize (0.69)} & 15.87 {\scriptsize (1.62)} & 17.42 {\scriptsize (1.12)} \\*
 & Gen. Gap & 67.70 {\scriptsize (6.84)} & 49.16 {\scriptsize (10.15)} & 25.21 {\scriptsize (6.66)} & 10.82 {\scriptsize (4.18)} & 3.66 {\scriptsize (2.84)} \\*
 & NC1 & 5.55 {\scriptsize (0.30)} & 5.37 {\scriptsize (0.87)} & 4.78 {\scriptsize (0.43)} & 4.56 {\scriptsize (0.52)} & 4.33 {\scriptsize (0.33)} \\
\midrule
\multirow{3}{1.75cm}{\raggedright Random Init. (L2=1)} & Test acc. & 3.57 {\scriptsize (0.36)} & 3.68 {\scriptsize (0.45)} & 4.06 {\scriptsize (0.39)} & 2.95 {\scriptsize (0.32)} & 2.53 {\scriptsize (0.40)} \\*
 & Gen. Gap & 10.68 {\scriptsize (1.87)} & 5.28 {\scriptsize (3.00)} & 1.13 {\scriptsize (2.13)} & 3.67 {\scriptsize (2.53)} & 3.32 {\scriptsize (0.73)} \\*
 & NC1 & 5.08 {\scriptsize (0.35)} & 4.82 {\scriptsize (0.33)} & 4.63 {\scriptsize (0.42)} & 4.64 {\scriptsize (0.39)} & 2.68 {\scriptsize (1.92)} \\
\midrule
\multirow{3}{1.75cm}{\raggedright Scattering} & Test acc. & 17.15 {\scriptsize (1.14)} & 22.83 {\scriptsize (0.64)} & 26.46 {\scriptsize (0.41)} & 30.47 {\scriptsize (0.85)} & 35.42 {\scriptsize (0.35)} \\*
 & Gen. Gap & 41.98 {\scriptsize (1.63)} & 30.97 {\scriptsize (1.08)} & 25.20 {\scriptsize (0.88)} & 19.57 {\scriptsize (0.68)} & 13.88 {\scriptsize (0.46)} \\*
 & NC1 & 3.65 {\scriptsize (0.29)} & 3.67 {\scriptsize (0.28)} & 3.62 {\scriptsize (0.29)} & 3.45 {\scriptsize (0.28)} & 3.16 {\scriptsize (0.17)} \\
\midrule
\multirow{3}{1.75cm}{\raggedright Fine-tuned Scat. (L2=0)} & Test acc. & \textbf{19.32} {\scriptsize (0.50)} & \textbf{26.91} {\scriptsize (0.44)} & 31.54 {\scriptsize (0.35)} & \textbf{37.09} {\scriptsize (0.73)} & \textbf{43.25} {\scriptsize (1.05)} \\*
 & Gen. Gap & 71.65 {\scriptsize (6.23)} & 66.49 {\scriptsize (5.03)} & 62.23 {\scriptsize (3.56)} & 56.65 {\scriptsize (4.39)} & 47.96 {\scriptsize (7.03)} \\*
 & NC1 & 3.57 {\scriptsize (0.27)} & 3.17 {\scriptsize (0.30)} & 2.91 {\scriptsize (0.22)} & 2.68 {\scriptsize (0.16)} & 2.50 {\scriptsize (0.09)} \\
\midrule
\multirow{3}{1.75cm}{\raggedright Fine-tuned Scat. (L2=0.1)} & Test acc. & 19.31 {\scriptsize (0.31)} & 26.70 {\scriptsize (0.63)} & \textbf{31.60} {\scriptsize (0.18)} & 36.06 {\scriptsize (1.02)} & 42.26 {\scriptsize (0.96)} \\*
 & Gen. Gap & 65.70 {\scriptsize (5.56)} & 62.71 {\scriptsize (7.72)} & 60.72 {\scriptsize (2.46)} & 51.79 {\scriptsize (4.96)} & 46.02 {\scriptsize (3.09)} \\*
 & NC1 & 3.70 {\scriptsize (0.31)} & 3.28 {\scriptsize (0.30)} & 2.99 {\scriptsize (0.20)} & 2.66 {\scriptsize (0.16)} & 2.33 {\scriptsize (0.10)} \\
\midrule
\multirow{3}{1.75cm}{\raggedright Fine-tuned Scat. (L2=1)} & Test acc. & 19.08 {\scriptsize (0.51)} & 26.11 {\scriptsize (0.37)} & 30.82 {\scriptsize (0.58)} & 35.33 {\scriptsize (0.85)} & 40.80 {\scriptsize (0.94)} \\*
 & Gen. Gap & 65.95 {\scriptsize (4.26)} & 57.94 {\scriptsize (8.44)} & 58.09 {\scriptsize (5.46)} & 46.05 {\scriptsize (4.86)} & 42.60 {\scriptsize (3.34)} \\*
 & NC1 & 3.84 {\scriptsize (0.33)} & 3.52 {\scriptsize (0.24)} & 3.12 {\scriptsize (0.20)} & 2.76 {\scriptsize (0.12)} & 2.32 {\scriptsize (0.10)} \\
\midrule
\multirow{3}{1.75cm}{\raggedright ScatNet B.A.} & Test acc. & 6.91 {\scriptsize (0.77)} & 9.66 {\scriptsize (1.14)} & 14.85 {\scriptsize (0.76)} & 17.65 {\scriptsize (0.73)} & 21.93 {\scriptsize (0.61)} \\*
 & Gen. Gap & 92.85 {\scriptsize (0.60)} & 90.23 {\scriptsize (0.98)} & 85.15 {\scriptsize (0.76)} & 82.35 {\scriptsize (0.73)} & 78.00 {\scriptsize (0.68)} \\*
 & NC1 & 4.80 {\scriptsize (0.56)} & 3.97 {\scriptsize (0.60)} & 2.98 {\scriptsize (0.17)} & 2.63 {\scriptsize (0.06)} & 2.37 {\scriptsize (0.10)} \\
\end{longtable}
}%

{%
\setlength{\LTcapwidth}{\textwidth}%
\begin{longtable}{llccccc}
\caption{Test accuracy, generalization gap and normalized within-class variance in classifier subspace (NC1) across architectures for EuroSat (Part 1 of 2), by training-set fraction. Each cell is the mean over seeds, with the standard deviation in parentheses, set smaller. Best test accuracy per column in bold.}
\label{tab:bestof_eurosat_a}\\
\toprule
Model & Metric & 0.5\% & 1\% & 5\% & 10\% & 15\% \\
\midrule
\endfirsthead
\caption*{\textit{Table \thetable{} (part 1 of 2, continued).}}\\
\toprule
Model & Metric & 0.5\% & 1\% & 5\% & 10\% & 15\% \\
\midrule
\endhead
\midrule
\multicolumn{7}{r}{\footnotesize\textit{Continued on next page}}\\
\endfoot
\bottomrule
\endlastfoot
\multirow{3}{1.75cm}{\raggedright Random Init. (L2=0)} & Test acc. & 45.40 {\scriptsize (3.57)} & 56.45 {\scriptsize (2.16)} & 74.98 {\scriptsize (1.21)} & 80.48 {\scriptsize (0.27)} & 82.57 {\scriptsize (0.83)} \\*
 & Gen. Gap & 54.60 {\scriptsize (3.57)} & 42.77 {\scriptsize (2.50)} & 24.99 {\scriptsize (1.27)} & 19.51 {\scriptsize (0.28)} & 17.43 {\scriptsize (0.83)} \\*
 & NC1 & 1.29 {\scriptsize (0.22)} & 0.80 {\scriptsize (0.26)} & 0.58 {\scriptsize (0.12)} & 0.52 {\scriptsize (0.04)} & 0.51 {\scriptsize (0.05)} \\
\midrule
\multirow{3}{1.75cm}{\raggedright Random Init. (L2=0.1)} & Test acc. & 24.78 {\scriptsize (4.44)} & 57.07 {\scriptsize (3.67)} & 73.64 {\scriptsize (2.94)} & 80.38 {\scriptsize (0.71)} & 83.65 {\scriptsize (0.48)} \\*
 & Gen. Gap & 13.37 {\scriptsize (2.94)} & 41.57 {\scriptsize (2.78)} & 19.65 {\scriptsize (7.42)} & 15.22 {\scriptsize (1.82)} & 10.37 {\scriptsize (0.54)} \\*
 & NC1 & 2.56 {\scriptsize (1.10)} & 0.87 {\scriptsize (0.25)} & 0.49 {\scriptsize (0.04)} & 0.49 {\scriptsize (0.06)} & 0.44 {\scriptsize (0.03)} \\
\midrule
\multirow{3}{1.75cm}{\raggedright Random Init. (L2=1)} & Test acc. & 41.80 {\scriptsize (14.68)} & 55.94 {\scriptsize (1.68)} & 67.33 {\scriptsize (0.13)} & 64.91 {\scriptsize (9.05)} & 60.73 {\scriptsize (2.71)} \\*
 & Gen. Gap & 33.30 {\scriptsize (14.28)} & 22.61 {\scriptsize (3.99)} & 4.47 {\scriptsize (0.75)} & -0.44 {\scriptsize (2.52)} & -1.04 {\scriptsize (2.65)} \\*
 & NC1 & 1.61 {\scriptsize (1.05)} & 0.59 {\scriptsize (0.15)} & 0.39 {\scriptsize (0.03)} & 0.41 {\scriptsize (0.05)} & 0.44 {\scriptsize (0.05)} \\
\midrule
\multirow{3}{1.75cm}{\raggedright Scattering} & Test acc. & 27.78 {\scriptsize (6.00)} & 32.22 {\scriptsize (3.79)} & 75.30 {\scriptsize (0.45)} & 78.94 {\scriptsize (0.58)} & 80.69 {\scriptsize (0.10)} \\*
 & Gen. Gap & 32.46 {\scriptsize (41.34)} & 53.74 {\scriptsize (3.52)} & 14.20 {\scriptsize (1.48)} & 10.39 {\scriptsize (0.78)} & 8.71 {\scriptsize (0.85)} \\*
 & NC1 & 0.93 {\scriptsize (0.10)} & 0.70 {\scriptsize (0.05)} & 0.44 {\scriptsize (0.01)} & 0.43 {\scriptsize (0.02)} & 0.43 {\scriptsize (0.02)} \\
\midrule
\multirow{3}{1.75cm}{\raggedright Fine-tuned Scat. (L2=0)} & Test acc. & \textbf{64.77} {\scriptsize (0.87)} & \textbf{70.20} {\scriptsize (2.07)} & \textbf{83.22} {\scriptsize (0.30)} & \textbf{86.67} {\scriptsize (0.36)} & \textbf{88.25} {\scriptsize (0.39)} \\*
 & Gen. Gap & 35.23 {\scriptsize (0.87)} & 28.24 {\scriptsize (1.05)} & 16.78 {\scriptsize (0.30)} & 13.24 {\scriptsize (0.52)} & 11.52 {\scriptsize (0.50)} \\*
 & NC1 & 0.87 {\scriptsize (0.08)} & 0.70 {\scriptsize (0.07)} & 0.57 {\scriptsize (0.09)} & 0.52 {\scriptsize (0.04)} & 0.47 {\scriptsize (0.04)} \\
\midrule
\multirow{3}{1.75cm}{\raggedright Fine-tuned Scat. (L2=0.1)} & Test acc. & 64.60 {\scriptsize (1.18)} & 70.06 {\scriptsize (0.88)} & 82.12 {\scriptsize (0.69)} & 85.10 {\scriptsize (0.55)} & 86.81 {\scriptsize (0.41)} \\*
 & Gen. Gap & 35.40 {\scriptsize (1.18)} & 29.55 {\scriptsize (0.76)} & 17.69 {\scriptsize (0.92)} & 14.10 {\scriptsize (0.19)} & 11.87 {\scriptsize (1.43)} \\*
 & NC1 & 0.86 {\scriptsize (0.03)} & 0.71 {\scriptsize (0.06)} & 0.52 {\scriptsize (0.04)} & 0.46 {\scriptsize (0.02)} & 0.45 {\scriptsize (0.03)} \\
\midrule
\multirow{3}{1.75cm}{\raggedright Fine-tuned Scat. (L2=1)} & Test acc. & 62.83 {\scriptsize (2.18)} & 68.60 {\scriptsize (0.97)} & 79.73 {\scriptsize (1.03)} & 83.67 {\scriptsize (1.11)} & 85.10 {\scriptsize (0.58)} \\*
 & Gen. Gap & 36.37 {\scriptsize (2.52)} & 29.25 {\scriptsize (0.91)} & 18.92 {\scriptsize (0.82)} & 15.67 {\scriptsize (0.71)} & 14.19 {\scriptsize (0.85)} \\*
 & NC1 & 1.00 {\scriptsize (0.11)} & 0.73 {\scriptsize (0.09)} & 0.52 {\scriptsize (0.04)} & 0.46 {\scriptsize (0.03)} & 0.46 {\scriptsize (0.02)} \\
\midrule
\multirow{3}{1.75cm}{\raggedright ScatNet B.A.} & Test acc. & 43.91 {\scriptsize (1.76)} & 49.90 {\scriptsize (3.66)} & 66.57 {\scriptsize (2.32)} & 71.96 {\scriptsize (0.90)} & 76.20 {\scriptsize (0.07)} \\*
 & Gen. Gap & 56.09 {\scriptsize (1.76)} & 50.10 {\scriptsize (3.66)} & 33.43 {\scriptsize (2.32)} & 28.04 {\scriptsize (0.90)} & 23.80 {\scriptsize (0.07)} \\*
 & NC1 & 0.43 {\scriptsize (0.01)} & 0.56 {\scriptsize (0.09)} & 0.40 {\scriptsize (0.01)} & 0.39 {\scriptsize (0.02)} & 0.39 {\scriptsize (0.02)} \\
\end{longtable}
}%

{%
\captionsetup{list=no}%
\setlength{\LTcapwidth}{\textwidth}%
\begin{longtable}{llcccc}
\caption{Test accuracy, generalization gap and normalized within-class variance in classifier subspace (NC1) across architectures for EuroSat (Part 2 of 2), by training-set fraction. Each cell is the mean over seeds, with the standard deviation in parentheses, set smaller. Best test accuracy per column in bold.}
\label{tab:bestof_eurosat_b}\\
\toprule
Model & Metric & 20\% & 30\% & 50\% & 100\% \\
\midrule
\endfirsthead
\caption*{\textit{Table \thetable{} (part 2 of 2, continued).}}\\
\toprule
Model & Metric & 20\% & 30\% & 50\% & 100\% \\
\midrule
\endhead
\midrule
\multicolumn{6}{r}{\footnotesize\textit{Continued on next page}}\\
\endfoot
\bottomrule
\endlastfoot
\multirow{3}{1.75cm}{\raggedright Random Init. (L2=0)} & Test acc. & 84.44 {\scriptsize (1.18)} & 86.73 {\scriptsize (0.27)} & 88.94 {\scriptsize (0.74)} & 91.98 {\scriptsize (0.53)} \\*
 & Gen. Gap & 15.56 {\scriptsize (1.18)} & 13.26 {\scriptsize (0.28)} & 11.03 {\scriptsize (0.70)} & 7.47 {\scriptsize (0.23)} \\*
 & NC1 & 0.47 {\scriptsize (0.04)} & 0.44 {\scriptsize (0.01)} & 0.40 {\scriptsize (0.01)} & 0.33 {\scriptsize (0.03)} \\
\midrule
\multirow{3}{1.75cm}{\raggedright Random Init. (L2=0.1)} & Test acc. & 85.94 {\scriptsize (0.81)} & 88.00 {\scriptsize (0.83)} & 88.13 {\scriptsize (3.15)} & 85.94 {\scriptsize (4.40)} \\*
 & Gen. Gap & 8.01 {\scriptsize (1.52)} & 5.04 {\scriptsize (0.24)} & 2.19 {\scriptsize (3.14)} & 0.73 {\scriptsize (2.15)} \\*
 & NC1 & 0.41 {\scriptsize (0.02)} & 0.39 {\scriptsize (0.01)} & 0.39 {\scriptsize (0.03)} & 0.37 {\scriptsize (0.02)} \\
\midrule
\multirow{3}{1.75cm}{\raggedright Random Init. (L2=1)} & Test acc. & 60.68 {\scriptsize (1.33)} & 65.31 {\scriptsize (8.32)} & 64.96 {\scriptsize (11.21)} & 66.02 {\scriptsize (11.60)} \\*
 & Gen. Gap & -0.90 {\scriptsize (1.05)} & -0.44 {\scriptsize (1.78)} & -0.40 {\scriptsize (0.31)} & -1.32 {\scriptsize (0.89)} \\*
 & NC1 & 0.39 {\scriptsize (0.01)} & 0.39 {\scriptsize (0.03)} & 0.42 {\scriptsize (0.07)} & 0.37 {\scriptsize (0.06)} \\
\midrule
\multirow{3}{1.75cm}{\raggedright Scattering} & Test acc. & 81.59 {\scriptsize (0.45)} & 83.47 {\scriptsize (0.43)} & 85.24 {\scriptsize (0.80)} & 87.10 {\scriptsize (0.65)} \\*
 & Gen. Gap & 8.09 {\scriptsize (0.31)} & 6.89 {\scriptsize (0.25)} & 5.48 {\scriptsize (1.45)} & 3.51 {\scriptsize (0.80)} \\*
 & NC1 & 0.42 {\scriptsize (0.02)} & 0.42 {\scriptsize (0.01)} & 0.41 {\scriptsize (0.02)} & 0.42 {\scriptsize (0.01)} \\
\midrule
\multirow{3}{1.75cm}{\raggedright Fine-tuned Scat. (L2=0)} & Test acc. & \textbf{89.77} {\scriptsize (0.45)} & \textbf{90.80} {\scriptsize (0.20)} & \textbf{92.86} {\scriptsize (0.25)} & \textbf{94.45} {\scriptsize (0.46)} \\*
 & Gen. Gap & 9.94 {\scriptsize (0.33)} & 9.20 {\scriptsize (0.20)} & 6.95 {\scriptsize (0.05)} & 5.43 {\scriptsize (0.28)} \\*
 & NC1 & 0.47 {\scriptsize (0.03)} & 0.47 {\scriptsize (0.02)} & 0.42 {\scriptsize (0.01)} & 0.38 {\scriptsize (0.01)} \\
\midrule
\multirow{3}{1.75cm}{\raggedright Fine-tuned Scat. (L2=0.1)} & Test acc. & 88.01 {\scriptsize (0.57)} & 89.48 {\scriptsize (0.35)} & 91.70 {\scriptsize (0.25)} & 93.22 {\scriptsize (0.38)} \\*
 & Gen. Gap & 11.48 {\scriptsize (0.44)} & 9.80 {\scriptsize (1.48)} & 7.28 {\scriptsize (1.65)} & 5.30 {\scriptsize (1.54)} \\*
 & NC1 & 0.46 {\scriptsize (0.03)} & 0.45 {\scriptsize (0.02)} & 0.44 {\scriptsize (0.01)} & 0.44 {\scriptsize (0.04)} \\
\midrule
\multirow{3}{1.75cm}{\raggedright Fine-tuned Scat. (L2=1)} & Test acc. & 85.77 {\scriptsize (0.97)} & 87.89 {\scriptsize (0.49)} & 89.52 {\scriptsize (0.57)} & 91.98 {\scriptsize (0.31)} \\*
 & Gen. Gap & 10.48 {\scriptsize (2.84)} & 10.68 {\scriptsize (2.00)} & 8.63 {\scriptsize (1.75)} & 6.70 {\scriptsize (0.60)} \\*
 & NC1 & 0.45 {\scriptsize (0.03)} & 0.45 {\scriptsize (0.05)} & 0.45 {\scriptsize (0.02)} & 0.48 {\scriptsize (0.01)} \\
\midrule
\multirow{3}{1.75cm}{\raggedright ScatNet B.A.} & Test acc. & 80.37 {\scriptsize (0.73)} & 83.49 {\scriptsize (0.96)} & 87.24 {\scriptsize (0.45)} & 91.95 {\scriptsize (0.26)} \\*
 & Gen. Gap & 19.63 {\scriptsize (0.73)} & 16.51 {\scriptsize (0.96)} & 12.76 {\scriptsize (0.45)} & 8.05 {\scriptsize (0.26)} \\*
 & NC1 & 0.36 {\scriptsize (0.01)} & 0.35 {\scriptsize (0.00)} & 0.31 {\scriptsize (0.01)} & 0.27 {\scriptsize (0.01)} \\
\end{longtable}
}%

{%
\setlength{\LTcapwidth}{\textwidth}%
\begin{longtable}{llccccc}
\caption{Test accuracy, generalization gap and normalized within-class variance in classifier subspace (NC1) across architectures for MinC-2500 (Part 1 of 2), by training-set fraction. Each cell is the mean over seeds, with the standard deviation in parentheses, set smaller. Best test accuracy per column in bold.}
\label{tab:bestof_minc2500_a}\\
\toprule
Model & Metric & 0.2\% & 0.5\% & 1\% & 5\% & 10\% \\
\midrule
\endfirsthead
\caption*{\textit{Table \thetable{} (part 1 of 2, continued).}}\\
\toprule
Model & Metric & 0.2\% & 0.5\% & 1\% & 5\% & 10\% \\
\midrule
\endhead
\midrule
\multicolumn{7}{r}{\footnotesize\textit{Continued on next page}}\\
\endfoot
\bottomrule
\endlastfoot
\multirow{3}{1.75cm}{\raggedright Random Init. (L2=0)} & Test acc. & 11.93 {\scriptsize (1.34)} & 15.46 {\scriptsize (0.72)} & 19.50 {\scriptsize (0.56)} & 28.40 {\scriptsize (0.30)} & 33.88 {\scriptsize (0.60)} \\*
 & Gen. Gap & 82.64 {\scriptsize (3.56)} & 77.15 {\scriptsize (4.67)} & 62.34 {\scriptsize (8.63)} & 38.96 {\scriptsize (9.96)} & 31.43 {\scriptsize (7.32)} \\*
 & NC1 & 6.28 {\scriptsize (1.96)} & 6.38 {\scriptsize (0.60)} & 5.94 {\scriptsize (0.68)} & 3.59 {\scriptsize (0.42)} & 2.90 {\scriptsize (0.18)} \\
\midrule
\multirow{3}{1.75cm}{\raggedright Random Init. (L2=0.1)} & Test acc. & 10.87 {\scriptsize (1.32)} & 14.28 {\scriptsize (1.32)} & 19.23 {\scriptsize (1.70)} & 35.64 {\scriptsize (0.85)} & 41.57 {\scriptsize (0.37)} \\*
 & Gen. Gap & 84.34 {\scriptsize (2.20)} & 75.61 {\scriptsize (6.87)} & 48.63 {\scriptsize (2.49)} & 31.65 {\scriptsize (9.09)} & 17.50 {\scriptsize (3.41)} \\*
 & NC1 & 8.38 {\scriptsize (3.33)} & 7.62 {\scriptsize (1.46)} & 6.50 {\scriptsize (0.58)} & 2.77 {\scriptsize (0.02)} & 2.29 {\scriptsize (0.04)} \\
\midrule
\multirow{3}{1.75cm}{\raggedright Random Init. (L2=1)} & Test acc. & 8.96 {\scriptsize (3.52)} & 10.33 {\scriptsize (0.89)} & 8.95 {\scriptsize (0.48)} & 36.34 {\scriptsize (1.00)} & 35.79 {\scriptsize (8.83)} \\*
 & Gen. Gap & 61.04 {\scriptsize (36.98)} & 65.11 {\scriptsize (3.97)} & 10.57 {\scriptsize (5.45)} & 9.54 {\scriptsize (2.22)} & 3.29 {\scriptsize (0.87)} \\*
 & NC1 & 9.48 {\scriptsize (1.79)} & 7.20 {\scriptsize (1.64)} & 4.09 {\scriptsize (1.03)} & 2.63 {\scriptsize (0.08)} & 2.51 {\scriptsize (0.08)} \\
\midrule
\multirow{3}{1.75cm}{\raggedright Scattering} & Test acc. & 8.95 {\scriptsize (0.95)} & 18.24 {\scriptsize (0.83)} & 20.26 {\scriptsize (0.24)} & 29.71 {\scriptsize (0.37)} & 33.48 {\scriptsize (0.47)} \\*
 & Gen. Gap & 61.92 {\scriptsize (1.75)} & 47.62 {\scriptsize (2.89)} & 24.59 {\scriptsize (3.27)} & 12.75 {\scriptsize (0.72)} & 8.67 {\scriptsize (1.42)} \\*
 & NC1 & 5.40 {\scriptsize (0.39)} & 6.18 {\scriptsize (0.48)} & 5.78 {\scriptsize (0.36)} & 4.62 {\scriptsize (0.11)} & 4.36 {\scriptsize (0.03)} \\
\midrule
\multirow{3}{1.75cm}{\raggedright Fine-tuned Scat. (L2=0)} & Test acc. & 15.61 {\scriptsize (1.10)} & \textbf{20.22} {\scriptsize (0.93)} & 24.05 {\scriptsize (2.52)} & 36.68 {\scriptsize (1.00)} & 40.73 {\scriptsize (0.42)} \\*
 & Gen. Gap & 75.05 {\scriptsize (10.50)} & 70.33 {\scriptsize (6.12)} & 52.17 {\scriptsize (13.84)} & 51.41 {\scriptsize (5.63)} & 40.33 {\scriptsize (7.65)} \\*
 & NC1 & 6.50 {\scriptsize (0.92)} & 6.30 {\scriptsize (0.52)} & 5.43 {\scriptsize (0.50)} & 3.77 {\scriptsize (0.23)} & 3.36 {\scriptsize (0.09)} \\
\midrule
\multirow{3}{1.75cm}{\raggedright Fine-tuned Scat. (L2=0.1)} & Test acc. & \textbf{15.84} {\scriptsize (1.05)} & 19.99 {\scriptsize (1.04)} & 24.38 {\scriptsize (1.49)} & 35.29 {\scriptsize (1.44)} & 39.98 {\scriptsize (1.57)} \\*
 & Gen. Gap & 74.38 {\scriptsize (10.42)} & 67.40 {\scriptsize (5.01)} & 57.04 {\scriptsize (4.07)} & 36.57 {\scriptsize (9.33)} & 26.97 {\scriptsize (10.16)} \\*
 & NC1 & 6.21 {\scriptsize (0.65)} & 6.07 {\scriptsize (0.56)} & 5.27 {\scriptsize (0.43)} & 3.76 {\scriptsize (0.20)} & 3.45 {\scriptsize (0.15)} \\
\midrule
\multirow{3}{1.75cm}{\raggedright Fine-tuned Scat. (L2=1)} & Test acc. & 15.48 {\scriptsize (0.99)} & 20.10 {\scriptsize (0.78)} & 22.44 {\scriptsize (2.87)} & 34.78 {\scriptsize (0.55)} & 39.00 {\scriptsize (1.47)} \\*
 & Gen. Gap & 70.82 {\scriptsize (7.09)} & 65.00 {\scriptsize (3.28)} & 36.13 {\scriptsize (19.33)} & 40.99 {\scriptsize (0.82)} & 24.32 {\scriptsize (8.36)} \\*
 & NC1 & 6.23 {\scriptsize (0.61)} & 6.10 {\scriptsize (0.51)} & 5.04 {\scriptsize (0.34)} & 4.05 {\scriptsize (0.11)} & 3.66 {\scriptsize (0.11)} \\
\midrule
\multirow{3}{1.75cm}{\raggedright ScatNet B.A.} & Test acc. & 15.00 {\scriptsize (0.92)} & 17.91 {\scriptsize (1.40)} & \textbf{25.56} {\scriptsize (1.16)} & \textbf{44.23} {\scriptsize (0.64)} & \textbf{51.41} {\scriptsize (0.35)} \\*
 & Gen. Gap & 47.83 {\scriptsize (6.26)} & 27.52 {\scriptsize (7.81)} & 25.76 {\scriptsize (1.12)} & 19.74 {\scriptsize (1.72)} & 16.47 {\scriptsize (0.14)} \\*
 & NC1 & 3.95 {\scriptsize (0.56)} & 3.79 {\scriptsize (0.45)} & 3.02 {\scriptsize (0.28)} & 2.12 {\scriptsize (0.05)} & 1.86 {\scriptsize (0.03)} \\
\end{longtable}
}%

{%
\captionsetup{list=no}%
\setlength{\LTcapwidth}{\textwidth}%
\begin{longtable}{llcccc}
\caption{Test accuracy, generalization gap and normalized within-class variance in classifier subspace (NC1) across architectures for MinC-2500 (Part 2 of 2), by training-set fraction. Each cell is the mean over seeds, with the standard deviation in parentheses, set smaller. Best test accuracy per column in bold.}
\label{tab:bestof_minc2500_b}\\
\toprule
Model & Metric & 20\% & 30\% & 50\% & 100\% \\
\midrule
\endfirsthead
\caption*{\textit{Table \thetable{} (part 2 of 2, continued).}}\\
\toprule
Model & Metric & 20\% & 30\% & 50\% & 100\% \\
\midrule
\endhead
\midrule
\multicolumn{6}{r}{\footnotesize\textit{Continued on next page}}\\
\endfoot
\bottomrule
\endlastfoot
\multirow{3}{1.75cm}{\raggedright Random Init. (L2=0)} & Test acc. & 40.81 {\scriptsize (0.38)} & 44.35 {\scriptsize (0.73)} & 49.86 {\scriptsize (0.99)} & 55.36 {\scriptsize (1.21)} \\*
 & Gen. Gap & 28.29 {\scriptsize (6.62)} & 23.53 {\scriptsize (4.60)} & 13.94 {\scriptsize (7.40)} & 8.13 {\scriptsize (2.77)} \\*
 & NC1 & 2.70 {\scriptsize (0.06)} & 2.56 {\scriptsize (0.11)} & 2.34 {\scriptsize (0.01)} & 2.09 {\scriptsize (0.08)} \\
\midrule
\multirow{3}{1.75cm}{\raggedright Random Init. (L2=0.1)} & Test acc. & 40.65 {\scriptsize (12.06)} & 49.21 {\scriptsize (0.86)} & 50.37 {\scriptsize (0.74)} & 50.69 {\scriptsize (0.20)} \\*
 & Gen. Gap & 1.69 {\scriptsize (2.01)} & -0.03 {\scriptsize (0.94)} & -2.37 {\scriptsize (0.72)} & -4.05 {\scriptsize (0.33)} \\*
 & NC1 & 2.47 {\scriptsize (0.93)} & 1.98 {\scriptsize (0.11)} & 2.19 {\scriptsize (0.08)} & 2.54 {\scriptsize (0.13)} \\
\midrule
\multirow{3}{1.75cm}{\raggedright Random Init. (L2=1)} & Test acc. & 36.22 {\scriptsize (10.30)} & 37.67 {\scriptsize (8.21)} & 32.75 {\scriptsize (8.47)} & 26.43 {\scriptsize (0.63)} \\*
 & Gen. Gap & 0.27 {\scriptsize (2.94)} & -0.58 {\scriptsize (3.62)} & 2.08 {\scriptsize (4.10)} & 6.14 {\scriptsize (1.16)} \\*
 & NC1 & 2.79 {\scriptsize (0.38)} & 2.73 {\scriptsize (0.41)} & 2.92 {\scriptsize (0.33)} & 2.96 {\scriptsize (0.30)} \\
\midrule
\multirow{3}{1.75cm}{\raggedright Scattering} & Test acc. & 35.97 {\scriptsize (0.19)} & 37.53 {\scriptsize (0.38)} & 39.36 {\scriptsize (0.25)} & 40.57 {\scriptsize (0.43)} \\*
 & Gen. Gap & 4.38 {\scriptsize (0.16)} & 2.93 {\scriptsize (0.51)} & 1.06 {\scriptsize (0.76)} & 0.37 {\scriptsize (0.11)} \\*
 & NC1 & 4.23 {\scriptsize (0.10)} & 4.14 {\scriptsize (0.03)} & 4.07 {\scriptsize (0.11)} & 3.98 {\scriptsize (0.06)} \\
\midrule
\multirow{3}{1.75cm}{\raggedright Fine-tuned Scat. (L2=0)} & Test acc. & 45.19 {\scriptsize (0.77)} & 47.91 {\scriptsize (0.64)} & 51.63 {\scriptsize (0.70)} & 57.19 {\scriptsize (0.45)} \\*
 & Gen. Gap & 29.33 {\scriptsize (6.93)} & 33.10 {\scriptsize (5.39)} & 30.60 {\scriptsize (1.89)} & 20.20 {\scriptsize (1.07)} \\*
 & NC1 & 3.08 {\scriptsize (0.03)} & 2.92 {\scriptsize (0.05)} & 2.66 {\scriptsize (0.05)} & 2.32 {\scriptsize (0.01)} \\
\midrule
\multirow{3}{1.75cm}{\raggedright Fine-tuned Scat. (L2=0.1)} & Test acc. & 45.36 {\scriptsize (0.60)} & 47.30 {\scriptsize (0.75)} & 50.09 {\scriptsize (0.60)} & 53.28 {\scriptsize (0.33)} \\*
 & Gen. Gap & 26.54 {\scriptsize (1.14)} & 21.14 {\scriptsize (1.64)} & 12.72 {\scriptsize (0.53)} & 5.21 {\scriptsize (0.40)} \\*
 & NC1 & 2.97 {\scriptsize (0.09)} & 2.76 {\scriptsize (0.08)} & 2.53 {\scriptsize (0.02)} & 2.31 {\scriptsize (0.05)} \\
\midrule
\multirow{3}{1.75cm}{\raggedright Fine-tuned Scat. (L2=1)} & Test acc. & 43.57 {\scriptsize (0.53)} & 45.77 {\scriptsize (0.83)} & 47.39 {\scriptsize (0.96)} & 50.03 {\scriptsize (0.48)} \\*
 & Gen. Gap & 17.66 {\scriptsize (0.64)} & 13.30 {\scriptsize (0.64)} & 8.21 {\scriptsize (1.36)} & 3.02 {\scriptsize (0.92)} \\*
 & NC1 & 3.22 {\scriptsize (0.08)} & 3.02 {\scriptsize (0.13)} & 2.83 {\scriptsize (0.03)} & 2.63 {\scriptsize (0.03)} \\
\midrule
\multirow{3}{1.75cm}{\raggedright ScatNet B.A.} & Test acc. & \textbf{58.17} {\scriptsize (0.43)} & \textbf{61.37} {\scriptsize (0.40)} & \textbf{65.25} {\scriptsize (0.53)} & \textbf{69.61} {\scriptsize (0.68)} \\*
 & Gen. Gap & 12.30 {\scriptsize (1.81)} & 9.14 {\scriptsize (1.08)} & 5.66 {\scriptsize (1.04)} & 3.22 {\scriptsize (1.19)} \\*
 & NC1 & 1.64 {\scriptsize (0.06)} & 1.52 {\scriptsize (0.05)} & 1.43 {\scriptsize (0.02)} & 1.35 {\scriptsize (0.04)} \\
\end{longtable}
}%

{%
\setlength{\LTcapwidth}{\textwidth}%
\begin{longtable}{llcccc}
\caption{Test accuracy, generalization gap and normalized within-class variance in classifier subspace (NC1) across architectures for SuroTex (Part 1 of 2), by training-set fraction. Each cell is the mean over seeds, with the standard deviation in parentheses, set smaller. Best test accuracy per column in bold.}
\label{tab:bestof_surotex_a}\\
\toprule
Model & Metric & 2\% & 5\% & 10\% & 15\% \\
\midrule
\endfirsthead
\caption*{\textit{Table \thetable{} (part 1 of 2, continued).}}\\
\toprule
Model & Metric & 2\% & 5\% & 10\% & 15\% \\
\midrule
\endhead
\midrule
\multicolumn{6}{r}{\footnotesize\textit{Continued on next page}}\\
\endfoot
\bottomrule
\endlastfoot
\multirow{3}{1.75cm}{\raggedright Random Init. (L2=0)} & Test acc. & 49.00 {\scriptsize (0.95)} & 77.93 {\scriptsize (4.59)} & 87.17 {\scriptsize (2.37)} & 92.90 {\scriptsize (1.57)} \\*
 & Gen. Gap & 51.00 {\scriptsize (0.95)} & 20.29 {\scriptsize (4.34)} & 9.72 {\scriptsize (2.51)} & 5.32 {\scriptsize (1.79)} \\*
 & NC1 & 0.48 {\scriptsize (0.06)} & 0.33 {\scriptsize (0.02)} & 0.30 {\scriptsize (0.03)} & 0.32 {\scriptsize (0.02)} \\
\midrule
\multirow{3}{1.75cm}{\raggedright Random Init. (L2=0.1)} & Test acc. & 39.80 {\scriptsize (21.30)} & 77.00 {\scriptsize (2.46)} & 87.43 {\scriptsize (1.85)} & 92.07 {\scriptsize (2.33)} \\*
 & Gen. Gap & 49.53 {\scriptsize (3.31)} & 20.78 {\scriptsize (1.23)} & 10.12 {\scriptsize (2.83)} & 6.23 {\scriptsize (1.86)} \\*
 & NC1 & 0.43 {\scriptsize (0.13)} & 0.33 {\scriptsize (0.03)} & 0.32 {\scriptsize (0.01)} & 0.34 {\scriptsize (0.02)} \\
\midrule
\multirow{3}{1.75cm}{\raggedright Random Init. (L2=1)} & Test acc. & 23.93 {\scriptsize (21.40)} & 75.37 {\scriptsize (2.95)} & 84.57 {\scriptsize (1.20)} & 70.27 {\scriptsize (16.19)} \\*
 & Gen. Gap & 54.07 {\scriptsize (4.02)} & 16.63 {\scriptsize (8.09)} & 10.88 {\scriptsize (1.55)} & 14.99 {\scriptsize (6.81)} \\*
 & NC1 & 0.30 {\scriptsize (0.19)} & 0.30 {\scriptsize (0.02)} & 0.31 {\scriptsize (0.04)} & 0.31 {\scriptsize (0.02)} \\
\midrule
\multirow{3}{1.75cm}{\raggedright Scattering} & Test acc. & 34.73 {\scriptsize (1.31)} & 55.07 {\scriptsize (1.79)} & 74.60 {\scriptsize (2.98)} & 83.17 {\scriptsize (0.42)} \\*
 & Gen. Gap & 37.93 {\scriptsize (6.61)} & 5.82 {\scriptsize (2.70)} & -2.38 {\scriptsize (3.84)} & -3.02 {\scriptsize (1.34)} \\*
 & NC1 & 0.12 {\scriptsize (0.01)} & 0.12 {\scriptsize (0.00)} & 0.12 {\scriptsize (0.01)} & 0.12 {\scriptsize (0.01)} \\
\midrule
\multirow{3}{1.75cm}{\raggedright Fine-tuned Scat. (L2=0)} & Test acc. & \textbf{60.13} {\scriptsize (3.67)} & 82.43 {\scriptsize (3.65)} & 91.83 {\scriptsize (1.16)} & 95.07 {\scriptsize (1.07)} \\*
 & Gen. Gap & 37.20 {\scriptsize (3.73)} & 12.01 {\scriptsize (2.70)} & 4.50 {\scriptsize (1.73)} & 2.19 {\scriptsize (1.38)} \\*
 & NC1 & 0.15 {\scriptsize (0.01)} & 0.16 {\scriptsize (0.01)} & 0.17 {\scriptsize (0.01)} & 0.16 {\scriptsize (0.01)} \\
\midrule
\multirow{3}{1.75cm}{\raggedright Fine-tuned Scat. (L2=0.1)} & Test acc. & 59.60 {\scriptsize (3.63)} & 81.40 {\scriptsize (4.00)} & 91.00 {\scriptsize (1.31)} & 94.03 {\scriptsize (1.01)} \\*
 & Gen. Gap & 38.40 {\scriptsize (5.29)} & 10.82 {\scriptsize (3.79)} & 3.78 {\scriptsize (2.35)} & 2.48 {\scriptsize (1.18)} \\*
 & NC1 & 0.15 {\scriptsize (0.01)} & 0.14 {\scriptsize (0.00)} & 0.13 {\scriptsize (0.01)} & 0.13 {\scriptsize (0.01)} \\
\midrule
\multirow{3}{1.75cm}{\raggedright Fine-tuned Scat. (L2=1)} & Test acc. & 58.90 {\scriptsize (3.89)} & 80.60 {\scriptsize (4.65)} & 89.43 {\scriptsize (1.57)} & 93.47 {\scriptsize (0.87)} \\*
 & Gen. Gap & 39.77 {\scriptsize (6.17)} & 11.40 {\scriptsize (3.57)} & 3.46 {\scriptsize (1.87)} & 1.27 {\scriptsize (0.66)} \\*
 & NC1 & 0.13 {\scriptsize (0.01)} & 0.13 {\scriptsize (0.00)} & 0.12 {\scriptsize (0.01)} & 0.12 {\scriptsize (0.00)} \\
\midrule
\multirow{3}{1.75cm}{\raggedright ScatNet B.A.} & Test acc. & 55.80 {\scriptsize (3.50)} & \textbf{82.73} {\scriptsize (2.75)} & \textbf{93.70} {\scriptsize (1.25)} & \textbf{95.20} {\scriptsize (1.18)} \\*
 & Gen. Gap & 41.53 {\scriptsize (2.45)} & 13.04 {\scriptsize (2.59)} & 4.19 {\scriptsize (0.74)} & 2.65 {\scriptsize (1.39)} \\*
 & NC1 & 0.19 {\scriptsize (0.01)} & 0.16 {\scriptsize (0.01)} & 0.15 {\scriptsize (0.01)} & 0.15 {\scriptsize (0.01)} \\
\end{longtable}
}%

{%
\captionsetup{list=no}%
\setlength{\LTcapwidth}{\textwidth}%
\begin{longtable}{llcccc}
\caption{Test accuracy, generalization gap and normalized within-class variance in classifier subspace (NC1) across architectures for SuroTex (Part 2 of 2), by training-set fraction. Each cell is the mean over seeds, with the standard deviation in parentheses, set smaller. Best test accuracy per column in bold.}
\label{tab:bestof_surotex_b}\\
\toprule
Model & Metric & 20\% & 30\% & 50\% & 100\% \\
\midrule
\endfirsthead
\caption*{\textit{Table \thetable{} (part 2 of 2, continued).}}\\
\toprule
Model & Metric & 20\% & 30\% & 50\% & 100\% \\
\midrule
\endhead
\midrule
\multicolumn{6}{r}{\footnotesize\textit{Continued on next page}}\\
\endfoot
\bottomrule
\endlastfoot
\multirow{3}{1.75cm}{\raggedright Random Init. (L2=0)} & Test acc. & 94.20 {\scriptsize (0.62)} & 96.00 {\scriptsize (1.73)} & 97.23 {\scriptsize (0.96)} & 98.77 {\scriptsize (0.83)} \\*
 & Gen. Gap & 3.74 {\scriptsize (0.92)} & 1.65 {\scriptsize (0.20)} & 0.68 {\scriptsize (1.71)} & -0.54 {\scriptsize (0.31)} \\*
 & NC1 & 0.30 {\scriptsize (0.02)} & 0.28 {\scriptsize (0.03)} & 0.28 {\scriptsize (0.02)} & 0.26 {\scriptsize (0.02)} \\
\midrule
\multirow{3}{1.75cm}{\raggedright Random Init. (L2=0.1)} & Test acc. & 94.37 {\scriptsize (1.60)} & 95.57 {\scriptsize (1.64)} & 97.20 {\scriptsize (1.28)} & 98.10 {\scriptsize (0.70)} \\*
 & Gen. Gap & 3.30 {\scriptsize (2.05)} & 0.54 {\scriptsize (1.33)} & 0.49 {\scriptsize (1.73)} & 0.12 {\scriptsize (0.72)} \\*
 & NC1 & 0.32 {\scriptsize (0.02)} & 0.28 {\scriptsize (0.03)} & 0.31 {\scriptsize (0.04)} & 0.27 {\scriptsize (0.01)} \\
\midrule
\multirow{3}{1.75cm}{\raggedright Random Init. (L2=1)} & Test acc. & 92.17 {\scriptsize (0.93)} & 93.70 {\scriptsize (1.49)} & 95.40 {\scriptsize (1.41)} & 88.23 {\scriptsize (7.26)} \\*
 & Gen. Gap & 3.39 {\scriptsize (1.05)} & 1.67 {\scriptsize (0.96)} & 0.54 {\scriptsize (1.57)} & 1.44 {\scriptsize (2.74)} \\*
 & NC1 & 0.30 {\scriptsize (0.02)} & 0.28 {\scriptsize (0.02)} & 0.27 {\scriptsize (0.02)} & 0.24 {\scriptsize (0.04)} \\
\midrule
\multirow{3}{1.75cm}{\raggedright Scattering} & Test acc. & 86.37 {\scriptsize (1.50)} & 89.67 {\scriptsize (1.10)} & 91.97 {\scriptsize (1.02)} & 94.60 {\scriptsize (0.17)} \\*
 & Gen. Gap & -4.53 {\scriptsize (2.35)} & -5.00 {\scriptsize (2.00)} & -6.40 {\scriptsize (1.30)} & -6.63 {\scriptsize (0.29)} \\*
 & NC1 & 0.12 {\scriptsize (0.01)} & 0.12 {\scriptsize (0.01)} & 0.12 {\scriptsize (0.00)} & 0.12 {\scriptsize (0.00)} \\
\midrule
\multirow{3}{1.75cm}{\raggedright Fine-tuned Scat. (L2=0)} & Test acc. & 95.40 {\scriptsize (0.78)} & 96.60 {\scriptsize (1.06)} & 97.50 {\scriptsize (0.95)} & 98.87 {\scriptsize (0.46)} \\*
 & Gen. Gap & 1.60 {\scriptsize (0.10)} & 0.84 {\scriptsize (1.12)} & 0.04 {\scriptsize (1.67)} & -0.03 {\scriptsize (0.78)} \\*
 & NC1 & 0.16 {\scriptsize (0.01)} & 0.17 {\scriptsize (0.01)} & 0.17 {\scriptsize (0.01)} & 0.15 {\scriptsize (0.01)} \\
\midrule
\multirow{3}{1.75cm}{\raggedright Fine-tuned Scat. (L2=0.1)} & Test acc. & 95.07 {\scriptsize (0.23)} & 97.40 {\scriptsize (1.55)} & 97.63 {\scriptsize (1.48)} & 98.00 {\scriptsize (0.62)} \\*
 & Gen. Gap & 1.66 {\scriptsize (1.03)} & 0.11 {\scriptsize (1.27)} & -0.63 {\scriptsize (0.51)} & -1.84 {\scriptsize (0.83)} \\*
 & NC1 & 0.13 {\scriptsize (0.00)} & 0.14 {\scriptsize (0.01)} & 0.13 {\scriptsize (0.00)} & 0.13 {\scriptsize (0.01)} \\
\midrule
\multirow{3}{1.75cm}{\raggedright Fine-tuned Scat. (L2=1)} & Test acc. & 94.70 {\scriptsize (1.00)} & 96.37 {\scriptsize (0.61)} & 96.33 {\scriptsize (0.49)} & 97.63 {\scriptsize (0.15)} \\*
 & Gen. Gap & -0.64 {\scriptsize (2.15)} & -1.63 {\scriptsize (0.80)} & -1.56 {\scriptsize (2.18)} & -2.27 {\scriptsize (1.22)} \\*
 & NC1 & 0.12 {\scriptsize (0.00)} & 0.13 {\scriptsize (0.01)} & 0.13 {\scriptsize (0.00)} & 0.13 {\scriptsize (0.00)} \\
\midrule
\multirow{3}{1.75cm}{\raggedright ScatNet B.A.} & Test acc. & \textbf{96.73} {\scriptsize (0.35)} & \textbf{97.83} {\scriptsize (0.15)} & \textbf{98.50} {\scriptsize (0.44)} & \textbf{99.47} {\scriptsize (0.31)} \\*
 & Gen. Gap & 0.71 {\scriptsize (0.52)} & 0.31 {\scriptsize (0.36)} & 0.10 {\scriptsize (0.50)} & -0.25 {\scriptsize (0.32)} \\*
 & NC1 & 0.15 {\scriptsize (0.01)} & 0.15 {\scriptsize (0.01)} & 0.15 {\scriptsize (0.00)} & 0.14 {\scriptsize (0.01)} \\
\end{longtable}
}%

\end{document}